\def\1{\bm{1}}
\def\va{{\bm{a}}}
\def\vb{{\bm{b}}}
\def\vf{{\bm{f}}}
\def\vi{{\bm{i}}}
\def\vj{{\bm{j}}}
\def\vv{{\bm{v}}}
\def\vw{{\bm{w}}}
\def\vx{{\bm{x}}}
\def\vy{{\bm{y}}}
\def\vz{{\bm{z}}}
\def\eva{{a}}
\def\evb{{b}}
\def\evv{{v}}
\def\evx{{x}}
\def\mA{{\bm{A}}}
\def\mB{{\bm{B}}}
\def\mC{{\bm{C}}}
\def\mD{{\bm{D}}}
\def\mF{{\bm{F}}}
\def\mG{{\bm{G}}}
\def\mH{{\bm{H}}}
\def\mI{{\bm{I}}}
\def\mL{{\bm{L}}}
\def\mV{{\bm{V}}}
\def\mW{{\bm{W}}}
\DeclareMathAlphabet{\mathsfit}{\encodingdefault}{\sfdefault}{m}{sl}
\SetMathAlphabet{\mathsfit}{bold}{\encodingdefault}{\sfdefault}{bx}{n}
\newcommand{\tens}[1]{\bm{\mathsfit{#1}}}
\def\tF{{\tens{F}}}
\def\tS{{\tens{S}}}
\def\tT{{\tens{T}}}
\def\sF{{\mathbb{F}}}
\def\emA{{A}}
\def\emC{{C}}
\def\emF{{F}}
\def\emG{{G}}
\def\emH{{H}}
\def\emV{{V}}
\def\emW{{W}}
\newcommand{\etens}[1]{\mathsfit{#1}}
\def\etF{{\etens{F}}}
\def\etS{{\etens{S}}}
\newcommand{\B}{\mathbb{B}}
\newcommand{\R}{\mathbb{R}}
\newcommand{\Nb}{\mathbb{N}}
\newcommand{\mop}{\mathop{\mathsf{op}}}
\newcommand{\sem}[2]{\lbrack\!\lbrack#1,#2\rbrack\!\rbrack}
\newcommand{\sems}[1]{\lbrack\!\lbrack#1\rbrack\!\rbrack}
\newcommand{\lbag}{\{\!\{}
\newcommand{\rbag}{\}\!\}}
\newcommand{\smallt}{\scalebox{0.6}{(}t\scalebox{0.6}{)}}
\newcommand{\smalltpo}{\scalebox{0.6}{(}t+1\scalebox{0.6}{)}}
\newcommand{\smalltpk}{\scalebox{0.6}{(}t+k\scalebox{0.6}{)}}
\newcommand{\smallinf}{\scalebox{0.6}{(}\infty\scalebox{0.6}{)}}
\newcommand{\smalltwo}{\scalebox{0.6}{(}2\scalebox{0.6}{)}}
\newcommand{\smallk}[1]{\scalebox{0.6}{(}#1\scalebox{0.6}{)}}
\newcommand{\GNN}{\mathsf{GNN}}
\newcommand{\GNNs}{\mathsf{GNN}\text{s}\xspace}
\newcommand{\GCN}{\mathsf{GCN}}
\newcommand{\GCNs}{\mathsf{GCN}\text{s}}
\newcommand{\SGCs}{\mathsf{SGC}\text{s}}
\newcommand{\GIN}{\mathsf{GIN}}
\newcommand{\GINs}{\mathsf{GIN}\text{s}}
\newcommand{\FGNNk}[1]{#1\text{-}\mathsf{FGNN}}
\newcommand{\FGNNks}[1]{#1\text{-}\mathsf{FGNN}\text{s}}
\newcommand{\IGNk}[1]{#1\text{-}\mathsf{IGN}}
\newcommand{\IGNks}[1]{#1\text{-}\mathsf{IGN}\text{s}}
\newcommand{\GINk}[1]{#1\text{-}\mathsf{GIN}}
\newcommand{\GINks}[1]{#1\text{-}\mathsf{GIN}\text{s}}
\newcommand{\WLk}[1]{#1\text{-}\mathsf{WL}}
\newcommand{\MPNN}{\mathsf{MPNN}}
\newcommand{\MPNNs}{\mathsf{MPNN}\text{s}}
\newcommand{\MLP}{\mathsf{MLP}}
\newcommand{\MLPs}{\mathsf{MLP}\text{s}}
\newcommand{\FOL}{\mathsf{TL}}
\newcommand{\FOLk}[1]{\mathsf{TL}_{#1}}
\newcommand{\GFk}[1]{\mathsf{GTL}}
\newcommand{\CR}{\mathsf{CR}}
\apptocmd{\sloppy}{\hbadness 10000\relax}{}{}
\newtheorem{theorem}{Theorem}[section]
\newtheorem{definition}{Definition}
\newtheorem{proposition}[theorem]{Proposition}
\newtheorem{lemma}[theorem]{Lemma}
\newtheorem{corollary}[theorem]{Corollary}
\title{Expressiveness and Approximation Properties of Graph Neural Networks}
\author{Floris Geerts \\
Department of Computer Science, University of Antwerp, Belgium \\
 \texttt{floris.geerts@uantwerpen.be} \\
 \And
 Juan L. Reutter \\
School of Engineering, Pontificia Universidad Cat{\'o}lica de Chile, Chile \& IMFD, Chile \\
 \texttt{jreutter@ing.puc.cl} \\
 }
\begin{document}
\maketitle

\begin{abstract}
Characterizing the separation power of graph neural networks ($\GNNs$) provides an understanding of their limitations for graph
learning tasks. Results regarding separation power are, however, usually geared at specific $\GNN$ architectures, and tools for understanding arbitrary $\GNN$ architectures are generally lacking. 
We provide an elegant way
to easily obtain bounds on the separation power 
of $\GNNs$ in terms of the Weisfeiler-Leman ($\mathsf{WL}$) tests, which have become the yardstick to measure the separation power of $\GNNs$.
The crux is to view $\GNNs$ as expressions in a procedural tensor language describing the computations in the layers of the $\GNNs$.
Then, by a simple analysis of the obtained expressions, in terms of the number of indexes and the nesting depth of summations, bounds on the separation power in terms of the $\mathsf{WL}$-tests readily follow. We use tensor language to define Higher-Order Message-Passing Neural Networks (or $k$-$\MPNNs$), a natural extension of $\MPNNs$.
Furthermore, the tensor language point of view allows for the derivation of universality results for classes of $\GNNs$ in a natural way. 
Our approach provides a  toolbox with which $\GNN$ architecture designers can
analyze the separation power of their $\GNNs$, without needing to know the intricacies of the $\mathsf{WL}$-tests. We also provide insights in what is needed to boost the separation power of $\GNNs$.
\end{abstract}

\section{Introduction}\label{sec:introduction}
Graph Neural Networks ($\GNNs$) \citep{MerkwirthL05,ScarselliGTHM09} cover many
popular deep learning methods for graph learning
tasks  (see \citet{Hambook} for a recent overview).
These methods typically compute vector embeddings of vertices or graphs by relying on the underlying adjacency information. Invariance (for graph embeddings) and equivariance (for vertex embeddings) of $\GNNs$ ensure that these methods are oblivious to the precise representation of the graphs.\looseness=-1

\paragraph{Separation power.} Our primary focus  is on the separation power of $\GNN$ architectures, i.e., on their ability to separate vertices or graphs by means of the computed embeddings. 
It has become standard to characterize $\GNN$ architectures in terms of the separation power of graph algorithms such as color refinement ($\CR$) and $k$-dimensional Weisfeiler-Leman tests $(\WLk{k})$, as initiated in \citet{XuHLJ19} and \citet{grohewl}. Unfortunately, understanding the separation power of any given $\GNN$ architecture requires complex proofs, geared at the specifics of the architecture. We provide a \textit{tensor language-based technique} to analyze the separation power of general $\GNNs$.

\paragraph{Tensor languages.} 
Matrix query languages \citep{BrijderGBW19,GeertsMRV21} are 
defined to assess 
the expressive power of linear algebra.  \citet{BalcilarHGVAH21} observe that, by casting various $\GNNs$ into the $\mathsf{MATLANG}$ \citep{BrijderGBW19} matrix query language, one can use existing separation results \citep{Geerts21} to obtain upper bounds on the separation power of $\GNNs$ in terms of $\WLk{1}$ and $\WLk{2}$.
In this paper, we considerably extend this approach
by defining, and studying, a new general-purpose tensor language specifically designed for modeling  $\GNNs$.
As in  \citet{BalcilarHGVAH21}, our focus on tensor languages allows us to obtain new insights about $\GNN$ architectures.
First, since tensor languages can only define invariant and equivariant graph functions, any $\GNN$ that can be cast in our tensor language inherits these desired properties.
More importantly, 
the separation power of our tensor language is as closely related to $\CR$ and $\WLk{k}$ as $\GNNs$ are. Loosely speaking, \textit{if tensor language expressions use $k+1$ indices, then their separation power is bounded by $\WLk{k}$. Furthermore, if the maximum nesting of summations in the expression is $t$, then $t$ rounds of $\WLk{k}$ are needed to obtain an upper bound on the separation power.} A similar connection is obtained for $\CR$ and a fragment of tensor language that we call ``guarded'' tensor language. 

We thus reduce problem of assessing the separation power of \textit{any} specific $\GNN$ architecture to the problem of specifying it 
in our tensor language, analyzing the number of indices used and counting their summation depth. 
This is usually much easier than dealing with intricacies of $\CR$ and $\WLk{k}$, as 
casting $\GNNs$ in our tensor language is often as simple as writing down their layer-based definition. 
We believe that this provides a nice toolbox for $\GNN$ designers to assess the separation power of their architecture.
We use this toolbox to recover known results about the separation power of specific $\GNN$ architectures such as 
$\GINs$ \citep{XuHLJ19}, $\GCNs$ \citep{kipf-loose}, Folklore $\GNNs$ \citep{MaronBSL19}, $k$-$\GNNs$ \citep{grohewl},  and 
several others.  
We also derive new results: we answer an open problem posed by \citet{openprob} by showing that the separation power of Invariant Graph Networks ($\IGNks{k})$, introduced by \citet{MaronBSL19}, is bounded by $\WLk{(k-1)}$. In addition, we revisit the analysis by \citet{BalcilarRHGAH21} of $\mathsf{ChebNet}$ \citep{DefferrardBV16}, and show that $\mathsf{CayleyNet}$ \citep{LevieMBB19} is bounded by $\WLk{2}$.

When writing down $\GNNs$ in our tensor language, the less indices needed, the stronger the bounds in terms of $\WLk{k}$ we obtain. After all, $\WLk{(k-1)}$ is known to be strictly less separating than $\WLk{k}$ \citep{O2017}. Thus, it is important to minimize the number of indices used in tensor language expressions. We connect this number to the notion of \textit{treewidth}: expressions of treewidth $k$ can be translated into  expressions using $k+1$ indices. 
This corresponds to optimizing expressions, as done in many areas in machine learning, by  reordering the summations (a.k.a. variable elimination).

\paragraph{Approximation and universality.}
We also consider the ability of $\GNNs$
to approximate general invariant or equivariant graph functions. Once more, instead of focusing on specific architectures, we use our tensor languages  to obtain general approximation results, which naturally translate to universality results for $\GNNs$. 
We show: $(k+1)$-index tensor language expressions suffice to approximate any (invariant/equivariant) graph function whose separating power is bounded by $\WLk{k}$, and we can further refine this by comparing the number of rounds in $\WLk{k}$ with the summation depth of the expressions. These results provide a finer picture than the one obtained by \citet{azizian2020characterizing}. Furthermore, focusing on ``guarded'' tensor expressions yields a similar universality result for $\CR$, a result that, to our knowledge, was not known before. 
We also provide the link between approximation results for tensor expressions and $\GNNs$, enabling us to transfer our insights into universality properties of $\GNNs$. As an example, we show that $\IGNks{k}$ can approximate any graph function that is less separating than $\WLk{(k-1)}$. This case was left open in \citet{azizian2020characterizing}.
\looseness=-1

In summary, 
we draw new and interesting connections between tensor languages, $\GNN$ architectures and classic graph algorithms. We provide a general recipe to bound the separation power of $\GNNs$, optimize them, and understand their approximation power. We show the usefulness of our method by recovering several recent results, as well as new results, some of them left open in previous work.

\paragraph{Related work.}
Separation power has been studied for specific classes of $\GNNs$ \citep{grohewl,XuHLJ19,MaronBSL19,Chen,Morris2020WeisfeilerAL,azizian2020characterizing}. A first general result concerns the bounds in terms of $\CR$ and $\WLk{1}$ of Message-Passing Neural Networks \citep{GilmerSRVD17,grohewl,XuHLJ19}.
\citet{BalcilarHGVAH21} use the $\mathsf{MATLANG}$ matrix query language to obtain upper bounds on the separation power of various $\GNNs$. $\mathsf{MATLANG}$ can only be used to obtain bounds up to $\WLk{2}$ and is  limited to matrices. Our tensor language is more general and flexible and allows for reasoning over the number of indices, treewidth, and summation depth of expressions. These are all crucial for our main results. The tensor language introduced resembles  $\mathsf{sum}$-$\mathsf{MATLANG}$ \citep{GeertsMRV21}, but with the added ability to represent tensors.  Neither separation power nor guarded fragments were considered in \citet{GeertsMRV21}. See Section~\ref{sec:relworkapp} in the supplementary material  for more details.
For universality,  \citet{azizian2020characterizing} is closest in spirit. Our approach provides an elegant way to recover and extend their results. \citet{azizian2020characterizing} describe how their work (and hence also ours) encompasses previous works \citep{Keriven,maron2018invariant,Chen}. Our results use connections between $\WLk{k}$ and logics \citep{Immerman1990,CaiFI92}, and $\mathsf{CR}$ and guarded logics \citep{barcelo2019logical}. The optimization of algebraic computations and the use of treewidth relates to the approaches by  \citet{Aji2000} and \citet{Abo2016}. 

\section{Background}\label{sec:preliminaries}
We denote sets by $\{\}$ and multisets by $\{\!\{\}\!\}$. 
For $n\in\Nb$, $n>0$, $[n]:=\{1,\ldots,n\}$. 
Vectors are denoted by $\vv,\vw,\ldots$, matrices by $\mA,\mB,\ldots$, 
and tensors by $\tS,\tT,\ldots$. 
Furthermore, $\evv_i$ is the $i$-th entry of vector $\vv$,
$\emA_{ij}$ is the $(i,j)$-th entry of matrix $\mA$ and 
$\etS_{\vi}$ denotes the $\vi=(i_1,\ldots,i_k)$-th entry of a tensor $\tS$.
If certain dimensions are unspecified, then this is denoted by a ``$:$''. For example, $\mA_{i:}$ and $\mA_{:j}$ denote the $i$-th row and 
$j$-th column of matrix $\mA$, respectively. Similarly 
for slices of tensors. 

We consider undirected simple graphs $G=(V_G,E_G,\mathsf{col}_G)$ equipped with a vertex-labelling $\mathsf{col}_G:V_G\to\R^\ell$. We assume that graphs have size $n$, so $V_G$ consists of $n$ vertices and we often identify $V_G$ with $[n]$. For a vertex $v\in V_G$, $N_G(v):=\{ u\in V_G\mid vu\in E_G\}$.  We let $\mathcal G$ be the set of all graphs of size $n$ and let $\mathcal G_s$ be the set of pairs $(G,\vv)$ with $G\in\mathcal G$ and $\vv\in V_G^s$. Note that $\mathcal G=\mathcal G_0$. 

The \textit{color refinement} algorithm ($\mathsf{CR})$ \citep{CR}  iteratively computes vertex labellings based on neighboring vertices, as follows. For a graph $G$ and vertex $v\in V_G$, $\mathsf{cr}^{\smallk{0}}(G,v):=\mathsf{col}_G(v)$. Then, for $t\geq 0$,
$\mathsf{cr}^{\smallk{t+1}}(G,v):=\left(\mathsf{cr}^{\smallk{t}}(G,v),
\{\!\{ \mathsf{cr}^{\smallk{t}}(G,u)\mid u\in N_G(v)\}\!\}\right)$. We collect all vertex labels to obtain a label for the entire graph by defining $\mathsf{gcr}^{\smallk{t}}(G):=\{\!\{\mathsf{cr}^{\smallk{t}}(G,v)\mid v\in V_G\}\!\}$.
The \textit{$k$-dimensional Weisfeiler-Leman} algorithm ($\WLk{k}$) \citep{CaiFI92} iteratively computes labellings of \textit{$k$-tuples of vertices}. For a $k$-tuple $\vv$, its \textit{atomic type} in $G$, denoted by  $\mathsf{atp}_k(G,\vv)$, is a vector in $\R^{2
\genfrac(){0pt}{4}{k}{2}+k\ell}$. The first $
\binom{k}{2}$ entries are $0/1$-values
encoding the equality type of $\vv$, i.e., whether $\evv_i=\evv_j$ for $1\leq i<j\leq k$. The second $\binom{k}{2}$ entries are $0/1$-values encoding adjacency information, i.e., whether $\evv_i\evv_j\in E_G$ for $1\leq i<j\leq k$. The last $k\ell$ real-valued entries
correspond to $\mathsf{col}_G(\evv_i)\in\R^\ell$ for
$1\leq i\leq k$.
Initially, for a graph $G$ and $\vv\in V_G^k$, $\WLk{k}$ assigns the label $\mathsf{wl}_k^{\smallk{0}}(G,\vv):=\mathsf{atp}_k(G,\vv)$. For $t\geq 0$, $\WLk{k}$ revises the label according to 
$\mathsf{wl}_k^{\smallk{t+1}}(G,\vv):=(\mathsf{wl}_k^{\smallk{t}}(G,\vv),M)$ with 
$M:=\left\{\!\left\{\left(\mathsf{atp}_{k+1}(G,\vv u),\allowbreak\mathsf{wl}_k^{\smallk{t}}(G,\vv[u/1]),\ldots,\mathsf{wl}_k^{\smallk{t}}(G,\vv[u/k])\right)\mid u\in V_G \right\}\!\right\}$, where $\vv[u/i]:=(\evv_1,\allowbreak\ldots,\evv_{i-1},u,\evv_{i+1},\ldots,\evv_k)$. We use $\WLk{k}$ to assign labels to vertices and graphs by defining:
$\mathsf{vwl}_k^{\smallk{t}}(G,v):=\mathsf{wl}_k^{\smallk{t}}(G,(v,\ldots,v))$, for vertex-labellings, and
$\mathsf{gwl}_k^{\smallk{t}}:=\{\!\{\mathsf{wl}_k^{(t)}(G,\vv)\mid \vv\in V_G^k\}\!\}$, for graph-labellings.
We use $\mathsf{cr}^{\smallk{\infty}}$, $\mathsf{gcr}^{\smallk{\infty}}$, $\mathsf{vwl}_k^{\smallk{\infty}}$, and $\mathsf{gwl}_k^{\smallk{\infty}}$ to denote the stable labellings produced by the corresponding algorithm over an arbitrary number of rounds.
Our version of $\WLk{1}$ differs from $\mathsf{CR}$ in that $\WLk{1}$ also uses information from non-adjacent vertices; this distinction only matters for vertex embeddings \citep{grohe_lics_21}. We use the ``folklore'' $\WLk{k}$ of \citet{CaiFI92}, except  Cai et al. use $\WLk{1}$ to refer to $\mathsf{CR}$.
While equivalent to ``oblivious'' $\WLk{(k+1)}$  \citep{grohe_lics_21}, used in some other works on $\GNNs$, care is needed when comparing to our work. 

Let $G$ be a graph with $V_G=[n]$ and let $\sigma$ be a permutation of $[n]$. We denote by $\sigma\star G$ the isomorphic copy of $G$ obtained by applying the permutation $\sigma$. Similarly, for $\vv\in V_G^k$, $\sigma\star\vv$ is the permuted version of $\vv$. Let $\sF$ be some feature space. A function $f:\mathcal G_0\to\sF$ is called \textit{invariant} if $f(G)=f(\sigma\star G)$ for any permutation $\pi$. More generally, $f:\mathcal G_s\to\sF$ is \textit{equivariant} if $f(\sigma\star G, \sigma\star\vv)=f(G,\vv)$ for any permutation $\sigma$. The functions $\mathsf{cr}^{\smallk{t}}:\mathcal G_1\to\sF$ and $\mathsf{vwl}_k^{\smallk{t}}:\mathcal G_1\to\sF$ are equivariant, whereas $\mathsf{gcr}^{\smallk{t}}:\mathcal G_0\to\sF$ and $\mathsf{gwl}_k^{\smallk{t}}:\mathcal G_0\to\sF$ are invariant, for any $t\geq 0$ and $k\geq 1$.

\section{Specifying GNNs}\label{sec:tensorlang}

Many $\GNNs$ 
use
linear algebra computations on vectors, matrices
or tensors, interleaved with the application of 
 activation functions or $\MLPs$.
To understand the separation power of $\GNNs$, we introduce a specification language, $\FOL$, for tensor language, that allows us to specify any algebraic computation in a procedural way by explicitly stating how each entry is to be computed. We gauge the separation power of $\GNNs$ by specifying them as $\FOL$ expressions, and syntactically analyzing the components of such $\FOL$ expressions. This technique gives rise to \textit{Higher-Order Message-Passing Neural Networks} (or $k$-$\MPNNs$), a natural extension of $\MPNNs$ \citep{GilmerSRVD17}. For simplicity,
we present $\FOL$ using summation aggregation only but
arbitrary aggregation functions on multisets of real values can be used as well (Section~\ref{subsec:aggr} in the  supplementary material).

To introduce $\FOL$, consider a typical layer in a $\GNN$ of the form $\mF'=\sigma(\mA\cdot\mF\cdot\mW)$, where $\mA\in\R^{n\times n}$ is an adjacency matrix, $\mF\in\R^{n\times\ell}$ are vertex features such that $\mF_{i:}\in\R^\ell$ is the feature vector of vertex $i$, $\sigma$ is a non-linear activation function, and $\mW\in\R^{\ell\times\ell}$ is a weight matrix.  By exposing the indices in the matrices and vectors we can equivalently write: for $i\in[n]$ and $s\in[\ell]$:
$$
\emF_{is}':=\sigma\Bigl(\textstyle\sum_{j\in[n]}\emA_{ij}\cdot\bigl(\textstyle\sum_{t\in[\ell]} \emW_{ts}\cdot \emF_{jt}\bigr)\Bigr).
$$
In $\FOL$, we do not work with specific matrices or indices ranging over $[n]$, but focus instead on expressions applicable to any matrix. We use index variables $x_1$ and $x_2$ instead of $i$ and $j$, replace $\emA_{ij}$ with a placeholder $E(x_1,x_2)$ and $\emF_{jt}$ with placeholders $P_t(x_2)$, for $t\in[\ell]$.
We then represent the above computation in $\FOL$ by $\ell$ expressions $\psi_s(x_1)$, one for each feature column, as follows:
$$
\psi_s(x_1)=\sigma\Bigl(\textstyle\sum_{x_2}E(x_1,x_2)\cdot \bigl(\textstyle\sum_{t\in[\ell]} \emW_{ts}\cdot P_t(x_2)\bigr)\Bigr).
$$
These are pure syntactical expressions. To give them a semantics, we assign to $E$ a matrix $\mA\in \R^{n\times n}$, to $P_t$ column vectors $\mF_{:t}\in\R^{n\times 1}$, for $t\in[\ell]$, and to $x_1$ an index $i\in[n]$. By letting the variable $x_2$ under the summation range over $1,2,\ldots,n$, the $\FOL$ expression $\psi_s(i)$ evaluates to $\emF_{is}'$. As such, $\mF'=\sigma(\mA\cdot\mF\cdot\mW)$ can be represented as a specific instance of the above $\FOL$ expressions. Throughout the paper we reason about expressions in $\FOL$ rather than specific instances thereof. Importantly, by showing that certain properties hold for expressions in $\FOL$, these properties are inherited by all of its instances. 
We use $\FOL$ to 
enable a  theoretical analysis of the separating
power of $\GNNs$; It is not intended as a practical programming 
language for $\GNNs$. 

\paragraph{Syntax.} 
We first give the syntax of $\FOL$ expressions. 
We have a binary predicate $E$, to represent adjacency
matrices, and unary vertex predicates $P_s$, $s\in [\ell]$,
to represent column vectors encoding the $\ell$-dimensional vertex labels.
In addition, we have a (possibly infinite) set  $\Omega$
of
functions, such as activation functions or $\MLPs$.
Then,
$\FOL(\Omega)$ expressions are defined by the 
following grammar:
$$
\varphi:= \1_{x \mop y} \,\,|\,\, 
 E(x,y) \,\,|\,\, P_s(x) \,\,|\,\, \varphi\cdot\varphi \,\,|\,\,
\varphi + \varphi \,\,|\,\, 
a\cdot\varphi \,\,|\,\, f(\varphi,\ldots,\varphi) \,\,|\,\,
\textstyle{\sum}_x\,\varphi 
$$
where $\mop\in\{=,\neq\}$, $x,y$ are index variables that specify entries in tensors, 
$s\in [\ell]$, $a\in\R$, and $f\in\Omega$. Summation aggregation is captured by $\sum_{x}\varphi$.\footnote{We can replace $\sum_x\varphi$ by a more general aggregation construct $\mathsf{aggr}_x^F(\varphi)$ for arbitrary functions $F$ that assign a real value to multisets of real values. We refer to the supplementary material (Section~\ref{subsec:aggr}) for details.}
We sometimes make explicit which 
functions 
are used in expressions in $\FOL(\Omega)$ by writing 
 $\FOL(f_1,f_2,\ldots)$ for $f_1,f_2,\ldots$ in $\Omega$. For example, the expressions $\psi_s(x_1)$ described earlier are in $\FOL(\sigma)$.

The set of \textit{free index variables} of an expression $\varphi$, 
denoted by $\mathsf{free}(\varphi)$, determines the order of 
the tensor represented by $\varphi$. 
It is defined inductively: 
$\mathsf{free}(\1_{x\mop y})=\mathsf{free}(E(x,y)):=\{x,y\}$,
$\mathsf{free}(P_s(x))=\{x\}$, $\mathsf{free}(\varphi_1\cdot\varphi_2)=\mathsf{free}(\varphi_1+\varphi_2):=\mathsf{free}(\varphi_1)\cup\mathsf{free}(\varphi_2)$, 
$\mathsf{free}(a\cdot\varphi_1):=\mathsf{free}(\varphi_1)$,
$\mathsf{free}(f(\varphi_1,\ldots,\varphi_p)):=\cup_{i\in[p]} \mathsf{free}(\varphi_i)$, and
$\mathsf{free}(\sum_{x}\varphi_1):=\mathsf{free}(\varphi_1)\setminus\{x\}$.
We sometimes explicitly write the free indices. In our example expressions $\psi_s(x_1)$, 
$x_1$ is the free index variable.

An important class of expressions are those that only use index variables
$\{x_1,\ldots,x_k\}$. We denote by $\FOLk{k}(\Omega)$ the \textit{$k$-index variable  fragment of $\FOL(\Omega)$.} The expressions $\psi_s(x_1)$ are in $\FOLk{2}(\sigma)$.

\paragraph{Semantics.}
We next define the semantics of expressions in $\FOL(\Omega)$. Let $G=(V_G,E_G,\mathsf{col}_G)$ be a vertex-labelled graph. 
 We start by defining the \textit{interpretation}  $\sem{\cdot}{\nu}_G$ of the predicates $E$, $P_s$ and the (dis)equality predicates, relative to $G$ and a \textit{valuation} $\nu$ 
 assigning a vertex to each index variable: 
 $$
\begin{array}{rcl@{\hspace*{3.7mm}}rcl}
\sem{E(x,y)}{\nu}_G&\!\!\!\!:=\!\!\!\!&\mathrm{if~}\nu(x)\nu(y)\in E_G                                  \text{~then~} 1 \text{~else~}0 &
\sem{P_s(x)}{\nu}_G&\!\!\!\! :=\!\!\!\!&\mathsf{col}_G(\nu(x))_s\in\R\\
\sem{\1_{x\mathop{\mathsf{op}} y}}{\nu}_G&\!\!\!\!:=\!\!\!\!&  \mathrm{if~}\nu(x)\mathop{\mathsf{op}}\nu(y)
                                 \text{~then~} 1 \text{~else~}0. 
\end{array}
$$
In other words, $E$ is interpreted as the adjacency matrix of $G$ and the $P_s$'s interpret the vertex-labelling $\mathsf{col}_G$.  
Furthermore, we lift interpretations to arbitrary expressions in $\FOL(\Omega)$, as follows:
$$
\begin{array}{rcl@{\hspace*{3.7mm}}rcl}
\sem{\varphi_1\cdot\varphi_2}{\nu}_G &\!\!\!\! :=\!\!\!\!\! & \sem{\varphi_1}{\nu}_G\cdot       \sem{\varphi_2}{\nu}_G &\!\! \sem{\varphi_1\!+\!\varphi_2}{\nu}_G &\!\!\!\!\!\! :=\!\!\!\!\!\! & \sem{\varphi_1}{\nu}_G+                               \sem{\varphi_2}{\nu}_G\\
\sem{\sum_{x}\,\varphi_1}{\nu}_G &\!\!\!\! :=\!\!\!\!\! &  \sum_{v\in V_G}\sem{\varphi_1}{\nu[x\mapsto v]}_G & 
\sem{a\cdot\varphi_1}{\nu}_G &\!\!\!\!:=\!\!\!\! &a\cdot\sem{\varphi_1}{\nu}_G \\
\!\!\sem{f(\varphi_1,\ldots,\varphi_p)}{\nu}_G&\!\!\!\!:=\!\!\!\!\!\!&f(\sem{\varphi_1}{\nu}_G,\ldots,\sem{\varphi_p}{\nu}_G)             \hspace*{-5.9cm} \end{array}
$$
where, $\nu[x\mapsto v]$ is the valuation $\nu$ but which now maps the index $x$ to the vertex $v\in V_G$. For simplicity, we identify valuations with their images. 
For example, $\sem{\varphi(x)}{v}_G$ denotes $\sem{\varphi(x)}{x\mapsto v}_G$.
To illustrate the semantics, for each $v\in V_G$, our example expressions satisfy $\sem{\psi_s}{v}_G=\emF_{vs}'$ for $\mF'=\sigma(\mA\cdot\mF\cdot\mW)$ when
$\mA$ is the adjacency matrix of $G$ and $\mF$ 
represents the vertex labels.

\paragraph{$k$-MPNNs.}
 Consider a function $f:\mathcal G_s\to\R^\ell:(G,\vv)\mapsto f(G,\vv)\in\R^\ell$ for some $\ell\in\Nb$. We say that the function \textit{$f$ can be represented in $\FOL(\Omega)$} if there exists $\ell$ expressions
$\varphi_1(x_1,\ldots,x_s),\ldots,\varphi_\ell(x_1,\ldots,x_s)$ in $\FOL(\Omega)$ such that for each graph $G$  and each  $s$-tuple $\vv\in V_G^s$: 
$$f(G,\vv)=\bigl(\sem{\varphi_1}{\vv}_G,\ldots,\sem{\varphi_\ell}{\vv}_G\bigr).$$
Of particular interest are \textit{$k$th-order $\MPNNs$} (or $k$-$\MPNNs$) which refers to the class of functions that can be represented in $\FOLk{k+1}(\Omega)$.
We can regard $\GNNs$ as functions $f:\mathcal G_s\to\R^\ell$. Hence,
 a $\GNN$ \textit{is a} $k$-$\MPNN$ if its corresponding functions are $k$-$\MPNNs$.
For example, we can interpret $\mF'=\sigma(\mA\cdot\mF\cdot\mW)$ as a function $f:\mathcal G_1\to\R^\ell$ such that $f(G,v):=\mF_{v:}'$. We have seen that for each $s\in[\ell]$, $\sem{\psi_s}{v}_G=\emF_{vs}'$ with $\psi_s\in\FOLk{2}(\sigma)$. Hence, $f(G,v)=(\sem{\psi_1}{v}_G,\ldots,\sem{\psi_\ell}{v}_G)$ and thus $f$ 
belongs to $1$-$\MPNNs$ and 
our example $\GNN$ is a $1$-$\MPNN$.

\paragraph{TL represents equivariant or invariant functions.}
We make a simple observation which follows from the type of operators allowed in expressions in $\FOL(\Omega)$.

\begin{proposition}\label{prop:TLequiv}\normalfont
Any function $f:\mathcal G_s\to\R^\ell$  represented in $\FOL(\Omega)$ is equivariant (invariant if $s=0$).
\end{proposition}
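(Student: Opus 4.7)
The plan is a straightforward structural induction on $\FOL(\Omega)$ expressions. The key auxiliary claim I would establish is the following semantic commutation property: for every expression $\varphi\in\FOL(\Omega)$, every graph $G$ with $V_G=[n]$, every valuation $\nu$ defined on $\mathsf{free}(\varphi)$, and every permutation $\sigma$ of $[n]$,
\[
\sem{\varphi}{\nu}_G = \sem{\varphi}{\sigma\star\nu}_{\sigma\star G},
\]
where $\sigma\star\nu$ denotes the valuation obtained by composing $\nu$ with $\sigma$. Once this is proved, the proposition follows at once: if $f$ is represented by $\varphi_1,\ldots,\varphi_\ell$, then $f(G,\vv)=(\sem{\varphi_i}{\vv}_G)_{i\in[\ell]}=(\sem{\varphi_i}{\sigma\star\vv}_{\sigma\star G})_{i\in[\ell]}=f(\sigma\star G,\sigma\star\vv)$, and the case $s=0$ specializes to invariance.

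For the base cases, I would unfold the semantics of the atomic constructs. For $E(x,y)$, the claim reduces to the fact that edges are relabelled consistently by $\sigma$, so $\nu(x)\nu(y)\in E_G$ iff $\sigma(\nu(x))\sigma(\nu(y))\in E_{\sigma\star G}$; an analogous argument handles $P_s(x)$, using the fact that $\mathsf{col}_{\sigma\star G}(\sigma(v))=\mathsf{col}_G(v)$ by definition of $\sigma\star G$. For $\1_{x\,\mop\,y}$, the claim follows from $\sigma$ being a bijection, so that $\nu(x)=\nu(y)$ iff $\sigma(\nu(x))=\sigma(\nu(y))$ (and similarly for $\neq$).

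The inductive cases for $\varphi_1\cdot\varphi_2$, $\varphi_1+\varphi_2$, $a\cdot\varphi_1$, and $f(\varphi_1,\ldots,\varphi_p)$ are immediate by the induction hypothesis, since these operators act pointwise on the real-valued interpretations. The only case that requires a small bookkeeping argument, and which I expect to be the main (minor) obstacle, is the summation $\sum_x\varphi_1$. Here one writes
\[
\sem{\textstyle\sum_x\varphi_1}{\nu}_G=\sum_{v\in V_G}\sem{\varphi_1}{\nu[x\mapsto v]}_G,
\]
applies the induction hypothesis to each term with the same $\sigma$, and uses the identity $\sigma\star(\nu[x\mapsto v])=(\sigma\star\nu)[x\mapsto \sigma(v)]$. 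A change of summation variable $v'=\sigma(v)$, justified by the fact that $\sigma$ is a bijection on $V_G$, then yields $\sem{\sum_x\varphi_1}{\sigma\star\nu}_{\sigma\star G}$, closing the induction.
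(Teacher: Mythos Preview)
Your proposal is correct and follows essentially the same approach as the paper: a structural induction on $\FOL(\Omega)$ expressions establishing the commutation identity $\sem{\varphi}{\nu}_G=\sem{\varphi}{\sigma\star\nu}_{\sigma\star G}$, with the base cases handled by unfolding the semantics and the summation case via the change of variable $v\mapsto\sigma(v)$. The paper's proof is identical in structure and detail.
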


An immediate consequence is that when a $\GNN$ is a $k$-$\MPNN$, it is automatically invariant or equivariant, depending on whether graph or vertex tuple embeddings are considered.

\section{Separation Power of Tensor Languages}\label{sec:seppower}

Our first main results concern the characterization of the separation power of tensor languages in terms of the color refinement and $k$-dimensional Weisfeiler-Leman algorithms.
We provide a fine-grained characterization by taking the number of rounds of these algorithms into account. This will allow for measuring the separation power of classes of $\GNNs$ in terms of their number of layers.

\subsection{Separation Power}
We define the separation power of graph functions in terms of an equivalence relation, based on the definition from \citet{azizian2020characterizing}, hereby first focusing on their ability to separate vertices.\footnote{We differ slightly from  \citet{azizian2020characterizing} in that they only define equivalence relations on graphs.}

\begin{definition}\normalfont
Let $\mathcal F$ be a set of functions $f:\mathcal G_1\to\R^{\ell_f}$. The equivalence relation $\rho_1(\mathcal F)$ is defined by $\mathcal F$ on $\mathcal G_1$ as follows: 
$
\bigl((G,v),(H,w)\bigr)\in\rho_1(\mathcal F) \Longleftrightarrow \forall f\in\mathcal F, f(G,v)=f(H,w)$. \qed
\end{definition}
In other words, when $((G,v),(H,w))\in\rho_1(\mathcal F)$, no function in $\mathcal F$ can separate $v$ in $G$ from $w$ in $H$. For example, we can view $\mathsf{cr}^{\smallt}$ and $\mathsf{vwl}_k^{\smallt}$ as functions from $\mathcal G_1$ to some $\R^\ell$. As such $\rho_1(\mathsf{cr}^{\smallt})$
and $\rho_1(\mathsf{vwl}_k^{\smallt})$ measure the separation power of these algorithms. The following strict inclusions are known: for all $k\geq 1$, $\rho_1(\mathsf{vwl}_{k+1}^{\smallt})\subset \rho_1(\mathsf{vwl}_{k}^{\smallt})$ and $\rho_1(\mathsf{vwl}_{1}^{\smallt})\subset \rho_1(\mathsf{cr}^{\smallt})$ \citep{O2017,grohe_lics_21}. It is also known that more rounds ($t$) increase the separation power of these algorithms \citep{Furer}.

For a fragment $\mathcal L$ of $\FOL(\Omega)$ expressions, we define $\rho_1(\mathcal L)$ as the equivalence relation associated with all functions $f:\mathcal G_1\to\R^{\ell_f}$ that can be represented in $\mathcal L$. By definition, we here thus consider expressions in $\FOL(\Omega)$ with one free index variable resulting in vertex embeddings.

\subsection{Main Results}\label{subsec:mainres}
We first provide a link between $\WLk{k}$ and tensor language expressions using $k+1$ index variables:
\begin{theorem}\label{thm:mainsep-norounds}
For each $k \geq 1$ and any collection $\Omega$ of functions,
$\rho_1\bigl(\mathsf{vwl}_{k}^{\smallk{\infty}}\bigr)=\rho_1\bigl(\FOLk{k+1}(\Omega)\bigr)$.
\end{theorem}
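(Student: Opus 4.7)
The plan is to prove the two inclusions separately, using as a bridge the classical Cai-Fürer-Immerman correspondence between $\WLk{k}$ on $k$-tuples and the counting logic $C^{k+1}$. The overall strategy is to argue that $\FOLk{k+1}(\Omega)$ has the same separation power as $C^{k+1}$, independently of the choice of $\Omega$, which together with Cai-Fürer-Immerman yields the stated equality of equivalence relations.

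For the inclusion $\rho_1(\mathsf{vwl}_k^{\smallk{\infty}}) \subseteq \rho_1(\FOLk{k+1}(\Omega))$, the natural approach is structural induction on $\varphi \in \FOLk{k+1}(\Omega)$. The stronger invariant to prove is: for any tuples $\vv \in V_G^p$ and $\ww \in V_H^p$ (with $p \leq k+1$) whose stable $\WLk{k}$ labels agree on every $k$-subtuple, $\sem{\varphi}{\vv}_G = \sem{\varphi}{\ww}_H$. Base cases are immediate since $E$, $P_s$, and the (dis)equality predicates are recoverable from the atomic type encoded in the $\WLk{k}$ label. Arithmetic and function-application cases (including any $f \in \Omega$) follow trivially because these operations respect equality of inputs. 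The summation case $\sum_{x_i}\varphi_1$ is the heart of the argument: stability of $\WLk{k}$ implies that equal labels on two $k$-tuples force the refinement multisets $M$ of both tuples to coincide, which, via a standard marriage argument, yields a bijection between $V_G$ and $V_H$ preserving both the atomic type of the $(k+1)$-vertex extension and the $\WLk{k}$ labels of every $k$-subtuple of that extension. This bijection is exactly what the inductive hypothesis needs to match summands pointwise.

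For the inclusion $\rho_1(\FOLk{k+1}(\Omega)) \subseteq \rho_1(\mathsf{vwl}_k^{\smallk{\infty}})$, it suffices to exhibit separating expressions already in $\FOLk{k+1}(\emptyset)$, since enriching $\Omega$ can only refine the induced equivalence. Given $(G,v), (H,w)$ with differing stable $\WLk{k}$ labels, the plan is to proceed by induction on the first round $t$ at which the labels differ. The base case $t=0$ is handled by atomic predicates together with a choice of coordinate. For the inductive step, the refinement step of $\WLk{k}$ expresses the new label as the multiset of $(k+1)$-tuples (atomic type of the extension, $\WLk{k}$ labels of its $k$-subtuples). By the inductive hypothesis, each label appearing in these multisets is characterizable by a TL expression; via polynomial combinations these can be turned into $\{0,1\}$-valued indicators, and summing their products over a fresh $x_{k+1}$ produces multiplicities that must differ between the two graphs whenever the multisets do.

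The main obstacle is the summation case of the first inclusion: articulating a coherence lemma for stable $\WLk{k}$ strong enough to handle subexpressions whose free variables span all $k+1$ indices, since then the underlying tuple is of length $k+1$, one more than $\WLk{k}$ natively labels. A secondary subtlety in the second inclusion is the construction of $\{0,1\}$-valued indicator expressions using only $\FOL$'s arithmetic primitives, given that $\Omega$ may be empty; standard polynomial interpolation over the bounded range of counts on a fixed finite graph suffices but must be made explicit.
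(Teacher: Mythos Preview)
Your plan is correct and will work, but it takes a genuinely different route from the paper. The paper does not do a direct structural induction on $\FOLk{k+1}(\Omega)$ expressions. Instead, for the inclusion $\rho_1(\mathsf{vwl}_k)\subseteq\rho_1(\FOLk{k+1}(\Omega))$ it translates every $\FOL$ expression $\varphi$ and every target value $c\in\R$ into a formula $\tilde\varphi^c$ of the \emph{infinitary} counting logic $\mathsf{C}^{k+1}_{\infty\omega}$ (so that $\sem{\varphi}{\vv}_G=c$ iff $\sem{\tilde\varphi^c}{\vv}_G^\B=\top$), and then invokes the known fact that $\mathsf{C}^{k+1}_{\infty\omega}$ and $\mathsf{C}^{k+1}$ have identical separation power, together with Cai--F\"urer--Immerman. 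For the reverse inclusion it translates each $\mathsf{C}^{k+1}$ formula into a $\FOLk{k+1}$ expression, handling $\exists^{\ge m}$ by Lagrange interpolation exactly as you outline. What your approach buys is elementariness: you avoid infinitary logic entirely by replaying the bijective-pebble argument inside the induction. What the paper's approach buys is modularity: the translation $\FOL\to\mathsf{C}^{k+1}_{\infty\omega}$ is syntax-directed and tracks summation depth against quantifier rank uniformly, so the round-refined statement (Theorem~\ref{thm:mainsep}) and the guarded fragment (Theorem~\ref{thm:mainsep-guarded}) fall out of the same machinery with no extra work.

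Two small tightenings you will need when you write it out. First, your stated invariant for $(k+1)$-tuples---that all $k$-subtuples have equal stable $\WLk{k}$ label---is not by itself sufficient for the base case $E(x_i,x_j)$ when $k=1$ (two $1$-tuples can match in $\WLk{1}$ colour without constraining the edge between them); you must also require that the $(k+1)$-tuples share the same atomic type, which is exactly the extra datum $\mathsf{atp}_{k+1}$ your bijection already delivers. Second, in the reverse inclusion, your inductive hypothesis as phrased (``first round at which labels differ'') gives separating expressions, but what you actually need and use one line later are $\{0,1\}$-valued \emph{indicators} for each round-$t$ colour; strengthen the induction to build such indicators directly (this is immediate for $t=0$ and propagates using the interpolation trick you already describe).
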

This theorem 
gives us new insights: if we wish to understand how a new $\GNN$ architecture compares against the $\WLk{k}$ algorithms, all we need to do is to show that such an architecture can be represented in $\FOLk{k+1}(\Omega)$, i.e., is a $k$-$\MPNN$, an arguably much easier endeavor. 
As an example of how to use this result, it is well known that triangles can be detected by $\WLk{2}$ but not by $\WLk{1}$. Thus, in order to design $\GNNs$ that can detect triangles, layer definitions in $\FOLk{3}$ rather than $\FOLk{2}$  should be used.

We can do much more, relating the rounds of $\WLk{k}$ to the notion of summation depth of $\FOL(\Omega)$ expressions. 
We also present present similar results for functions computing graph embeddings. 

The \textit{summation depth} $\mathsf{sd}(\varphi)$ of a $\FOL(\Omega)$ expression $\varphi$ measures the nesting depth of the summations $\sum_x$ in the expression.  It is defined inductively:
$\mathsf{sd}(\1_{x\mop y})=\mathsf{sd}(E(x,y))=\mathsf{sd}(P_s(x)):=0$,
$\mathsf{sd}(\varphi_1\cdot\varphi_2)=\mathsf{sd}(\varphi_1+\varphi_2):=\mathsf{max}\{\mathsf{sd}(\varphi_1),\mathsf{sd}(\varphi_2)\}$, 
$\mathsf{sd}(a\cdot\varphi_1):=\mathsf{sd}(\varphi_1)$, 
$\mathsf{sd}(f(\varphi_1,\ldots,\varphi_p)):=\mathsf{max}\{\mathsf{sd}(\varphi_i)|i\in[p]\}$, and $\mathsf{sd}(\sum_{x}\varphi_1):=\mathsf{sd}(\varphi_1)+1$.
For example, expressions $\psi_s(x_1)$ above have summation depth one.
We write $\FOLk{k+1}^{\!\smallt}(\Omega)$ for the class of expressions in $\FOLk{k+1}(\Omega)$ of summation depth at most $t$, and use  $k$-$\MPNN^{\smallt}$ for the corresponding class of $k$-$\MPNNs$.
We can now refine Theorem~\ref{thm:mainsep-norounds}, taking into account the number of rounds used in $\WLk{k}$.
\begin{theorem}\label{thm:mainsep} For all $t\geq 0$, $k\geq 1$ and any collection $\Omega$ of functions, $\rho_1\bigl(\mathsf{vwl}_{k}^{\smallk{t}}\bigr)=\rho_1\bigl(\FOLk{k+1}^{\!\smallk{t}}(\Omega)\bigr)$.
\end{theorem}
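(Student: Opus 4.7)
The plan is to prove both inclusions by induction, aligning rounds of $\WLk{k}$ with the summation depth of $\FOL$ expressions. This refines the analogous correspondence of Theorem~\ref{thm:mainsep-norounds}.

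For the inclusion $\rho_1\bigl(\mathsf{vwl}_k^{\smallk{t}}\bigr) \subseteq \rho_1\bigl(\FOLk{k+1}^{\!\smallk{t}}(\Omega)\bigr)$, I would show that $\FOL$ expressions of summation depth $s$ cannot separate $(k+1)$-tuples whose induced $k$-subtuples are $\mathsf{wl}_k^{\smallk{s}}$-equivalent. The argument is structural induction on $\varphi \in \FOLk{k+1}^{\!\smallk{s}}(\Omega)$. Atomic cases $E(x,y)$, $P_s(x)$, and $\1_{x\mop y}$ are covered by the atomic-type component of the $\WLk{k}$ label. The pointwise operators $+$, $\cdot$, scalar multiplication, and $f(\cdot)$ inherit the invariant from the induction hypothesis. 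The pivotal case is the summation $\sum_x \varphi_1$, which raises the depth from $s$ to $s+1$: the refinement rule of $\mathsf{wl}_k^{\smallk{s+1}}$ supplies a bijection $\pi : V_G \to V_H$ along which $\nu[x \mapsto u]$ and $\nu'[x \mapsto \pi(u)]$ have matching round-$s$ types on the required subtuples, so the induction hypothesis pairs up the summands and the two sums coincide.

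For the inclusion $\rho_1\bigl(\FOLk{k+1}^{\!\smallk{t}}(\Omega)\bigr) \subseteq \rho_1\bigl(\mathsf{vwl}_k^{\smallk{t}}\bigr)$, I would construct, by induction on $s \leq t$, for each achievable $\mathsf{wl}_k^{\smallk{s}}$-label $\tau$ an indicator expression $\chi_\tau^{\smallk{s}}(x_1, \ldots, x_k) \in \FOLk{k+1}^{\!\smallk{s}}(\emptyset)$ with $\sem{\chi_\tau^{\smallk{s}}}{\vv}_G = 1$ exactly when $\mathsf{wl}_k^{\smallk{s}}(G, \vv) = \tau$. Round-$0$ indicators are products of atomic predicates at depth $0$. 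For the step to round $s+1$, the WL refinement rule expresses $\mathsf{wl}_k^{\smallk{s+1}}(G,\vv)$ as (the previous label plus) the multiset over $u \in V_G$ of tuples $\bigl(\mathsf{atp}_{k+1}(G,\vv u), \mathsf{wl}_k^{\smallk{s}}(G,\vv[u/1]), \ldots, \mathsf{wl}_k^{\smallk{s}}(G,\vv[u/k])\bigr)$; since the possible values form a finite set for fixed graph size, the multiset is captured by the integer counts $\sum_{x_{k+1}} \chi_{\tau'}^{*}(x_1, \ldots, x_{k+1})$ obtained by combining the inductively available round-$s$ indicators for each of the $k$ modified subtuples with atomic predicates on the new variable $x_{k+1}$. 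Each such count has summation depth $s+1$. Specializing the resulting indicators to the diagonal valuation $(v, \ldots, v)$ then yields $\FOLk{k+1}^{\!\smallk{t}}(\emptyset)$ expressions with one free variable whose combined values identify $\mathsf{vwl}_k^{\smallk{t}}(G,v)$, forcing $\FOL$-equivalent vertices to be $\WLk{k}$-equivalent.

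The main obstacle is the tight summation-depth bookkeeping in the reverse direction: each additional round of $\WLk{k}$ must cost exactly one more summation, so the refinement step must be realised by a single outer summation over the freshly introduced vertex variable, without nesting. The construction hinges on two finite facts: the label space at each round is finite over graphs of fixed size, and a $k$-tuple's round-$(s+1)$ refinement is determined by round-$s$ data on just the $k$ subtuples obtained by substituting the aggregation variable into each coordinate, which matches exactly the expressive capacity of $\FOL$ expressions with $k+1$ index variables. Indicator functions on finite numerical label ranges can be realised by polynomial identities using only $+$, $\cdot$ and scalar multiplication, so the construction lives in $\FOLk{k+1}^{\!\smallk{t}}(\emptyset)$ and the conclusion transfers uniformly to every choice of $\Omega$.
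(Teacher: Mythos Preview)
Your strategy is sound and takes a genuinely different route from the paper. The paper proves both inclusions by going through finite-variable counting logic: it translates each $\FOLk{k+1}^{\smallk{t}}(\Omega)$ expression into an infinitary $\mathsf{C}^{k+1,\smallk{t}}_{\infty\omega}$ formula that detects each of its possible output values (the infinitary disjunction over real tuples is what uniformly absorbs arbitrary $\Omega$), then invokes the classical identity $\rho_1(\mathsf{vwl}_k^{\smallk{t}})=\rho_1(\mathsf{C}^{k+1,\smallk{t}})$ together with the collapse of infinitary and finitary $\mathsf{C}^{k+1}$ in separation power; for the reverse direction it simulates $\exists^{\geq m}x_i\,\psi$ inside $\FOLk{k+1}^{\smallk{t}}(\emptyset)$ by applying an interpolating polynomial to the count $\sum_{x_i}\hat\psi$. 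Your argument is, in effect, the bijective $(k{+}1)$-pebble game carried out directly for $\FOL$, and your indicator construction re-derives definability of $\mathsf{wl}_k^{\smallk{s}}$-classes at rank $s$ but targeted at $\FOL$ rather than $\mathsf{C}^{k+1}$. Your route is more self-contained and makes the summation-depth/round correspondence transparent; the paper's route is modular and delegates the $\WLk{k}$ combinatorics to existing logic results.

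Two technical points need tightening in your first direction. The invariant ``all induced $k$-subtuples are $\mathsf{wl}_k^{\smallk{s}}$-equivalent'' is too weak for $k=1$, since the $1$-subtuples of a pair carry no adjacency information and the base case $E(x_1,x_2)$ fails; and for general $k$ the summation step requires you to relate the $k$-subtuples of the extended valuation $\nu[x\mapsto u]$ to the substituted $k$-tuples $\vv[u/i]$ that actually appear in the $\WLk{k}$ refinement of the $k$-subtuple with $x$ removed---these agree only up to a fixed coordinate permutation. The clean fix is to augment the invariant with matching $\mathsf{atp}_{k+1}$ and invoke permutation-equivariance of $\mathsf{wl}_k$, or equivalently to phrase the invariant directly as $s$-round equivalence in the bijective $(k{+}1)$-pebble game. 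A minor second point: once vertex labels range over $\R^\ell$ the set of achievable $\mathsf{wl}_k^{\smallk{s}}$-labels is not uniformly finite, so the indicators $\chi_\tau^{\smallk{s}}$ must be built after fixing the particular pair $(G,v),(H,w)$ to be separated; the paper handles the continuous case the same way.
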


\paragraph{Guarded TL and color refinement.}
As noted by \citet{barcelo2019logical}, the separation power of vertex embeddings of simple $\GNNs$, which propagate information only through neighboring vertices, is usually weaker than that of $\WLk{1}$. For these types of architectures,  \citet{barcelo2019logical} provide a relation with the weaker color refinement algorithm, but only in the special case of \emph{first-order} classifiers. 
We can recover and extend this result in our general setting, with a \emph{guarded} version of $\FOL$ which, as we will show, has the same separation power as color refinement. 

The \textit{guarded fragment}
$\GFk{2}(\Omega)$ of $\FOLk{2}(\Omega)$ is inspired by the use of adjacency matrices in simple $\GNNs$. 
In $\GFk{2}(\Omega)$  only
equality predicates $\1_{x_i=x_i}$ (constant $1$) and 
$\1_{x_i\neq x_i}$ (constant $0$) are allowed, addition and 
multiplication require the component expressions to 
have the same (single) free index, and  
summation must occur in a \textit{guarded form}
 $\sum_{x_j}\bigl(E(x_i,x_j)\cdot \varphi(x_j)\bigr)$, for $i,j\in[2]$. Guardedness means that summation only happens over neighbors.
 In $\GFk{2}(\Omega)$, all expressions have a single free
 variable and thus only functions from $\mathcal G_1$ can be represented. Our example expressions $\psi_s(x_1)$ are guarded. 
 The fragment
 $\GFk{2}^{\!\smallt}(\Omega)$ consists of  expressions in 
 $\GFk{2}(\Omega)$ of summation depth at most $t$. We denote by $\MPNNs$ and $\MPNNs^{\smallt}$ the corresponding ``guarded'' classes of $1$-$\MPNNs$.\footnote{For the connection to classical $\MPNNs$ \citep{GilmerSRVD17}, see Section~\ref{sec:kMPNN} in the supplementary material.}
 \begin{theorem}\label{thm:mainsep-guarded} For all $t\geq 0$ and any collection $\Omega$ of functions:
$\rho_1\bigl(\mathsf{cr}^{\smallt}\bigr)=\rho_1\bigl(\GFk{2}^{\!\smallt}(\Omega)\bigr)$.
\end{theorem}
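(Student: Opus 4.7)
The plan is to prove the two inclusions $\rho_1(\mathsf{cr}^{\smallt}) \subseteq \rho_1(\GFk{2}^{\!\smallt}(\Omega))$ (soundness) and $\rho_1(\GFk{2}^{\!\smallt}(\Omega)) \subseteq \rho_1(\mathsf{cr}^{\smallt})$ (completeness) separately, each by a dedicated induction. For soundness, I would proceed by structural induction on $\varphi \in \GFk{2}(\Omega)$, establishing the stronger claim that $\mathsf{cr}^{\smallk{t'}}(G,v) = \mathsf{cr}^{\smallk{t'}}(H,w)$ implies $\sem{\varphi}{v}_G = \sem{\varphi}{w}_H$ for $t' := \mathsf{sd}(\varphi)$. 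The atomic cases are immediate (the $P_s$'s read out exactly the components of $\mathsf{col}_G = \mathsf{cr}^{\smallk{0}}$, and the equality predicates are constants), and closure under $+$, $\cdot$, scalar multiplication, and $f \in \Omega$ follows from the inductive hypothesis together with the max-definition of summation depth. The crux is the guarded summation $\sum_{x_j}(E(x_i,x_j) \cdot \varphi(x_j))$ of depth $t' = \mathsf{sd}(\varphi)+1$: equality of $\mathsf{cr}^{\smallk{t'}}$-labels forces equality of the neighborhood multisets of $\mathsf{cr}^{\smallk{t'-1}}$-colors, and by induction $\sem{\varphi}{u}_G$ depends only on $\mathsf{cr}^{\smallk{t'-1}}(G,u)$, so the two multisets of $\varphi$-values over neighbors coincide and hence so do their sums.

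For completeness, I would induct on $t$, showing that whenever $\mathsf{cr}^{\smallt}(G,v) \neq \mathsf{cr}^{\smallt}(H,w)$ there exists $\varphi \in \GFk{2}^{\!\smallt}(\Omega)$ separating them. The base case takes $\varphi = P_s(x)$ for any coordinate on which the label vectors disagree. For the step from $t$ to $t+1$, if the round-$t$ colors already differ one invokes the inductive hypothesis (since $\GFk{2}^{\!\smallt}(\Omega) \subseteq \GFk{2}^{\!\smalltpo}(\Omega)$); otherwise the neighborhood multisets of round-$t$ colors must differ, so some color $c^*$ has multiplicities $n_G(c^*) \neq n_H(c^*)$ in the two neighborhoods. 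Let $C$ be the finite set of round-$t$ colors appearing in either neighborhood. For each $c \in C \setminus \{c^*\}$, apply the inductive hypothesis to a pair of representative vertices of colors $c^*$ and $c$ to obtain $\varphi_c \in \GFk{2}^{\!\smallt}(\Omega)$ with distinct values $a_c \neq b_c$; the already-proved soundness direction upgrades each $\varphi_c$ to a function that is globally constant on each round-$t$ color class, so the affine combination $\psi_c(x) := \frac{1}{a_c-b_c}\varphi_c(x) + \frac{-b_c}{a_c-b_c}\cdot\1_{x=x}$ evaluates to $1$ on every vertex of color $c^*$ and to $0$ on every vertex of color $c$. The product $\chi_{c^*}(x) := \prod_{c \in C \setminus \{c^*\}} \psi_c(x) \in \GFk{2}^{\!\smallt}(\Omega)$ is then a global $\{0,1\}$-indicator for $c^*$ across $C$, and the guarded summation $\sum_{x_j}(E(x_i,x_j)\cdot\chi_{c^*}(x_j)) \in \GFk{2}^{\!\smalltpo}(\Omega)$ evaluates to $n_G(c^*)$ at $v$ and to $n_H(c^*)$ at $w$, separating them as required.

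The main obstacle is the indicator construction in the completeness step: the inductive hypothesis alone only delivers pairwise separators with uncontrolled numerical outputs, so one must combine it with the soundness direction to turn each separator into a globally constant-per-color-class function and then use scalar arithmetic to normalize it to $0/1$. Beyond this, careful bookkeeping is required to make sure every intermediate expression stays within $\GFk{2}$ — in particular, the $+$ and $\cdot$ components must share the same single free variable (so bound-variable renaming and the use of $\1_{x=x}$ for the constant $1$ are essential) — and that no extra summation depth is accrued, so that $\chi_{c^*}$ lies in $\GFk{2}^{\!\smallt}(\Omega)$ and only the final guarded summation bumps the depth to $t+1$.
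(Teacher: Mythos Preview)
Your proposal is correct and takes a genuinely different route from the paper.

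The paper proves both inclusions indirectly, via the guarded two-variable counting logic $\mathsf{GC}$: it invokes the known characterization $\rho_1(\mathsf{cr}^{\smallk{t}})=\rho_1(\mathsf{GC}^{\smallk{t}})$, translates every $\GFk{2}(\Omega)$ expression into an infinitary $\mathsf{GC}_{\infty\omega}$ formula by a case split over all possible real output values, collapses the infinitary logic to the finitary one using finiteness of the graphs, and for the reverse inclusion translates each $\mathsf{GC}$ formula back into $\GFk{2}$ using Lagrange-style polynomial interpolation to simulate the counting quantifier $\exists^{\geq m}$.

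Your argument bypasses the logical layer entirely. Soundness is a direct structural induction on guarded tensor expressions, and completeness is a direct induction on $t$ that manufactures the separating expression by hand: the inductive hypothesis, combined with the already-established soundness direction, yields pairwise separators that are constant on round-$t$ color classes; you normalize each affinely to $0/1$, multiply to get an exact indicator $\chi_{c^*}$ for the witness color, and wrap it in a single guarded sum to count its multiplicity in the neighborhood.

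Both approaches ultimately exploit that $\GFk{2}$ is closed under scalar affine combinations and same-variable products without increasing summation depth. The paper's detour buys modularity---the same logical scaffolding handles $\FOLk{k+1}$ versus $\WLk{k}$ uniformly for all $k$ and plugs directly into standard finite-model-theory results---whereas your route is more elementary and self-contained, never leaves the tensor-language world, and makes transparent that exactly one guarded summation corresponds to one refinement round.
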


As an application of this theorem, to detect the existence of paths of length $t$, the number of guarded layers in $\GNNs$ should account for a representation in $\GFk{2}(\Omega)$ of summation depth of at least $t$.
We recall that $\rho_1(\mathsf{vwl}_1^{\smallt})\subset \rho_1(\mathsf{cr}^{\smallt})$ which, combined with our previous results,  implies that $\FOLk{2}^{\!\smallt}(\Omega)$ (resp., $1$-$\MPNNs$) is strictly more separating than $\GFk{2}^{\!\smallt}(\Omega)$ (resp., $\MPNNs$).

\paragraph{Graph embeddings.}
We next establish connections between the graph versions of $\WLk{k}$ and $\mathsf{CR}$, and $\FOL$ expressions without free index variables. To this aim, we use $\rho_0(\mathcal F)$, for a set $\mathcal F$ of functions $f:\mathcal G\to\R^{\ell_f}$, as  the equivalence relation over $\mathcal G$ defined in analogy to $\rho_1$: $(G,H)\in\rho_0(\mathcal F) \Longleftrightarrow\forall f\in\mathcal F, f(G)=f(H)$. We thus consider separation power on the graph level. For example, we can consider $\rho_0(\mathsf{gcr}^{\smallt})$ and $\rho_0(\mathsf{gwl}_k^{\smallt})$ for any $t\geq 0$ and $k\geq 1$. Also here, $\rho_0(\mathsf{gwl}_{k+1}^{\smallt})\subset
\rho_0(\mathsf{gwl}_{k}^{\smallt})$ but different from vertex embeddings, $\rho_0(\mathsf{gcr}^{\smallt})=\rho_0(\mathsf{gwl}_1^{\smallt})$ \citep{grohe_lics_21}. We define $\rho_0(\mathcal L)$ for a fragment $\mathcal L$ of $\FOL(\Omega)$ by considering expressions without free index variables. 

The connection between the number of index variables in expressions and $\WLk{k}$ remains to hold. Apart from $k=1$, no clean relationship exists between summation depth and rounds, however.\footnote{Indeed, the best one can obtain for general tensor logic expressions is $\rho_0\bigl(\FOLk{k+1}^{\smalltpk}(\Omega)\bigr)\subseteq \rho_0\bigl(\mathsf{gwl}_k^{\smallt}\bigr)
 \subseteq \rho_0\bigl(\FOLk{k+1}^{\smalltpo}(\Omega)\bigr)$. This follows from \cite{CaiFI92} and connections to finite variable logics.}

 \begin{theorem}\label{thm:mainsep-graph} For all $t\geq 0$, $k\geq 1$ and any collection $\Omega$ of functions, we have that:
 $$\text{(1)}\quad \rho_0\bigl(\mathsf{gcr}^{\smallt}\bigr)=\rho_0\bigl(\FOLk{2}^{\!\smalltpo}(\Omega)\bigr)=\rho_0\bigl(\mathsf{gwl}_1^{\smallt}\bigr)\quad\quad\text{(2)}\quad
   \rho_0\bigl(\mathsf{gwl}_k^{\smallinf}\bigr)=\rho_0\bigl(\FOLk{k+1}(\Omega)\bigr).$$
\end{theorem}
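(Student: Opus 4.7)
The plan is to split the proof along the three claimed equalities. For part~(1), the first equality $\rho_0(\mathsf{gcr}^{\smallt})=\rho_0(\mathsf{gwl}_1^{\smallt})$ is the graph-level collapse of $\mathsf{CR}$ and $\WLk{1}$ from \citet{grohe_lics_21}, which I would simply cite. This reduces part~(1) to proving $\rho_0(\mathsf{gwl}_1^{\smallt})=\rho_0(\FOLk{2}^{\!\smalltpo}(\Omega))$, and part~(2) then follows by the same template after replacing Theorem~\ref{thm:mainsep} by a tuple-level analogue of Theorem~\ref{thm:mainsep-norounds}.

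For the ``$\subseteq$'' direction I would run a structural induction on $\varphi$ carrying the stronger invariant that, for $\varphi(x_1,\ldots,x_s)\in\FOLk{k+1}(\Omega)$ with $s$ free variables, $\sem{\varphi}{\vv}_G$ depends only on the WL color of $\vv$ (namely $\mathsf{vwl}_1^{\smallt}(G,\evv_1)$ in part~(1) with $s=1$, and $\mathsf{wl}_k^{\smallinf}(G,\vv)$ in part~(2), extending $\mathsf{wl}_k$ to $s$-tuples for $s<k$ in the standard way). Atomic predicates are determined by $\mathsf{atp}_{k+1}$, arithmetic and function applications preserve the invariant componentwise, and for a summation $\sum_x\varphi_1$ the inductive hypothesis together with one $\WLk{k}$ refinement step imply that the multiset $\{\!\{\text{color of }\vv[x/v]:v\in V_G\}\!\}$, and therefore the sum $\sum_v\sem{\varphi_1}{(v,\vv)}_G$, is already encoded in the color of $\vv$. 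Specializing to $s=0$ yields the conclusion.

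For the ``$\supseteq$'' direction I would contrapose: when the WL multisets differ, construct a distinguishing closed expression by polynomial interpolation. Only finitely many WL color classes appear across $G$ and $H$, and Theorem~\ref{thm:mainsep} (resp.\ its tuple analogue) supplies, for every pair of distinct classes, an open expression separating them. Aggregating finitely many of these into a vector $(\psi_1,\ldots,\psi_M)$ sends each color class to a distinct point of $\R^M$; Lagrange interpolation in $\R[y_1,\ldots,y_M]$, expressible using only the native $+$, $\cdot$, and scalar constructs of $\FOL$, yields a polynomial $p_C$ equal to $1$ at the point of color $C$ and $0$ at the points of the other colors. The indicator $\varphi_C:=p_C(\psi_1,\ldots,\psi_M)$ lies in the same fragment (polynomial composition adds no new indices), and summing it over its remaining free variable(s) produces a closed expression in $\FOLk{2}^{\!\smalltpo}(\Omega)$ (resp.\ $\FOLk{k+1}(\Omega)$) that counts tuples of color $C$, hence distinguishes $G$ from $H$ whenever the multiplicity of $C$ differs.

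The main obstacle is the tuple-level analogue of Theorem~\ref{thm:mainsep-norounds} invoked throughout part~(2), not explicitly stated in the excerpt: for $\varphi(x_1,\ldots,x_s)\in\FOLk{k+1}(\Omega)$ with $s\le k$ free variables, the value $\sem{\varphi}{\vv}_G$ is determined by $\mathsf{wl}_k^{\smallinf}(G,\vv)$. Its proof is a direct generalization of the vertex-level argument: atomic predicates are read off from $\mathsf{atp}_{k+1}$, and because at most $k{+}1$ indices are ever simultaneously in scope, each summation step simulates exactly one $\WLk{k}$ refinement step $\vv\mapsto\vv[u/i]$. The off-by-one between the number of $\WLk{k}$ rounds and the summation depth of the resulting closed expression, flagged in the theorem's footnote for $k\ge 2$, is precisely why part~(2) is stated only in the limit $t=\infty$, whereas part~(1) admits the finite-$t$ refinement through Theorem~\ref{thm:mainsep}.
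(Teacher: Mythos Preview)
Your route is genuinely different from the paper's. The paper never argues directly about $\WLk{k}$-invariance of tensor values; instead it passes through finite-variable counting logic. It shows that every $\FOLk{k+1}$ expression can be simulated, value by value, by a formula in the infinitary logic $\mathsf{C}^{k+1}_{\infty\omega}$ of the same quantifier rank, and conversely that every $\mathsf{C}^{k+1}$ formula is simulated by a $\FOLk{k+1}$ expression (using polynomial interpolation to encode $\exists^{\geq m}$). It then invokes the classical Cai--F\"urer--Immerman correspondence between $\WLk{k}$ and $\mathsf{C}^{k+1}$, and all of Theorems~\ref{thm:mainsep-norounds}--\ref{thm:mainsep-graph} drop out of this single pair of translations. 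Your ``$\supseteq$'' direction---bootstrapping from the already-established Theorem~\ref{thm:mainsep} by packaging finitely many one-free-variable separators into a vector, Lagrange-interpolating a colour indicator, and summing out the remaining variable to obtain a depth-$(t{+}1)$ sentence---is a correct and pleasant shortcut that the paper does not take.

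Your ``$\subseteq$'' direction, however, has a real gap. The structural-induction invariant is only stated for subexpressions with $s\le k$ free variables, yet the induction must pass through subexpressions with all $k{+}1$ indices free: already for $k=1$, the body of $\sum_{x_2}E(x_1,x_2)\cdot\psi(x_2)$ is a two-free-variable expression, and $\mathsf{wl}_1$ does not colour pairs, so your inductive hypothesis does not apply to it. The remark that ``atomic predicates are determined by $\mathsf{atp}_{k+1}$'' hints at the fix, but you then need a separate invariant for $(k{+}1)$-tuples---for instance, that the value is determined by $\mathsf{atp}_{k+1}(G,\vv)$ together with the $\WLk{k}$ colours of all $k$-subtuples of $\vv$---and must verify that summing out one coordinate reproduces exactly the multiset appearing in the $\WLk{k}$ refinement rule. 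This is precisely the nontrivial core of the Cai--F\"urer--Immerman argument, which the paper imports from the logic literature rather than reproving. A cleaner fix for part~(1) is to reuse Theorem~\ref{thm:mainsep} on this side as well: every closed expression in $\FOLk{2}^{(t+1)}(\Omega)$ is an arithmetic/function combination of terms $\sum_{x_i}\psi(x_i)$ with $\psi$ of depth at most $t$ and a single free variable, so by Theorem~\ref{thm:mainsep} each such term depends only on the multiset $\mathsf{gwl}_1^{(t)}(G)$. Part~(2) still requires the $k$-tuple analogue $\rho_k(\mathsf{wl}_k)=\rho_k(\FOLk{k+1})$, which your sketch does not establish without handling the missing $(k{+}1)$-free-variable case.
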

Intuitively, in (1)~the increase in summation depth by one is incurred by the additional aggregation needed to collect all vertex labels computed by $\mathsf{gwl}_1^{\smallk{t}}$.

\paragraph{Optimality of number of indices.}
Our results so far tell that graph functions represented in $\FOLk{k+1}(\Omega)$ are at most as separating as $\WLk{k}$. 
What is left unaddressed is whether all $k+1$ index variables are needed for the graph functions under consideration. It may well be, for example, that there exists an equivalent expression using less index variables. This would imply a stronger upper bound on the separation power by $\WLk{\ell}$ for $\ell<k$.
We next identify a large class of $\FOL(\Omega)$ expressions, those of \textit{treewidth} $k$, for which the number of index variables can be   reduced to $k+1$. 
\begin{proposition}\label{prop:tw}
Expressions in $\FOL(\Omega)$ of treewidth $k$ are equivalent to expressions in $\FOL_{k+1}(\Omega)$.
\end{proposition}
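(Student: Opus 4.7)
The plan is to adopt the natural definition of treewidth of a $\FOL(\Omega)$ expression $\varphi$ via its Gaifman graph $G_\varphi$: the vertices are the index variables (free or bound) appearing in $\varphi$, with an edge between $x$ and $y$ whenever they co-occur in some atomic sub-expression of the form $\1_{x\mop y}$, $E(x,y)$, or (trivially for the singletons) $P_s(x)$. The treewidth of $\varphi$ is the treewidth of $G_\varphi$. Fix a tree decomposition $(T,\{B_t\}_t)$ of $G_\varphi$ of width $k$, so every bag $B_t$ has at most $k+1$ variables and the standard containment and connectivity axioms hold.

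With this definition in place I would prove the statement by structural induction on $\varphi$, reading the proof as a bucket-elimination procedure that reorganizes the sum-product structure of $\varphi$ according to $T$. The base cases $\1_{x\mop y}$, $E(x,y)$, $P_s(x)$ use at most two variables and are trivially in $\FOLk{k+1}(\Omega)$ for $k\geq 1$. For $\varphi_1+\varphi_2$, $a\cdot\varphi_1$, and $f(\varphi_1,\ldots,\varphi_p)$ the free variables are unions of those of the children and the claim follows inductively, since the sub-decompositions induced on each child still have width at most $k$. The interesting cases are $\varphi_1\cdot\varphi_2$ and $\sum_x\psi$. Rooting $T$ and choosing an elimination order on bound variables consistent with a post-order traversal, I would use commutativity and associativity of $+$ and $\cdot$ to push each summation $\sum_x$ inward so that it encloses exactly the factors mentioning $x$; after summing out $x$, the resulting partial value depends only on the other variables in $x$'s bag, which number at most $k$. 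Iterating this, every intermediate sub-expression has at most $k+1$ active index variables. A systematic alpha-renaming then recycles names from $\{x_1,\ldots,x_{k+1}\}$: once a variable is summed out its name leaves scope and can be reused. Since alpha-renaming and the algebraic reshuffling are semantics-preserving, the final expression is equivalent to $\varphi$ and lies in $\FOLk{k+1}(\Omega)$.

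The main obstacle is the presence of arbitrary $\Omega$-functions: a sum does not distribute over $f\in\Omega$, and factors inside a product cannot be pulled across an $f$-application. The remedy is to regard each occurrence $f(\varphi_1,\ldots,\varphi_p)$ as an opaque operator that preserves its arguments' free variables but forbids reorganization across its boundary. The Gaifman graph and tree decomposition respect this boundary because no atomic predicate spans two distinct $f$-arguments; restricting $(T,\{B_t\})$ to the variables of each $\varphi_i$ yields a valid tree decomposition of width at most $k$, and the induction goes through inside each $\varphi_i$ separately before the outer $f$ is reapplied. A subtle bookkeeping issue is that the alpha-renaming performed inside one $\varphi_i$ must stay compatible with the naming used in the surrounding context for the free variables common to several sub-expressions; this is handled by fixing the names of the (at most $k+1$) free variables of the surrounding expression and only renaming bound variables, which is always possible by the standard variable-reuse argument for $k$-variable logics.
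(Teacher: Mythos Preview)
Your approach has a genuine gap in the treatment of function applications, and it stems from your definition of treewidth. You take the Gaifman graph of $\varphi$ to have edges only between index variables that co-occur in an \emph{atomic} sub-expression ($E(x,y)$, $\1_{x\mop y}$, $P_s(x)$). Under this definition the proposition is false. Consider
\[
\varphi(x_1)=\sum_{x_2}\sum_{x_3}\sum_{x_4} f\bigl(E(x_1,x_2),E(x_1,x_3),E(x_1,x_4)\bigr).
\]
Your Gaifman graph is the star $K_{1,3}$, of treewidth $1$, so your argument would place $\varphi$ in $\FOLk{2}(\Omega)$. But since $f\in\Omega$ is arbitrary, none of the three summations can be pushed inside $f$, and no rewriting with two index variables exists: at the point where $f$ is applied, the values of all of $x_1,x_2,x_3,x_4$ must be simultaneously fixed. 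Your ``remedy'' paragraph does not address this; inductively rewriting each argument $\varphi_i$ into $\FOLk{k+1}(\Omega)$ says nothing about the number of variables needed to form $f(\varphi_1,\ldots,\varphi_p)$, which is governed by $\bigl|\bigcup_i\mathsf{free}(\varphi_i)\bigr|$, a quantity your Gaifman graph never constrains. The parenthetical ``the (at most $k+1$) free variables of the surrounding expression'' is precisely the unsupported step.

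The paper avoids this by defining treewidth differently in the presence of functions: each top-level occurrence $f(\varphi_1,\ldots,\varphi_p)$ is replaced by a fresh predicate $R_f$ whose index set is $\bigcup_i\mathsf{free}(\varphi_i)$, and treewidth is taken to be the maximum of the treewidth of this ``function-free'' outer expression and, recursively, the treewidths of the $\varphi_i$. In the example above this yields a single $4$-ary hyperedge and hence treewidth $3$, correctly forcing $\FOLk{4}(\Omega)$. For the function-free part the paper also proceeds somewhat differently from your structural induction: it first expands any $\FOL$ expression into a linear combination of \emph{conjunctive} expressions $\sum_{\vy}\psi(\vx,\vy)$ with $\psi$ a product of atoms, and then runs variable elimination on each conjunct's hypergraph (Proposition~\ref{prop:elseq}). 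Your bucket-elimination idea is morally the same for that part; the essential divergence, and the actual error, is the handling of $\Omega$-functions in the definition of treewidth.
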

 Treewidth is defined in the supplementary material (Section~\ref{sec:treew})
 and a treewidth of $k$ implies that the computation of  tensor language expressions can be decomposed, by reordering summations, such that each local computation requires at most $k+1$ indices (see also \citet{Aji2000}). 
As a simple example, consider $\theta(x_1)=\textstyle\sum_{x_2}\textstyle\sum_{x_3} E(x_1,x_2)\cdot E(x_2,x_3)$ in $\FOLk{3}^{\!\smallk{2}}$ such that $\sem{\theta}{v}_G$ counts the number of paths of length two starting from $v$. This expression has a treewidth of one. And indeed, it is equivalent to the expression 
$\tilde{\theta}(x_1)=\sum_{x_2} E(x_1,x_2)\cdot \bigl(\sum_{x_1} E(x_2,x_1)\bigr)$ in $\FOLk{2}^{\!\smallk{2}}$ (and in fact in $\GFk{2}^{\!\smallk{2}}$). As a consequence, no more vertices can be separated by $\theta(x_1)$ than by $\mathsf{cr}^{\smalltwo}$, rather than $\mathsf{vwl}_2^{\small{2}}$ as the original expression in $\FOLk{3}^{\!\smallk{2}}$ suggests.

\paragraph{On the impact of functions.}

All separation results for $\FOL(\Omega)$ and fragments thereof hold irregardless of the chosen functions in $\Omega$, including when no functions are present at all. Function applications hence do not add expressive power. While this may seem counter-intuitive, it is due to the presence of summation and multiplication in $\FOL$ that are enough to separate graphs or vertices.

\section{Consequences for GNNs} \label{sec:consecuences}
We next interpret the general results on the  separation power from Section \ref{sec:seppower} in the context of $\GNNs$.

\noindent
\textbf{\textit{ 1. The separation power of any vertex embedding $\GNN$ architecture which is an $\MPNN^{\smallt}$
is bounded by the power of $t$ rounds of color refinement.}}

We consider the Graph Isomorphism Networks ($\GINs)$  \citep{XuHLJ19} and show that these are $\MPNNs$. To do so, we represent them in $\GFk{2}(\Omega)$.
Let  $\mathsf{gin}$ be such a network; it updates vertex embeddings as follows. Initially,
$\mathsf{gin}^{(0)}:\mathcal G_1\to\R^{\ell_0}:(G,v)\mapsto \mF^{\smallk{0}}_{v:}:=\mathsf{col}_G(v)\in\R^{\ell_0}$.
For layer $t>0$, $\mathsf{gin}^{(t)}:\mathcal G_1\to\R^{\ell_t}$ is given by: 
$
(G,v)\mapsto \mF^{\smallk{t}}_{v:}:=\mathsf{mlp}^{\smallk{t}}\bigl(\mF^{\smallk{t-1}}_{v:},\textstyle\sum_{u\in N_G(v)}\mF^{\smallk{t-1}}_{u:}\bigr)$,
with $\mF^{\smallk{t}}\in\R^{n\times \ell_t}$ and $\mathsf{mlp}^{\smallk{t}}=(\mathsf{mlp}^{\smallk{t}}_1,\ldots,\mathsf{mlp}_{\ell_t}^{\smallk{t}}):\R^{2\ell_{t-1}}\to\R^{\ell_t}$ is an $\MLP$. We denote by $\GIN^{\smallk{t}}$ the class of $\GINs$ consisting $t$ layers. Clearly, $\mathsf{gin}^{\smallk{0}}$ can be represented in $\GFk{2}^{\!\smallk{0}}$ by considering the expressions  $\varphi_i^{\smallk{0}}(x_1):=P_i(x_1)$ for each $i\in [\ell_0]$. To represent $\mathsf{gin}^{\smallk{t}}$, assume that we have $\ell_{t-1}$ expressions $\varphi_i^{\smallk{t-1}}(x_1)$ in $\GFk{2}^{\!\smallk{t-1}}(\Omega)$ representing $\mathsf{gin}^{\smallk{t-1}}$.
That is, we have $\sem{\varphi_i^{\smallk{t-1}}}{v}_G=\emF^{\smallk{t-1}}_{vi}$ for each vertex $v$ and $i\in [\ell_{t-1}]$. Then $\mathsf{gin}^{\smallk{t}}$ is represented by $\ell_t$ expressions $\varphi_i^{\smallk{t}}(x_1)$ defined as:
$$
\mathsf{mlp}^{\smallk{t}}_i\Bigl(
\varphi_1^{\smallk{t-1}}(x_1),\ldots,\varphi_{\ell_{t-1}}^{\smallk{t-1}}(x_1),\textstyle\sum_{x_2} E(x_1,x_2)\cdot\varphi_1^{\smallk{t-1}}(x_2),\ldots,
\textstyle\sum_{x_2} E(x_1,x_2)\cdot\varphi_{\ell_{t-1}}^{\smallk{t-1}}(x_2)
\Bigr),
$$
which are now expressions in  $\GFk{2}^{\!\smallk{t}}(\Omega)$ where $\Omega$ consists of $\MLPs$. We have $\sem{\varphi_i^{\smallk{t}}}{v}_G=\emF^{\smallk{t}}_{v,i}$ for each $v\in V_G$ and $i\in[\ell_t]$, as desired.
Hence, Theorem~\ref{thm:mainsep-guarded} tells that $t$-layered $\GINs$ cannot be more separating than $t$ rounds of color refinement, in accordance with known results \citep{XuHLJ19,grohewl}. We thus simply  cast $\GINs$  in $\GFk{2}(\Omega)$ to obtain an upper bound on their separation power. In the supplementary material (Section~\ref{sec:appconseqgnn}) we give similar analyses for GraphSage $\GNNs$ with various aggregation functions \citep{hyl17},  $\mathsf{GCN}\text{s}$ \citep{kipf-loose}, simplified $\GCNs$ ($\SGCs$) \citep{WuSZFYW19}, Principled Neighborbood Aggregation ($\mathsf{PNA}$s) \citep{CorsoCBLV20}, and  revisit the analysis of $\mathsf{ChebNet}$ \citep{DefferrardBV16} given in \citet{BalcilarHGVAH21}.

\noindent
\textbf{\textit{ 2. The separation power of any vertex embedding $\GNN$  architecture 
which is an $k$-$\MPNN^{\smallt}$ is bounded by the power of $t$ rounds of $\WLk{k}$.}}

For $k=1$, we consider extended Graph Isomorphism Networks ($\mathsf{e}\GINs$) \citep{barcelo2019logical}. For an $\mathsf{egin}\in \mathsf{e}\GIN$, $\mathsf{egin}^{\smallk{0}}:\mathcal G_1\to\R^{\ell_0}$ is defined as for $\GINs$, but for layer $t>0$,
$\mathsf{egin}^{\smallk{t}}:\mathcal G_1\to\R^{\ell_t}$
is defined by
$
(G,v)\mapsto \mF^{\smallk{t}}_{v:}:=\mathsf{mlp}^{\smallk{t}}\bigl(\mF^{\smallk{t-1}}_{v:},\textstyle\sum_{u\in N_G(v)}\mF^{\smallk{t-1}}_{u:},\textstyle\sum_{u\in V_G}\mF^{\smallk{t-1}}_{u:}\bigr)$,
where $\mathsf{mlp}^{\smallk{t}}$ is now an $\MLP$ from $\R^{3\ell_{t-1}}\to\R^{\ell_t}$. The difference with $\GINs$ is the use of
$\sum_{u\in V_G}\mF^{\smallk{t-1}}_{u:}$ which corresponds to the \textit{unguarded} summation $\sum_{x_1}\varphi^{\smallk{t-1}}(x_1)$. This implies that $\FOL$ rather than $\GFk{2}$ needs to be used.
In a similar way as for $\GINs$, we can represent $\mathsf{eGIN}$ layers in  $\FOLk{2}^{\!\smallk{t}}(\Omega)$. That is, each $\mathsf{e}\GIN^{\smallt}$  is an $1$-$\MPNN^{\smallt}$.
Theorem~\ref{thm:mainsep} tells that $t$ rounds of $\WLk{1}$ bound the separation power of $t$-layered extended $\GINs$, conform to \citet{barcelo2019logical}.
More generally, any $\GNN$ looking to go beyond $\mathsf{CR}$ must use non-guarded aggregations.

For $k\geq 2$, it is straightforward to show that  $t$-layered ``folklore'' $\GNNs$ ($\FGNNks{k}$) \citep{MaronBSL19} are $k$-$\MPNN^{\smallt}$ and thus, by Theorem~\ref{thm:mainsep}, $t$ rounds of $\WLk{k}$ bound their separation power. 
One merely needs to cast the layer definitions in $\FOL(\Omega)$ and observe that $k+1$ indices and summation depth $t$ are needed.
We thus refine and recover the $\WLk{k}$ bound for  $\FGNNks{k}$ by \citet{azizian2020characterizing}. We also show that the separation power of $(k+1)$-Invariant Graph Networks ($\IGNks{(k+1)})$ \citep{MaronBSL19} are  bounded by $\WLk{k}$, albeit with an increase in the required rounds. 

\begin{theorem}\label{thm:mainign}
For any $k\geq 1$, the separation power of a $t$-layered $\IGNks{(k+1)}$ is bounded by the separation power of $tk$ rounds of $\WLk{k}$.
\end{theorem}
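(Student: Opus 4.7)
The plan is to embed the computation of a $t$-layered $\IGNk{(k+1)}$ into $\FOLk{k+1}(\Omega)$ with summation depth at most $tk$ (up to a constant absorbed by the final pooling), and then invoke the graph-level inclusion from the footnote following Theorem~\ref{thm:mainsep-graph} to transfer the bound to $\WLk{k}$. The argument mirrors the casting of $\GINs$ into $\GFk{2}(\Omega)$ from Section~\ref{sec:consecuences}, but must accommodate the richer equivariant-linear-layer basis of $\IGNks{(k+1)}$ and tensors of order $k{+}1$ instead of~$2$.

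I would start by invoking the classification of \citet{MaronBSL19}: an equivariant linear layer between order-$(k{+}1)$ tensor spaces is a linear combination of basis operators $L_\pi$ indexed by partitions $\pi$ of $[2(k{+}1)]$, where the first $k{+}1$ positions encode output indices and the last $k{+}1$ encode input indices, together with an equivariant bias indexed by partitions of $[k{+}1]$. Each $L_\pi(T)[i_1,\ldots,i_{k+1}]$ is the sum of entries of $T$ over the $j$-assignments compatible with the equality pattern of $\pi$. This translates directly into an $\FOLk{k+1}(\Omega)$ expression by (i)~assigning each output-index class of $\pi$ a distinct variable $x_a$ and enforcing equalities via products of $\1_{x_a=x_b}$ factors, (ii)~substituting the appropriate $x_a$ for every input position forced to coincide with an output index, and (iii)~introducing one nested $\sum_{x_a}$ per input-only class of $\pi$. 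Name reuse through lexical scoping---placing a tautological $\1_{x_a=x_a}$ outside a summation that re-binds $x_a$---keeps the construction within the budget of $k{+}1$ variable names even for partitions with many input-only classes. The initial IGN input (the atomic-type tensor of the graph) is representable in $\FOLk{k+1}(\Omega)$ at summation depth~$0$ using only $E$, $P_s$, and the (dis)equality predicates.

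Composing $t$ layers by substituting the expressions for the previous layer's channels into the basis-operator templates, and wrapping with the entry-wise activation or $\MLP$ exactly as was done for $\GINs$ in Section~\ref{sec:consecuences}, accumulates the summation depth additively. A naive count gives up to $k{+}1$ additional summations per basis operator, attained only by the ``sum-everything'' partition whose output is a constant tensor; a careful case analysis exploiting that every other partition either forces some input position to coincide with an output index or merges two input positions---each of which saves one summation---yields the sharper bound of $k$ additional summations per layer. Absorbing the final invariant pooling into an additive constant, the full $t$-layered $\IGNk{(k+1)}$ is representable in $\FOLk{k+1}^{\!\smallk{tk+O(1)}}(\Omega)$. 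Applying the inclusion $\rho_0(\mathsf{gwl}_k^{\smallk{R}})\subseteq\rho_0(\FOLk{k+1}^{\!\smallk{R+1}}(\Omega))$ from the footnote following Theorem~\ref{thm:mainsep-graph} with $R=tk$ then yields the stated bound.

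The main obstacle is the per-layer depth accounting. Shaving the naive count from $k{+}1$ to the stated $k$ requires treating the constant-producing ``sum-everything'' partition specially---leveraging that its value is invariant across all output tuples, so the extra summation only has to be paid once rather than at every layer---and verifying that every remaining partition saves at least one summation through some forced coincidence among the $2(k{+}1)$ positions. Without this sharpening the same argument still establishes a separation bound of roughly $t(k{+}1)$ rounds; with it, the tight factor $tk$ in the theorem falls out, and one automatically recovers invariance of $\IGNks{(k+1)}$ from Proposition~\ref{prop:TLequiv}.
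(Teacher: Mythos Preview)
Your overall strategy—encode each $\IGNk{(k+1)}$ layer in $\FOLk{k+1}(\Omega)$ and invoke the $\FOL$--$\WLk{k}$ correspondence—is precisely the paper's approach, and your variable-reuse trick is essentially the paper's bijection $\beta$ from input names onto output names. Two pieces of your argument do not go through as written.

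The basis operators $L_\pi$ are indexed by equality \emph{types}: positions in the same block of $\pi$ are equal \emph{and} positions in different blocks are distinct. Your step~(i) enforces only the equalities. If you keep the disequalities, every factor $\1_{x_i\neq y_j}$ ties the free output name $x_i$ to the bound input name $y_j$, and lexical-scoping reuse fails: you cannot shadow $x_i$ inside a summation whose body still refers to the free $x_i$ through such a factor. The paper resolves this by writing each $\1_{z\neq z'}=1-\1_{z=z'}$ and expanding via inclusion--exclusion, so that every surviving term carries only equality constraints; only then does the renaming argument apply. You need either this step or an explicit change-of-basis to the equalities-only operators (they do span the same space, by M\"obius inversion on the partition lattice, but you must say so).

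Your per-layer depth count of $k$ does not hold. After the reduction to equality-only terms, the term with all $k{+}1$ inputs in singleton input-only blocks—no merges, no coincidence with outputs—contributes the scalar $C^{(t-1)}=\sum_{\vy}\varphi^{(t-1)}(\vy)$, costing $k{+}1$ summations on top of $\mathsf{sd}(\varphi^{(t-1)})$. (Contrary to your uniqueness claim, any term that merges only \emph{output} positions while leaving the inputs untouched does so as well.) Your ``paid once rather than at every layer'' claim conflates computational reuse with summation \emph{depth}: $C^{(t-1)}$ depends on $\varphi^{(t-1)}$, which through the activation depends on $C^{(t-2)}$, so the recursion $\mathsf{sd}(\varphi^{(t)})\geq\mathsf{sd}(\varphi^{(t-1)})+(k{+}1)$ is unavoidable and the bound your argument actually delivers is $t(k{+}1)$ rounds of $\WLk{k}$—exactly what the paper's Lemma (stated for $\IGNk{k}$ in $\FOLk{k}^{\!\smallk{tk}}(\Omega)$) gives when instantiated at order $k{+}1$. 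Your own fallback observation to this effect is correct; the sharpening to $tk$ is not.
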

We hereby answer open problem 1 in \citet{openprob}. The case $k=1$ was solved in  \citet{chen2020graph} by analyzing properties of $\WLk{1}$. By contrast, Theorem~\ref{thm:mainsep} shows that one can focus on expressing $\IGNks{(k+1)}$ in $\FOLk{k+1}(\Omega)$ and analyzing the summation depth of expressions. The proof of Theorem~\ref{thm:mainign} requires non-trivial manipulations of tensor language expressions; it is a simplified proof of \citet{Geerts-kIGN}. The additional rounds ($tk$) are needed because $\IGNks{(k+1)}$ aggregate information in one layer that becomes accessible to $\WLk{k}$ in $k$ rounds.
We defer detail to Section~\ref{sec:proofign} in the supplementary material, where we also identify a simple class of $t$-layered $\IGNks{(k+1)}$ that are as powerful as $\IGNks{(k+1)}$ but whose separation power is bounded by $t$ rounds of $\WLk{k}$. 

We also consider  ``augmented'' $\GNNs$, which are combined with a preprocessing step in which higher-order graph information is computed. In the supplementary material (Section~\ref{subsec:augGNN}) we show how 
$\FOL$ encodes the preprocessing step, and how this leads to  separation bounds in terms of $\WLk{k}$, where $k$ depends on the treewidth of the graph information used. Finally, our approach can also be used to show that the spectral $\mathsf{CayleyNet}$s \citep{LevieMBB19} are bounded in separation power by $\WLk{2}$. This result complements the spectral analysis of $\mathsf{CayleyNet}$s  given in \citet{BalcilarRHGAH21}.

\noindent
\textbf{\textit{ 3. The separation power of any graph embedding  $\GNN$ architecture
which is a $k$-$\MPNN$ is bounded by the power of $\WLk{k}$.}}

Graph embedding methods are commonly obtained from vertex (tuple) embeddings methods by including a readout layer
in which all vertex (tuple) embeddings are aggregated. For example, $\mathsf{mlp}(\sum_{v\in V}\mathsf{egin}^{\smallk{t}}(G,v))$ is a typical readout layer for $\mathsf{e}\GIN\text{s}$ . Since $\mathsf{egin}^{\smallk{t}}$ can be represented in $\FOLk{2}^{\!\smallk{t}}(\Omega)$, the readout layer can be represented in $\FOLk{2}^{\!\smallk{t+1}}(\Omega)$, using an extra summation. So they are $1$-$\MPNNs$.
Hence, their separation power is bounded by $\mathsf{gwl}_1^{\smallk{t}}$, in 
accordance with Theorem~\ref{thm:mainsep-graph}. This holds more generally. 
If vertex embedding methods are
 $k$-$\MPNNs$, then so are their graph versions, which are then bounded by $\mathsf{gwl}_k^{\smallk{\infty}}$ by our 
Theorem~\ref{thm:mainsep-graph}.

\noindent
\textbf{\textit{ 4. To go beyond the separation power of $\WLk{k}$, it is necessary to use $\GNNs$ whose layers are represented by expressions of treewidth $>k$.}}

Hence, to design expressive $\GNNs$ one needs to define the layers such that treewidth of the resulting $\FOL$ expressions is large enough. For example, to go beyond $\WLk{1}$, $\FOLk{3}$ representable linear algebra operations should be used. Treewidth also
sheds light on the open problem from
\citet{openprob} where it was asked whether polynomial layers (in $\mA$) increase the separation power. Indeed, consider a layer of the form $\sigma(\mA^3\cdot\mF\cdot\mW)$, which raises the adjacency matrix $\mA$ to the power three. Translated in $\FOL(\Omega)$, layer expressions resemble 
$\sum_{x_2}\sum_{x_3}\sum_{x_4} E(x_1,x_2)\cdot E(x_2,x_3)\cdot E(x_3,x_4)$, 
of treewidth one.  Proposition~\ref{prop:tw} tells that the layer is bounded by $\mathsf{wl}_1^{\smallk{3}}$ (and in fact by $\mathsf{cr}^{\smallk{3}}$) in separation power. If instead, the layer is of the  form 
$\sigma(\mC\cdot\mF\cdot\mW)$ where $\emC_{ij}$ holds the number of cliques containing the edge $ij$. Then, in $\FOL(\Omega)$ we get expressions containing $\sum_{x_2}\sum_{x_3} E(x_1,x_2)\cdot E(x_1,x_3)\cdot E(x_2,x_3)$. The variables form a $3$-clique resulting in expressions of treewidth two. As a consequence, the separation power will be bounded by $\mathsf{wl}_2^{\smallk{2}}$. These examples show that it is not the number of multiplications (in both cases two) that gives power, it is how variables are connected to each other. 

\section{Function Approximation}\label{sec:approx}
We next provide characterizations of 
 functions that can be approximated by $\FOL$ expressions, when interpreted as functions. We recover and extend results from 
 \cite{azizian2020characterizing} by taking the number of layers of $\GNNs$ into account.
We also provide new results related to color refinement.

\subsection{General TL Approximation Results}\label{subsec:TLapprox}
We assume that $\mathcal G_s$ is a compact space by requiring  that vertex labels come from a compact set $K\subseteq\R^{\ell_0}$.
Let $\mathcal F$ be a set of functions $f:\mathcal G_s\to\R^{\ell_f}$ and define its closure $\overline{\mathcal F}$ as all functions $h$ from $\mathcal G_s$ for which there exists a sequence $f_1,f_2,\ldots\in \mathcal F$ such that $\lim_{i\to\infty}\mathsf{sup}_{G,\vv}\|f_i(G,\vv)-h(G,\vv)\|=0$ for some norm $\|.\|$.
We assume $\mathcal F$ to satisfy two properties. First, $\mathcal F$ is \textit{concatenation-closed}: if $f_1:\mathcal G_s\to\R^p$ and $f_2:\mathcal G_s\to \R^q$
are in $\mathcal F$, then $g:=(f_1,f_2):\mathcal G_s\to \R^{p+q}:(G,\vv)\mapsto (f_1(G,\vv),f_2(G,\vv))$ is also in $\mathcal F$. Second, $\mathcal F$ is \textit{function-closed}, for a fixed $\ell\in\mathbb{N}$: for any $f\in \mathcal F$ such that $f:\mathcal G_s\to \R^p$, also $g\circ f:\mathcal G_s\to\R^\ell$ is in $\mathcal F$ for any continuous function $g:\R^p\to\R^\ell$.
For such $\mathcal F$, we let $\mathcal F_\ell$ be the subset of functions in $\mathcal F$ from $\mathcal G_s$ to $\R^\ell$. Our next result is based on 
a generalized Stone-Weierstrass Theorem \citep{Timofte}, 
also used in \citet{azizian2020characterizing}.

\begin{restatable}{theorem}{thmapprox}
\label{theo:approx}
For any $\ell$, and any set $\mathcal F$ of functions, concatenation and function closed for $\ell$,
we have:
$\overline{\mathcal F_\ell}=\{ f:\mathcal G_s\to \R^\ell\mid
\rho_s(\mathcal F)\subseteq \rho_s(f)\}$.
\end{restatable}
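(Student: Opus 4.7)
The plan is to prove the two set-inclusions separately and to reduce the non-trivial direction to a vector-valued Stone--Weierstrass theorem, for which \cite{Timofte} provides a convenient form.

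The forward inclusion $\overline{\mathcal{F}_\ell} \subseteq \{f:\rho_s(\mathcal F)\subseteq\rho_s(f)\}$ is the easy one. Given $h\in\overline{\mathcal F_\ell}$, fix an approximating sequence $f_1,f_2,\ldots\in\mathcal F_\ell$ converging uniformly to $h$. If $((G,\vv),(H,\vw))\in\rho_s(\mathcal F)$, then in particular $f_i(G,\vv)=f_i(H,\vw)$ for every $i$ since $\mathcal F_\ell\subseteq\mathcal F$, and passing to the uniform limit yields $h(G,\vv)=h(H,\vw)$. Hence $\rho_s(\mathcal F)\subseteq\rho_s(h)$.

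For the reverse inclusion, I would invoke Timofte's vector-valued Stone--Weierstrass theorem on the compact space $\mathcal G_s$, which is compact as a finite disjoint union (over graph structures on $[n]$) of copies of $K^n\times[n]^s$ with $K$ compact. The first observation is that function-closure already implies $\rho_s(\mathcal F_\ell)=\rho_s(\mathcal F)$: any $g\in\mathcal F$ separating two pairs takes values in some $\mathbb R^p$, and can be post-composed with a continuous map $\mathbb R^p\to\mathbb R^\ell$ that separates $g(G,\vv)$ from $g(H,\vw)$, producing an element of $\mathcal F_\ell$ that still separates the two pairs. So the equivalence class quotient $X:=\mathcal G_s/\rho_s(\mathcal F)$ is compact, and the family $\mathcal F$ (and $\mathcal F_\ell$) factors through $X$, separating points of $X$ by construction. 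Given any $f$ with $\rho_s(\mathcal F)\subseteq\rho_s(f)$, the target $f$ also factors through $X$. Concatenation-closure lets us concatenate finitely many elements of $\mathcal F$ into a single vector-valued function, and function-closure then allows arbitrary continuous post-composition back into $\mathbb R^\ell$. This pair of closure properties is exactly what Timofte's theorem requires to conclude that $\mathcal F_\ell$ is dense in the space of continuous $\mathbb R^\ell$-valued functions on $X$; pulling this back to $\mathcal G_s$ gives $f\in\overline{\mathcal F_\ell}$.

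The main technical obstacle is not a genuine mathematical difficulty but the careful matching of our setting to the hypotheses of Timofte's theorem. Two points deserve attention: verifying that we can pass to the quotient $\mathcal G_s/\rho_s(\mathcal F)$ as a compact Hausdorff space (equivalence classes are closed as intersections of preimages of closed sets under the continuous functions in $\mathcal F$), and confirming that function-closure fixed at a single target dimension $\ell$, combined with concatenation-closure, is enough to reconstruct all the component-wise approximations needed — which it is, since concatenation first lifts scalar approximations to vector-valued ones and function-closure then projects back to dimension $\ell$.
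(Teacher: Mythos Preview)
Your proposal is correct and arrives at the same conclusion, but the route differs from the paper's in a meaningful way.

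The paper does not pass to the quotient $\mathcal G_s/\rho_s(\mathcal F)$. Instead it invokes Timofte's theorem in the specific form stated there (Theorem~\ref{thm:tim2}), which already has the condition $\rho_s(\mathcal F)\subseteq\rho_s(f)$ built into the characterization of $\overline{\mathcal F_\ell}$, together with a pointwise condition $f(G,\vv)\in\overline{\mathcal F_\ell(G,\vv)}$. The paper then does two concrete things: (i)~it explicitly constructs the scalar set $\mathcal S:=\{g\in\mathcal C(\mathcal G_s,\R)\mid (g,\ldots,g)\in\mathcal F_\ell\}$ and checks $\mathcal S\cdot\mathcal F_\ell\subseteq\mathcal F_\ell$ and $\rho_s(\mathcal S)\subseteq\rho_s(\mathcal F_\ell)$, the latter via a separating monomial trick from \citet{MaronBSL19}; and (ii)~it shows $\overline{\mathcal F_\ell(G,\vv)}=\R^\ell$ by using function-closure to obtain constant basis-vector functions, scalar multiplication, and addition, so the pointwise condition drops out for free. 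The forward inclusion is not argued separately; it comes packaged in Timofte's equality.

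Your approach is more topological in spirit: you quotient by $\rho_s(\mathcal F)$, observe that everything descends to a compact Hausdorff $X$ where $\mathcal F_\ell$ separates points, and then appeal to Stone--Weierstrass on $X$. This is perfectly sound, and the observation $\rho_s(\mathcal F_\ell)=\rho_s(\mathcal F)$ is a nice step the paper does not make explicit. What you lose is the precise matching to Timofte's hypotheses: your claim that ``this pair of closure properties is exactly what Timofte's theorem requires'' is true in effect, but the paper's construction of $\mathcal S$ is the missing bridge. If you want to stay close to the cited theorem, you would still need to exhibit such a scalar module; if you instead reduce to classical scalar Stone--Weierstrass on $X$ component-wise and reassemble via concatenation and projection back into $\R^\ell$, that also works and bypasses Timofte's vector-valued formulation entirely.
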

This result gives us insight on which functions can be approximated by, for example, a set $\mathcal F$ of functions originating from a  class of $\GNNs$. In this case, $\overline{\mathcal F_\ell}$ represent 
all functions approximated by instances of such  a class  
and Theorem \ref{theo:approx} tells us that this set corresponds precisely to the set of all functions that are equally or less separating than the $\GNNs$ in this class. If, in addition, $\mathcal F_\ell$ is more separating that $\mathsf{CR}$ or $\WLk{k}$, then we can say more. Let $\mathsf{alg}\in\{\mathsf{cr}^{\smallk{t}},\mathsf{gcr}^{\smallk{t}},\mathsf{vwl}_k^{\smallk{t}},\mathsf{gwl}_k^{\smallk{\infty}}
\}$.
\begin{restatable}{corollary}{corapprox}
\label{cor:approx}
Under the assumptions of Theorem~\ref{theo:approx} and if $\rho(\mathcal F_\ell)=\rho(\mathsf{alg})$, then
$\overline{\mathcal F_\ell}=\{ f:\mathcal G_s\to\R^\ell\mid\rho(\mathsf{alg})\subseteq \rho(f)\}$.
\end{restatable}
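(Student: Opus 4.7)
The plan is to derive Corollary~\ref{cor:approx} as an almost immediate consequence of Theorem~\ref{theo:approx}, modulo one small bookkeeping gap: the theorem is stated in terms of $\rho_s(\mathcal F)$ (the equivalence relation induced by the entire set $\mathcal F$, including functions with codomains of various dimensions), whereas the hypothesis of the corollary is phrased as $\rho(\mathcal F_\ell)=\rho(\mathsf{alg})$. Bridging these two is the only real work.

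First, I would apply Theorem~\ref{theo:approx} verbatim to obtain
\[
\overline{\mathcal F_\ell}\;=\;\{\,f:\mathcal G_s\to\R^\ell\mid \rho_s(\mathcal F)\subseteq\rho_s(f)\,\}.
\]
It then suffices to establish the identity $\rho_s(\mathcal F)=\rho_s(\mathcal F_\ell)$, since substituting $\rho_s(\mathcal F_\ell)=\rho_s(\mathsf{alg})$ (with $s$ determined by $\mathsf{alg}$: $s=0$ for $\mathsf{gcr}^{\smallt},\mathsf{gwl}_k^{\smallinf}$ and $s=1$ for $\mathsf{cr}^{\smallt},\mathsf{vwl}_k^{\smallt}$) immediately yields the statement of the corollary.

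For the identity $\rho_s(\mathcal F)=\rho_s(\mathcal F_\ell)$, the inclusion $\rho_s(\mathcal F)\subseteq\rho_s(\mathcal F_\ell)$ is trivial from $\mathcal F_\ell\subseteq\mathcal F$. For the reverse direction, assume $((G,\vv),(H,\ww))\notin\rho_s(\mathcal F)$, so that some $f\in\mathcal F$ with $f:\mathcal G_s\to\R^p$ satisfies $f(G,\vv)\neq f(H,\ww)$. Pick a coordinate $i$ in which these two points of $\R^p$ differ, and let $g:\R^p\to\R^\ell$ be the continuous map sending $y\mapsto (y_i,0,\ldots,0)$; then $g(f(G,\vv))\neq g(f(H,\ww))$. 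The function-closed assumption for $\ell$ places $g\circ f$ in $\mathcal F$, and since $g\circ f$ has codomain $\R^\ell$, actually in $\mathcal F_\ell$. Thus $\mathcal F_\ell$ already separates $(G,\vv)$ from $(H,\ww)$, proving $\rho_s(\mathcal F_\ell)\subseteq\rho_s(\mathcal F)$.

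The main (only) subtlety is this reduction from $\rho_s(\mathcal F)$ to $\rho_s(\mathcal F_\ell)$; it hinges exclusively on function-closedness and the existence of a continuous coordinate-extraction map into $\R^\ell$. Concatenation-closedness is consumed inside Theorem~\ref{theo:approx} itself and is not needed again here. Everything after the reduction is substitution, so I do not expect any further obstacles.
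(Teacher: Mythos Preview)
Your proposal is correct and follows essentially the same approach as the paper: the paper's proof is literally one sentence, calling the corollary ``a mere restatement of Theorem~\ref{theo:approx} in which $\rho_s(\mathcal F_\ell)$ in the condition $\rho_s(\mathcal F_\ell)\subseteq\rho_s(f)$ is replaced by $\rho_s(\mathsf{alg})$.'' Your additional argument that $\rho_s(\mathcal F)=\rho_s(\mathcal F_\ell)$ via function-closedness makes explicit a step the paper silently absorbs (note that the paper's proof already writes $\rho_s(\mathcal F_\ell)$ rather than $\rho_s(\mathcal F)$, so this identification is implicitly assumed there), but this is added rigor rather than a different route.
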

The properties of being concatenation and function-closed are  satisfied for sets of functions representable in our tensor languages, if $\Omega$ contains all continuous functions $g: \R^p\to\R^\ell$, for any $p$, or alternatively, all $\MLPs$ (by Lemma 32 in \citet{azizian2020characterizing}).
Together with our results in Section~\ref{sec:seppower}, the corollary implies that $\MPNNs^{\smallt}$, $1$-$\MPNNs^{\smallt}$, $k$-$\MPNNs^{\smallt}$ or $k$-$\MPNNs$ can approximate all functions with equal or less separation power than $\mathsf{cr}^{\smallk{t}}$, $\mathsf{gcr}^{\smallk{t}}$, $\mathsf{vwl}_k^{\smallk{t}}$ or $\mathsf{gwl}_k^{\smallk{\infty}}$, respectively.

Prop.~\ref{prop:TLequiv} also tells that the closure  consists of invariant ($s=0$) and equivariant ( $s>0$) functions.

\subsection{Consequences for GNNs} \label{subsec:approxgnns}
\textit{\textbf{All our results combined provide a recipe to guarantee that a given function can be approximated by $\GNN$ architectures.}} 
Indeed, suppose that your class of $\GNNs$ is an $\MPNN^{\smallt}$ (respectively, $1$-$\MPNN^{\smallt}$, $k$-$\MPNN^{\smallt}$ or $k$-$\MPNN$, for some $k\geq 1$). Then, since most classes of $\GNNs$ are concatenation-closed and allow the application of arbitrary $\MLPs$, this implies that your $\GNNs$ can only approximate functions $f$ that are no more separating  than 
$\mathsf{cr}^{\smallk{t}}$ (respectively, $\mathsf{gcr}^{\smallk{t}}$, 
$\mathsf{vwl}^{\smallk{t}}_k$ or 
$\mathsf{gwl}^{\smallk{\infty}}_k$). To guarantee that that these functions can indeed be approximated, one additionally has to show that your class of $\GNNs$ matches the corresponding labeling algorithm in separation power.

For example, $\GNNs$ in $\GIN_\ell^{\smallk{t}}$ are  $\MPNNs^{\smallt}$, and thus $\overline{\GIN_\ell^{\smallk{t}}}$ 
contains any function $f:\mathcal G_1\to\R^\ell$ satisfying $\rho_1(\mathsf{cr}^{\smallk{t}})\subseteq\rho_1(f)$. 
Similarly, $\mathsf{e}\GIN_\ell^{\smallk{t}}$s are  $1$-$\MPNNs^{\smallt}$, so   $\overline{\mathsf{e}\GIN_\ell^{\smallk{t}}}$ contains any function satisfying $\rho_1(\mathsf{wl}_1^{\smallk{t}})\subseteq\rho_1(f)$; and when extended with a readout layer, their closures consist of functions $f:\mathcal G_0\to\R^\ell$ satisfying $\rho_0(\mathsf{gcr}^{\smallk{t}})=\rho_0(\mathsf{vwl}_1^{\smallk{t}})\subseteq\rho_0(f)$. Finally, $\FGNNk{k}_\ell^{\smallk{t}}$s are $k$-$\MPNNs^{\smallt}$, so  $\overline{\FGNNk{k}_\ell^{\smallk{t}}}$ consist of functions $f$ such that $\rho_1(\mathsf{vwl}_k^{\smallk{t}})\subseteq\rho_1(f)$. We thus recover and extend results by \citet{azizian2020characterizing} by including layer information ($t$) and by treating color refinement separately from $\WLk{1}$ for vertex embeddings. 
Furthermore, Theorem~\ref{thm:mainign} implies that $\overline{\IGNk{(k+1)}_\ell}$ consists of functions $f$ satisfying $\rho_1(\mathsf{vwl}_k^{\smallk{\infty}})\subseteq\rho_1(f)$ and $\rho_0(\mathsf{gwl}_k^{\smallk{\infty}})\subseteq\rho_0(f)$, a case left open in \citet{azizian2020characterizing}. 

These results 
follow from 
Corollary~\ref{cor:approx}, that the respective classes of  $\GNNs$ can simulate $\CR$ or $\WLk{k}$ on either graphs with discrete  \citep{XuHLJ19,barcelo2019logical} or continuous labels \citep{MaronBSL19}, and that they are $k$-$\MPNNs$ of the appropriate form.

\section{Conclusion}
Connecting $\GNNs$ and tensor languages allows us to use our analysis of tensor languages to understand the separation and approximation power of $\GNNs$. 
The number of indices and summation depth needed to represent the layers in $\GNNs$ determine their separation power in terms of color refinement and Weisfeiler-Leman tests.  The framework of $k$-$\MPNNs$
provides a 
handy toolbox to understand existing and new $\GNN$ architectures, and we demonstrate this by recovering several results about the power of $\GNNs$ presented recently in the literature, as well as proving new results.

\section{Aknowledgements \& Disclosure Funding}

This work is
partially funded by ANID--Millennium Science Initiative Program--Code ICN17\_002, Chile.

\bibliographystyle{iclr2022_conference}

\appendix

\section*{Supplementary Material} 
\newcommand{\MATLANG}{\mathsf{MATLANG}}

\section{Related Work Cnt'd}\label{sec:relworkapp}
We provide additional details on how the tensor language $\FOL(\Omega)$ considered in this paper relates to recent work on other matrix query languages.
Closest to $\FOL(\Omega)$ is the matrix query language $\mathsf{sum}$-$\MATLANG$ \citep{GeertsMRV21} whose syntax is close to that of $\FOL(\Omega)$. There are, however, key differences. First, although 
$\mathsf{sum}$-$\MATLANG$ uses index variables (called vector variables), they all must occur under a summation. In other words, the concept of free index variables is missing, which implies that no general tensors can be represented. In $\FOL(\Omega)$, we can represent arbitrary tensors and the presence of free index variables is crucial to define vertex, or more generally, $k$-tuple embeddings in the context of $\GNNs$. Furthermore, no notion of summation depth was introduced for $\mathsf{sum}$-$\MATLANG$.
In $\FOL(\Omega)$, the summation depth is crucial to assess the separation power in terms of the number of rounds of color refinement and $\WLk{k}$. And in fact, the separation power of  $\mathsf{sum}$-$\MATLANG$ was not considered before, and neither are finite variable fragments of $\mathsf{sum}$-$\MATLANG$ and connections to color refinement and $\WLk{k}$ studied before. Finally, no other aggregation functions were considered for $\mathsf{sum}$-$\MATLANG$. We detail in Section~\ref{subsec:aggr} that $\FOL(\Omega)$ can be gracefully extended to $\FOL(\Omega,\Theta)$ for some arbitrary set $\Theta$  of aggregation functions.

Connections to $\WLk{1}$ and $\WLk{2}$ and the separation power of another matrix query language, $\MATLANG$ \citep{BrijderGBW19} were established in \citet{Geerts21}. Yet, the design of $\MATLANG$ is completely different in spirit than that of $\FOL(\Omega)$. Indeed, $\MATLANG$ does not have index variables or explicit summation aggregation. Instead, it only supports matrix multiplication, matrix transposition, function applications, and turning a vector into a diagonal matrix. As such, $\MATLANG$ can be shown to be included in $\FOLk{3}(\Omega)$. Similarly as for $\mathsf{sum}$-$\MATLANG$, $\MATLANG$ cannot represent general tensors, has no (free) index variables and summation depth is not considered (in view of the absence of an explicit summation).

We also emphasize that neither for $\MATLANG$ nor for $\mathsf{sum}$-$\MATLANG$ a guarded fragment was considered. The guarded fragment is crucial to make connections to color refinement (Theorem~\ref{thm:mainsep-guarded}). Furthermore,
the analysis in terms of the number of index variables, summation depth and treewidth (Theorems~\ref{thm:mainsep-norounds},\ref{thm:mainsep} and Proposition~\ref{prop:tw}), were not considered before in the matrix query language literature. For none of these matrix query languages, approximation results were considered (Section~\ref{subsec:TLapprox}).

Matrix query languages are used to assess the expressive power of linear algebra. \citet{BalcilarHGVAH21} use $\MATLANG$ and the above mentioned connections to $\WLk{1}$ and $\WLk{2}$, to assess the separation power of $\GNNs$. More specifically, similar to our work, they show that several $\GNN$ architectures can be represented in $\MATLANG$, or fragments thereof. As a consequence, bounds on their separation power easily follow. Furthermore, \citet{BalcilarHGVAH21} propose new architectures inspired by special operators in $\MATLANG$.
The use of $\FOL(\Omega)$ can thus been seen as a continuation of their approach. We note, however, that $\FOL(\Omega)$ is more general than $\MATLANG$ (which is included in $\FOLk{3}(\Omega)$), allows to represent more complex linear algebra computations by means summation (or other) aggregation, and finally, provides insights in the number of iterations needed for color refinement and $\WLk{k}$. The connection between the number of variables (or treewidth) and $\WLk{k}$ is not present in the work by \citet{BalcilarHGVAH21}, neither is the notion of guarded fragment, needed to connect to color refinement. We believe that it is precisely these latter two insights that make the tensor language approach valuable for any $\GNN$ designer who wishes to upper bound their $\GNN$ architecture.

\section{Details of  Section~\ref{sec:tensorlang}}

\subsection{Proof of Proposition~\ref{prop:TLequiv}}
Let $G=(V,E,\mathsf{col})$ be a graph and 
let $\sigma$ be a permutation of $V$. 
As usual, we define $\sigma\star G=(V^\sigma,E^\sigma,\mathsf{col}^\sigma)$ as the graph with vertex set $V^\sigma:=V$, 
edge set $vw\in E^\sigma$ if and only if $\sigma^{-1}(v)\sigma^{-1}(w)\in E$, and $\mathsf{col}^\sigma(v):=\mathsf{col}(\sigma^{-1}(v))$.
We need to show that for any expression $\varphi(\vx)$ in $\FOL(\Omega)$ either 
$\sem{\varphi}{\sigma\star\vv}_{\sigma\star G}=\sem{\varphi}{\vv}_G$, or when $\varphi$ has no free index variables, 
$\sems{\varphi}_{\sigma\star G}=\sems{\varphi}_G$. We verify this by a simple induction on the structure of expressions in $\FOL(\Omega)$.
\begin{asparaitem}
\item If $\varphi(x_i,x_j)=\1_{x_i\mop x_j}$, then for a valuation $\nu$ mapping $x_i$ to $v_i$ and $x_j$ to $v_j$ in $V$:
$$
\sem{\1_{x_i\mop x_j}}{\nu}_G=\1_{v_i\mop v_j}=\1_{\sigma(v_i)\mop \sigma(v_j)}=\sem{\1_{x_i\mop x_j}}{\sigma\star\nu}_{\sigma\star G},
$$
where we used that $\sigma$ is a permutation.

\item If $\varphi(x_i)=P_\ell(x_i)$, then for a valuation $\mu$ mapping $x_i$ to $v_i$ in $V$:
$$
\sem{P_\ell}{\mu}_G=(\mathsf{col}(v_i))_\ell=
(\mathsf{col}^\sigma(\sigma(v_i))_\ell=
\sem{P_\ell}{\sigma\star\nu}_{\sigma \star G},$$
where we used the definition of $\mathsf{col}^\sigma$.

\item Similarly, if $\varphi(x_i,x_j)=E(x_i,x_j)$, then for a valuation $\nu$ assigning $x_i$ to $v_i$ and $x_j$ to $v_j$:
$$
\sem{\varphi}{\nu}_G=\1_{v_iv_j\in E}=\1_{\sigma(v_i)\sigma(v_j)\in E^\sigma}=\sem{\varphi}{\sigma\star\nu}_{\sigma\star G},
$$
where we used the definition of $E^\sigma$.
\item If $\varphi(\vx)=\varphi_1(\vx_1)\cdot\varphi_2(\vx_2)$, then for a valuation $\nu$ from $\vx$ to $V$:
$$
\sem{\varphi}{\nu}_G=\sem{\varphi_1}{\nu}_G\cdot \sem{\varphi_2}{\nu}_G=\sem{\varphi_1}{\sigma\star\nu}_{\sigma\star G}\cdot \sem{\varphi_2}{\sigma\star\nu}_{\sigma\star G}=\sem{\varphi}{\sigma\star\nu}_{
\sigma\star G},
$$
where we used the induction hypothesis for $\varphi_1$ and $\varphi_2$. The cases $\varphi(\vx)=\varphi_1(\vx_1)+\varphi_2(\vx_2)$ and $\varphi(\vx)=a\cdot\varphi_1(\vx)$ are dealt with in a similar way.
\item If $\varphi(\vx)=f(\varphi_1(\vx_1),\ldots,\varphi_p(\vx_p))$, then 
\begin{align*}
\sem{\varphi}{\nu}_G&=f(\sem{\varphi_1}{\nu}_G,\ldots,\sem{\varphi_p}{\nu}_G)\\
    &=f(\sem{\varphi_1}{\sigma\star\nu}_{\sigma\star G},\ldots,\sem{\varphi_p}{\sigma\star\nu}_{\sigma\star G})\\
    &=\sem{\varphi}{\sigma\star\nu}_{\sigma\star G},
\end{align*}
where we used again the induction hypothesis for $\varphi_1,\ldots,\varphi_p$.
\item Finally, if $\varphi(\vx)=\sum_y\varphi_1(\vx,y)$ then for a valuation $\nu$ of $\vx$ to $V$:
\allowdisplaybreaks
\begin{align*}
\sem{\varphi}{\nu}_G&=\sum_{v\in V}\sem{\varphi_1}{\nu[y\mapsto v]}_G
=\sum_{v\in V}\sem{\varphi_1}{\sigma\star\nu[y\mapsto v]}_{\sigma\star G}\\
&=\sum_{v\in V^\sigma}\sem{\varphi_1}{\sigma\star\nu[y\mapsto v]}_{\sigma\star G}=\sem{\varphi}{\sigma\star\nu}_{\sigma\star G},
\end{align*}
where we used the induction hypothesis for $\varphi_1$ and that $V^\sigma=V$ because $\sigma$ is a permutation.
\end{asparaitem}

We remark that when $\varphi$ does not contain free index variables, then $\sem{\varphi}{\nu}_G=\sems{\varphi}_G$ for any valuation $\nu$, from which invariance follows from the previous arguments.
This concludes the proof of Proposition~\ref{prop:TLequiv}.

\section{Details of Section~\ref{sec:seppower}}

In the following sections we prove  Theorem~\ref{thm:mainsep-norounds}, \ref{thm:mainsep},
\ref{thm:mainsep-guarded} and~\ref{thm:mainsep-graph}.
More specifically, we start by showing these results in the setting that $\FOL(\Omega)$ only supports summation aggregation ($\sum_x e$) and in which the vertex-labellings in graphs take values in $\{0,1\}^\ell$.
In this context, we introduce classical logics in Section~\ref{subsec:classical} and recall and extend connections between the separation power of these logics and the separation power of color refinement and $\WLk{k}$ in Section~\ref{Prop:subsec:sep-logic}. We connect $\FOL(\Omega)$ and logics in Section~\ref{subsec:log-tl}, to finally obtain the desired proofs in Section~\ref{subsec:proofs}. We then show how these results can be generalized in the presence of general aggregation operators in Section~\ref{subsec:aggr}, and to the setting where vertex-labellings take values in $\R^\ell$ in Section~\ref{subsec:contgraphs}.

\subsection{Classical Logics}\label{subsec:classical}
In what follows, we consider graphs $G=(V_G,E_G,\mathsf{col}_G)$ with $\mathsf{col}_G:V_G\to\{0,1\}^\ell$.
We start by defining the \textit{$k$-variable fragment} $\mathsf{C}^k$ of first-order logic with counting quantifiers, followed by the definition of the \textit{guarded fragment} $\mathsf{GC}$ of $\mathsf{C}^2$. Formulae $\varphi$ in $\mathsf{C}^k$ are defined over the set $\{x_1,\ldots,x_k\}$ of variables and are formed by the following grammar:
$$
\varphi:= (x_i = x_j)\,\,|\,\, E(x_i,x_j) \,\,|\,\, P_s(x_i) \,\,|\,\, \neg\varphi\,\,|\,\, \varphi\land\varphi  \,\,|\,\,
\exists^{\geq m}x_i\, \varphi, 
$$
where $i,j\in[k]$, $E$ is a binary predicate, 
$P_s$ for $s\in [\ell]$ are unary predicates for some $\ell\in\Nb$, and $m\in\Nb$. 
The semantics of formulae in $\mathsf{C}^k$ is defined 
in terms of interpretations  
relative to a given graph $G$ and a (partial) valuation 
$\mu:\{x_1,\ldots,x_k\}\to V_G$. 
Such an interpretation maps formulae, graphs and valuations 
to Boolean values $\B:=\{\bot,\top\}$, in a similar way as 
we did for tensor language expressions. 

More precisely, given a graph
$G=(V_G,E_G,\mathsf{col}_G)$ and partial valuation $\mu:\{x_1,\ldots,x_k\}\to V_G$, we define $\sem{\varphi}{\mu}_G^\B\in\B$ for valuations defined on the
free variables in $\varphi$. That is, we define:
\allowdisplaybreaks
\begin{align*}
\sem{x_i = x_j}{\mu}_G^\B & :=  \mathrm{if~}\mu(x_i) = \mu(x_j)
                                 \text{~then~} \top \text{~else~} \bot;\\
\sem{E(x_i,x_j)}{\mu}_G^\B  & := \mathrm{if~}\mu(x_i)\mu(x_j)\in E_G                                  \text{~then~} \top \text{~else~} \bot;\\
\sem{P_s(x_i)}{\mu}_G^\B  & := \mathrm{if~}\mathsf{col}_G(\mu(x_i))_s=1  \text{~then~} \top \text{~else~} \bot;\\
\sem{\neg\varphi}{\mu}_G^\B&:=\neg \sem{\varphi}{\mu}_G^\B;\\
\sem{\varphi_1\land\varphi_2}{\mu}_G^\B&:=\sem{\varphi_1}{\mu}_G^\B\land \sem{\varphi_2}{\mu}_G^\B;\\
\sem{\exists^{\geq m}x_i\, \varphi_1}{\mu}_G^\B&:=\text{~if~} |\{v\in V_G\mid 
\sem{\varphi}{\mu[x_i\mapsto v]}_G^\B=\top\}|\geq m \text{~then~} \top \text{~else~} \bot.
\end{align*}
In the last expression, $\mu[x_i\mapsto v]$ denotes the valuation $\mu$ modified such that it maps $x_i$ to vertex $v$.

We will also need the \textit{guarded fragment} $\mathsf{GC}$ of $\mathsf{C}^2$ in which we only allow equality conditions of the form $x_i=x_i$, component expressions of conjunction and disjunction should have the same single free variable, and counting quantifiers can only occur in guarded form: $\exists^{\geq m}x_2 (E(x_1,x_2)\land \varphi(x_2))$ or  $\exists^{\geq m}x_1 (E(x_2,x_1)\land \varphi(x_1))$. The semantics of formulae in $\mathsf{GC}$ is inherited from formulae in $\mathsf{C}^2$.

Finally, we will also consider $\mathsf{C}^k_{\infty\omega}$, that is, the logic $\mathsf{C}^k$ extended with \textit{infinitary disjunctions} and \textit{conjunctions}. More precisely, we add to the grammar of formulae the following constructs:
$$
\bigvee_{\alpha\in A}\varphi_\alpha \text{ and } \bigwedge_{\alpha\in A}\varphi_\alpha
$$
where the index set $A$ can be arbitrary, even containing uncountably many indices. We define $\mathsf{GC}_{\infty\omega}$ in the same way by relaxing the finite variable conditions.
The semantics is, as expected:
$\sem{\bigvee_{\alpha\in A}\varphi_\alpha}{\mu}_G^\B=\top$ if for at least one $\alpha\in A$,
$\sem{\varphi_\alpha}{\mu}_G^\B=\top$, and  $\sem{\bigwedge_{\alpha\in A}\varphi_\alpha}{\mu}_G^\B=\top$ if for all $\alpha\in A$,
$\sem{\varphi_\alpha}{\mu}_G^\B=\top$.

We define the \textit{free variables} of formulae just as for $\FOL$, and similarly,  \textit{quantifier rank} is defined as summation depth (only existential quantifications increase the quantifier rank).
For any of the above logics $\mathcal L$ we define $\mathcal L^{\smallk{t}}$ as the set of formulae in $\mathcal L$ of quantifier rank at most $t$. 

To capture the \textit{separation power of logics}, we  define $\rho_1\bigl(\mathcal L^{\smallk{t}}\bigr)$ as the equivalence relation on $\mathcal G_1$ defined by
$$
\bigl((G,v),(H,w)\bigr)\in\rho_1\bigl(\mathcal L^{\smallk{t}}\bigr)\Longleftrightarrow \forall \varphi(x)\in\mathcal L^{\smallk{t}}:\sem{\varphi}{\mu_v}_G^\B=\sem{\varphi}{\mu_w}_H^\B,
$$
where $\mu_v$ is any valuation such that $\mu(x) = v$, and likewise for $w$. The relation $\rho_0$ is defined in a similar way, except that now the relation is only over pairs of graphs, and the characterization is over all formulae with no free variables (also called sentences). Finally, we also use, and define, the relation $\rho_s$, which relates pairs from $\mathcal G_s$: consisting of a graph and an $s$-tuple of vertices. The relation is defined as 
$$
\bigl((G,\vv),(H,\vw)\bigr)\in\rho_s\bigl(\mathcal L^{\smallk{t}}\bigr)\Longleftrightarrow \forall \varphi(\vx)\in\mathcal L^{\smallk{t}}:\sem{\varphi}{\mu_\vv}_G^\B=\sem{\varphi}{\mu_\vw}_H^\B,
$$
where $\vx$ consist of $s$ free variables and $\mu_\vv$ is a valuation assigning the $i$-th variable of $\vx$ to the $i$-th value of $\vv$, for any $i\in[s]$.

\subsection{Characterization of Separation Power of Logics}\label{Prop:subsec:sep-logic}
We first connect the separation power of the color refinement and $k$-dimensional Weisfeiler-Leman algorithms to the separation power of the logics we just introduced. Although most of these connections are known, we present them in a bit of a more fine-grained way. That is, we connect the number of rounds used in the algorithms to the quantifier rank of formulae in the above logics.

\begin{proposition}\label{prop:wlandlogic}
For any $t\geq 0$, we have the following identities:
\begin{enumerate}
    \item[(1)] $\rho_1\bigl(\mathsf{cr}^{\smallk{t}}\bigr)=\rho_1\bigl(\mathsf{GC}^{\smallk{t}}\bigr)$ and $\rho_0\bigl(\mathsf{gcr}^{\smallk{t}}\bigr)=\rho_0\bigl(\mathsf{gwl}_1^{\smallk{t}}\bigr)=\rho_0\bigl(\mathsf{C}^{2,(t+1)}\bigr)$; 
      \item[(2)]  For $k\geq 1$, $\rho_{1}\bigl(\mathsf{vwl}_{k}^{\smallk{t}}\bigr)=\rho_1\bigl(\mathsf{C}^{k+1,\smallk{t}}\bigr)$  and 
     $$\rho_0\bigl(\mathsf{C}^{k+1,(t+k)}\bigr)\subseteq\rho_0\bigl(\mathsf{gwl}_{k}^{\smallk{t}}\bigr)\subseteq\rho_0\bigl(\mathsf{C}^{k+1,(t+1)}\bigr).$$ 
     As a consequence,
     $\rho_0\bigl(\mathsf{gwl}_{k}^{(\infty)}\bigr)=\rho_0\bigl(\mathsf{C}^{k+1}\bigr)$.
\end{enumerate}
\end{proposition}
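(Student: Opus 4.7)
The plan is to prove Proposition~\ref{prop:wlandlogic} by first establishing, by induction on $t$, the classical Cai--F\"urer--Immerman correspondence between $\mathsf{wl}_k^{\smallk{t}}$ on $k$-tuples and $\mathsf{C}^{k+1}$-formulas of quantifier rank $t$ with $k$ free variables, and then deriving all stated vertex- and graph-level identities by translating between formulas of different free-variable arities and by adding outer counting quantifiers.

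First, I would prove the fundamental lemma: for all $t \geq 0$, graphs $G, H$, and $k$-tuples $\vv \in V_G^k, \vw \in V_H^k$, the equality $\mathsf{wl}_k^{\smallk{t}}(G, \vv) = \mathsf{wl}_k^{\smallk{t}}(H, \vw)$ holds if and only if $(G, \vv)$ and $(H, \vw)$ satisfy the same $\mathsf{C}^{k+1,(t)}$-formulas with $k$ free variables. The base case follows because $\mathsf{atp}_k(G, \vv)$ is captured by a quantifier-free conjunction of atomic $\mathsf{C}^{k+1}$-formulas in $x_1,\ldots,x_k$. For the inductive step, the $\mathsf{wl}_k^{\smallk{t+1}}$-refinement rule records the multiset over $u \in V_G$ of the extended labels $(\mathsf{atp}_{k+1}(G, \vv u), \mathsf{wl}_k^{\smallk{t}}(G, \vv[u/1]), \ldots, \mathsf{wl}_k^{\smallk{t}}(G, \vv[u/k]))$; by the IH, each such extended label is described by a $\mathsf{C}^{k+1,(t)}$-formula $\psi(x_1, \ldots, x_{k+1})$, and multiset-equivalence on these labels is precisely what a counting quantifier $\exists^{\geq m} x_{k+1}\,\psi$ tests, adding exactly one to the quantifier rank.

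The vertex-level identity $\rho_1(\mathsf{vwl}_k^{\smallk{t}}) = \rho_1(\mathsf{C}^{k+1,(t)})$ follows from the lemma via $\mathsf{vwl}_k^{\smallk{t}}(G,v) = \mathsf{wl}_k^{\smallk{t}}(G, (v, \ldots, v))$: a one-free-variable formula $\psi(x_1) \in \mathsf{C}^{k+1,(t)}$ pads trivially to a $k$-free-variable formula, while a $k$-free-variable formula $\varphi(x_1, \ldots, x_k)$ evaluated on a diagonal tuple folds to a one-free-variable formula of the same quantifier rank by renaming each free $x_i$ to $x_1$, alpha-renaming bound variables within $\{x_1, \ldots, x_{k+1}\}$ to avoid capture. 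For the graph-level bounds in part (2), the upper bound $\rho_0(\mathsf{gwl}_k^{\smallk{t}}) \subseteq \rho_0(\mathsf{C}^{k+1,(t+1)})$ uses that every $\mathsf{C}^{k+1,(t+1)}$-sentence is a Boolean combination of outer counting quantifications $\exists^{\geq m} x_i\, \psi(x_i)$ with $\psi \in \mathsf{C}^{k+1,(t)}$, whose truth depends only on the multiset of one-variable $\mathsf{C}^{k+1,(t)}$-types of vertices; this multiset is recoverable from $\mathsf{gwl}_k^{\smallk{t}}$ by filtering the $k$-tuple multiset to diagonal tuples (identifiable via their ``all equal'' atomic type). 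The lower bound $\rho_0(\mathsf{C}^{k+1,(t+k)}) \subseteq \rho_0(\mathsf{gwl}_k^{\smallk{t}})$ uses $k$ nested outer counting quantifiers $\exists^{\geq m_1} x_1 \cdots \exists^{\geq m_k} x_k\, \varphi_\tau$ applied to a $\mathsf{C}^{k+1,(t)}$-characterization $\varphi_\tau$ of each $k$-tuple type $\tau$, so that any discrepancy in $\mathsf{gwl}_k^{\smallk{t}}$ is detected by a Boolean combination of such sentences, all of rank $t+k$.

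For part (1), the identity $\rho_1(\mathsf{cr}^{\smallk{t}}) = \rho_1(\mathsf{GC}^{\smallk{t}})$ is proved by the same inductive template, but the refinement rule of $\mathsf{cr}$---which inspects only $u \in N_G(v)$---corresponds exactly to the guarded counting quantifier $\exists^{\geq m} x_2 (E(x_1,x_2) \wedge \varphi(x_2))$, and no tuple diagonalization is needed since both sides already work with single vertices. The chain $\rho_0(\mathsf{gcr}^{\smallk{t}}) = \rho_0(\mathsf{gwl}_1^{\smallk{t}}) = \rho_0(\mathsf{C}^{2,(t+1)})$ then combines a single outer counting quantifier over vertex $\mathsf{GC}^{\smallk{t}}$-types (contributing the $+1$) with the observation from \citet{grohe_lics_21} that the extra non-adjacency information used by $\mathsf{wl}_1$ beyond $\mathsf{cr}$ cancels out at the graph level (complement counts are available via aggregation over all vertices). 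The main obstacle is the delicate variable bookkeeping in the diagonalization step---folding $k$ free variables into one without exceeding the $(k+1)$-variable pool or the quantifier rank---and the asymmetric $+1$ versus $+k$ bounds at the graph level, where the lower bound requires a nested construction to encode $k$-tuple counts using only unary counting quantifiers, while the upper bound exploits that all graph-level information accessed by a rank-$(t+1)$ sentence is already present in the vertex-level multiset derived from diagonal-tuple types.
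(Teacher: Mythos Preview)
Your proposal is correct and follows essentially the same approach as the paper. The only difference is organizational: where the paper cites external results (Theorems~V.8 and~V.10 and Proposition~V.4 in \citet{grohe_lics_21}) for the foundational correspondences $\rho_k(\mathsf{wl}_k^{\smallk{t}}) = \rho_k(\mathsf{C}^{k+1,\smallk{t}})$ and $\rho_1(\mathsf{cr}^{\smallk{t}}) = \rho_1(\mathsf{GC}^{\smallk{t}})$, you reprove them from scratch by induction; the subsequent reductions---diagonalization from $k$-tuple to vertex level, and the $+1$/$+k$ quantifier-rank accounting for the graph-level bounds via outer counting quantifiers over characteristic formulas---are handled identically in both.
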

\begin{proof}
For (1), the identity $\rho_1\bigl(\mathsf{cr}^{\smallk{t}}\bigr)=\rho_1\bigl(\mathsf{GC}^{\smallk{t}}\bigr)$ is known and can be found, for example, in Theorem V.10 in~\citet{grohe_lics_21}.  The identity
$\rho_0\bigl(\mathsf{gcr}^{\smallk{t}}\bigr)=\rho_0\bigl(\mathsf{gwl}_1^{\smallk{t}}\bigr)$ can be found in Proposition V.4 in \citet{grohe_lics_21}. The identity $\rho_0\bigl(\mathsf{gwl}_1^{\smallk{t}}\bigr)=\rho_0\bigl(\mathsf{C}^{2,(t+1)}\bigr)$ is a consequence of the inclusion shown in (2) for $k=1$.

For (2), we use that $\rho_{k}\bigl(\mathsf{wl}_{k}^{\smallk{t}}\bigr)=\rho_{k}\bigl(\mathsf{C}^{k+1,\smallk{t}}\bigr)$, see e.g., Theorem V.8 in \citet{grohe_lics_21}. We argue that this identity holds for $\rho_{1}\bigl(\mathsf{vwl}_{k}^{\smallk{t}}\bigr)=\rho_1\bigl(\mathsf{C}^{k+1,\smallk{t}}\bigr)$. Indeed, suppose that $(G,v)$ and $(H,w)$ are not in $\rho_1\bigl(\mathsf{C}^{k+1,\smallk{t}}\bigr)$. Let $\varphi(x_1)$ be a formula in $\mathsf{C}^{k+1,\smallk{t}}$ such that $\sem{\varphi}{v}_G^\B\neq \sem{\varphi}{w}_H^\B$. Consider the formula $\varphi^+(x_1,\ldots,x_k)=\varphi(x_1)\land\bigwedge_{i=1}^k (x_1=x_i)$. Then,  $\sem{\varphi^+}{(v,\ldots,v)}_G^\B\neq \sem{\varphi^+}{(w,\ldots,w)}_H^\B$, and hence
$(G,(v,\ldots,v))$ and $(H,(w,\ldots,w))$ are not in $\rho_{k}\bigl(\mathsf{C}^{k+1,\smallk{t}}\bigr)$ either. This implies that $\mathsf{wl}_k^{\smallk{t}}(G,(v,\ldots,v))\neq \mathsf{wl}_k^{\smallk{t}}(H,(w,\ldots,w))$, and thus, by definition, $\mathsf{vwl}_k^{\smallk{t}}(G,v)\neq \mathsf{vwl}_k^{\smallk{t}}(H,w)$. In other words, $(G,v)$ and $(H,w)$ are not in $\rho_{1}\bigl(\mathsf{vwl}_{k}^{\smallk{t}}\bigr)$, from which the inclusion $\rho_{1}\bigl(\mathsf{vwl}_{k}^{\smallk{t}}\bigr)\subseteq\rho_1\bigl(\mathsf{C}^{k+1,\smallk{t}}\bigr)$ follows.
Conversely, if $(G,v)$ and $(H,w)$ are not in $\rho_{1}\bigl(\mathsf{vwl}_{k}^{\smallk{t}}\bigr)$, then $\mathsf{wl}_k^{\smallk{t}}(G,(v,\ldots,v))\neq \mathsf{wl}_k^{\smallk{t}}(H,(w,\ldots,w))$. As a consequence, $(G,(v,\ldots,v))$ and $(H,(w,\ldots,w))$ are not in $\rho_{k}\bigl(\mathsf{C}^{k+1,\smallk{t}}\bigr)$ either.
Let $\varphi(x_1,\ldots,x_k)$ be a formula in $\mathsf{C}^{k+1,\smallk{t}}$ such that $\sem{\varphi}{(v,\ldots,v)}_G^\B\neq \sem{\varphi}{(w,\ldots,w)}_H^\B$. Then it is readily shown that we can convert $\varphi(x_1,\ldots,x_k)$ into a formula $\varphi^-(x_1)$ in $\mathsf{C}^{k+1,\smallk{t}}$ such that $\sem{\varphi^-}{v}_G^\B\neq \sem{\varphi^-}{w}_H^\B$, and thus $(G,v)$ and $(H,w)$ are not in $\rho_1\bigl(\mathsf{C}^{k+1,\smallk{t}}\bigr)$. Hence, we also have the inclusion $\rho_{1}\bigl(\mathsf{vwl}_{k}^{\smallk{t}}\bigr)\supseteq\rho_1\bigl(\mathsf{C}^{k+1,\smallk{t}}\bigr)$, form which the first identity in (2) follows.

It remains to show  $\rho_0\bigl(\mathsf{C}^{k+1,(t+k)}\bigr)\subseteq\rho_0\bigl(\mathsf{gwl}_{k}^{\smallk{t}}\bigr)\subseteq\rho_0\bigl(\mathsf{C}^{k+1,(t+1)}\bigr)$. Clearly, if $(G,H)$ is not in $\rho_0\bigl(\mathsf{gwl}_{k}^{\smallk{t}}\bigr)$ then the multisets of labels $\mathsf{wl}_k^{\smallk{t}}(G,\vv)$ and $\mathsf{wl}_k^{\smallk{t}}(H,\vw)$ differ. It is known that with each label $c$ one can associate a formula
$\varphi^c$ in $\mathsf{C}^{k+1,\smallk{t}}$ such that 
$\sem{\varphi^c}{\vv}_G^\B=\top$ if and only if $\mathsf{wl}_k^{\smallk{t}}(G,\vv)=c$. So, if the multisets are different, there must be a $c$ that occurs more often in one multiset than in the other one. This can be detected by a fomulae of the form $\exists^{=m}(x_1,\ldots,x_k)\varphi^c(x_1,\ldots,x_k)$ which is satisfied if there are $m$ tuples $\vv$ with label $c$. It is now easily verified that the latter formula can be converted into a formula in $\mathsf{C}^{k+1,(t+k)}$. Hence, the inclusion $\rho_0\bigl(\mathsf{C}^{k+1,(t+k)}\bigr)\subseteq\rho_0\bigl(\mathsf{gwl}_{k}^{\smallk{t}}\bigr)$ follows.

For $\rho_0\bigl(\mathsf{gwl}_{k}^{\smallk{t}}\bigr)\subseteq\rho_0\bigl(\mathsf{C}^{k+1,(t+1)}\bigr)$, we show that if $(G,H)$ is in $\rho_0\bigl(\mathsf{gwl}_{k}^{\smallk{t}}\bigr)$, then this 
implies that $\sem{\varphi}{\mu}_G^\B=\sem{\varphi}{\mu}_H^\B$ for all formulae in $\mathsf{C}^{k+1,(t+1)}$ and any valuation $\mu$ (notice that $\mu$ is superfluous in this definition when formulas have no free variables). Assume that $(G,H)$ is in $\rho_0(\mathsf{gwl}_{k}^{\smallk{t}})$. Since any formula of quantifier rank $t+1$ is a Boolean combination of formulas of less rank or a formula of the form $\varphi = \exists^{\geq m}x_i\, \psi$ where $\psi$ is of quantifier rank $t$,  without loss of generality consider a formula of the latter form, and assume for the sake of contradiction that $\sem{\varphi}{\mu}_G^\B = \top$ but 
$\sem{\varphi}{\mu}_H^\B = \bot$. Since $\sem{\varphi}{\mu}_G^\B = \top$, there must be at least $m$ elements satisfying $\psi$. More precisely, let $v_1,\dots,v_p$ in $G$ be all vertices in $G$ such that 
for each valuation $\mu[x \mapsto v_i]$ it holds that 
$\sem{\psi}{\mu[x\mapsto v_i}_G^\B = \top$. As mentioned, it must be that $p$ is at least $m$. Using again the fact that $\rho_{k}\bigl(\mathsf{wl}_{k}^{\smallk{t}}\bigr)=\rho_{k}\bigl(\mathsf{C}^{k+1,\smallk{t}}\bigr)$, we infer that the color 
 $\mathsf{wl}_k^{(t-1)}(G,(v_i,\ldots,v_i))$ is the same, for each such $v_i$. 

Now since $\mathsf{gwl}_k^{(t-1)}(G) = \mathsf{gwl}_k^{(t-1)}(H)$, it is not difficult to see that there must be exactly $p$ vertices $w_1,\dots,w_p$ in $H$ such that $\mathsf{wl}_k^{(t-1)}(G,(v_i,\ldots,v_i)) = \mathsf{wl}_k^{(t-1)}(H,(w_i,\ldots,w_i))$. Otherwise, it would simply not be the case that the aggregation step of the colors, assigned by $\WLk{k}$ is the same in $G$ and $H$. 
By the connection to logic, we again know that for valuation $\mu[x \mapsto w_i]$ it holds that 
$\sem{\psi}{\mu[x\mapsto w_i}_H^\B = \top$. It then follows that 
$\sem{\varphi}{\mu}_H^\B = \top$ for any valuation $\mu$, which was to be shown. 

Finally, we remark that  $\rho_0\bigl(\mathsf{gwl}_{k}^{(\infty)}\bigr)=\rho_0\bigl(\mathsf{C}^{k+1}\bigr)$ follows from the preceding inclusions in (2).
\end{proof}

Before moving to tensor languages, where we will
use infinitary logics to simulate expressions in $\FOLk{k}(\Omega)$ and $\GFk{2}(\Omega)$, we
recall that, when considering the separation power of logics, we can freely move between the logics and their infinitary counterparts:
\begin{theorem}\label{thm:ltoinfl}
The following identities hold for any $t\geq 0$, $k\geq 2$ and $s\geq 0$:
\begin{itemize}
    \item[(1)] $\rho_1\bigl(\mathsf{GC}^{\smallk{t}}_{\infty\omega}\bigr)=\rho_1\bigl(\mathsf{GC}^{\smallk{t}}\bigr)$;
    \item[(2)] $\rho_{s}\bigl(\mathsf{C}^{k,\smallk{t}}_{\infty\omega}\bigr)=\rho_{s}\bigl(\mathsf{C}^{k,\smallk{t}}\bigr)$.
    \end{itemize}
\end{theorem}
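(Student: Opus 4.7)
The plan is to prove both equalities by showing that on the finite class $\mathcal G_s$ of graphs on $[n]$ with labels in $\{0,1\}^\ell$, every infinitary formula is semantically equivalent to a finite one of the same quantifier rank. The inclusions $\rho_1(\mathsf{GC}^{\smallk{t}}_{\infty\omega}) \subseteq \rho_1(\mathsf{GC}^{\smallk{t}})$ and $\rho_s(\mathsf{C}^{k,\smallk{t}}_{\infty\omega}) \subseteq \rho_s(\mathsf{C}^{k,\smallk{t}})$ are immediate, since a syntactically richer logic induces a finer separation relation. The substance lies in the reverse inclusions.

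For (2), I would construct by induction on $t$ a family of characteristic (Hintikka) formulas $\chi_{[X]}(\vx) \in \mathsf{C}^{k,\smallk{t}}$, one per equivalence class $[X]$ of $\rho_s(\mathsf{C}^{k,\smallk{t}})$, such that $\sem{\chi_{[X]}}{\vv}_G^\B = \top$ iff $(G,\vv)\in [X]$. Since $\mathcal G_s$ is finite, only finitely many such classes exist, so each Hintikka formula is a finite conjunction. The base case $t=0$ is routine: the atomic type of a tuple (equalities, edges, and labels among the $k$ variables) is expressible by a rank-$0$ formula, and there are finitely many such types. In the inductive step, $\chi_{[X]}$ at rank $t+1$ conjoins the rank-$t$ Hintikka formula of $[X]$ with counting quantifiers $\exists^{= m_{[Y]}} x_i\, \chi_{[Y]}$ listing, for each rank-$t$ class $[Y]$, the exact number of extensions of the tuple into $[Y]$ (using that $\exists^{=m}$ is expressible as $\exists^{\geq m} \land \neg \exists^{\geq m+1}$). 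With these formulas in hand, any $\varphi \in \mathsf{C}^{k,\smallk{t}}_{\infty\omega}$ is semantically equivalent on $\mathcal G_s$ to the finite disjunction $\bigvee_{[X]\models\varphi} \chi_{[X]}$, so any pair that is $\mathsf{C}^{k,\smallk{t}}$-equivalent satisfies the same $\chi_{[X]}$ and hence the same $\varphi$, giving the converse inclusion.

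For (1) I would follow the same blueprint but keep the Hintikka recursion inside the guarded fragment. The guarantee that this is possible comes from Proposition~\ref{prop:wlandlogic}(1), which identifies $\rho_1(\mathsf{GC}^{\smallk{t}})$ with $\rho_1(\mathsf{cr}^{\smallk{t}})$: the equivalence classes are exactly the $\mathsf{cr}^{\smallk{t}}$-color classes, whose defining information propagates only along edges. Hence the inductive step builds $\chi_{[X]}$ by conjoining the rank-$t$ guarded Hintikka formula with \emph{guarded} counters $\exists^{= m_{[Y]}} x_2\, \bigl(E(x_1,x_2) \land \chi_{[Y]}(x_2)\bigr)$ counting neighbor types; the result stays in $\mathsf{GC}^{\smallk{t+1}}$, and the argument closes as before. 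The principal obstacle I anticipate is exactly this guardedness constraint: one must verify that no non-guarded quantifier is needed to separate two distinct $\mathsf{GC}^{\smallk{t}}$-classes, which is not a priori obvious since arbitrary $\mathsf{C}^2$-formulas of the same rank can in general be more separating. The color-refinement characterization from Proposition~\ref{prop:wlandlogic}(1) -- which tells us that neighbor-propagated information already suffices at every rank -- is precisely what makes the guarded Hintikka construction go through.
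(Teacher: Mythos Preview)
Your Hintikka-formula approach is correct and standard; the step you leave somewhat implicit---that the rank-$t$ characteristic formula determines the truth of all rank-$\leq t$ \emph{infinitary} formulas, not just the finitary ones---follows by the same induction on $t$ (infinitary connectives are handled pointwise, counting quantifiers by the recorded type-multiplicities), so there is no genuine gap. The paper proceeds differently: for (2) it simply cites Corollary~2.4 of \citet{O2017}, and for (1) it argues by contradiction on a fixed pair $(G,v),(H,w)$, finitising a purported distinguishing infinitary formula directly by observing that over two fixed finite graphs a subformula can exhibit only finitely many semantic behaviours (one truth value per valuation), so every infinitary disjunction can be pruned to a finite one preserving truth on $G$ and $H$; the contradiction then comes via the depth-$t$ unravelling characterisation of $\mathsf{GC}^{\smallk{t}}$-equivalence. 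Your argument is more uniform and self-contained---one characteristic formula per equivalence class, valid over all of $\mathcal G_s$ simultaneously, and treating (1) and (2) with the same machinery---whereas the paper's route is shorter but pair-specific for (1) and outsourced for (2). One minor point: your appeal to Proposition~\ref{prop:wlandlogic}(1) in the guarded case is not actually needed; the guarded Hintikka recursion is self-justifying, since every quantifier in a $\mathsf{GC}$ formula is already edge-guarded, so the inductive step only ever needs guarded neighbour counts and the induction establishing that the resulting formula pins down the full $\mathsf{GC}^{\smallk{t}}_{\infty\omega}$-type goes through without knowing in advance that $\mathsf{GC}^{\smallk{t}}$-equivalence coincides with $\mathsf{cr}^{\smallk{t}}$.
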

\begin{proof}
For identity (1), notice that we only need to prove that $ \rho_1\bigl(\mathsf{GC}^{\smallk{t}}\bigr) \subseteq \rho_1\bigl(\mathsf{GC}^{\smallk{t}}_{\infty\omega}\bigr)$, the other direction follows directly from the definition. We point out the well-known fact that two tuples $(G,v)$ and $(H,w)$ belong to $\rho_1\bigl(\mathsf{GC}^{\smallk{t}}\bigr)$ if and only if the \emph{unravelling} 
of $G$ rooted at $v$ up to depth $t$ is isomorphic to the \emph{unravelling} of $H$ rooted at $w$ up to root $t$. Here the \emph{unravelling} is the infinite tree whose root is the root node, and whose children are the neighbors of the root node (see e.g. \cite{barcelo2019logical,Otto2019GradedML}. Now for the connection with  infinitary logic. Assume that the unravellings of $G$ rooted at $v$ and of $H$ rooted at $w$ up to level $t$ are isomorphic, but assume for the sake of contradiction that there is a formula $\varphi(x)$ in $\mathsf{GC}^{\smallk{t}}_{\infty\omega}$ such that $\sem{\varphi}{\mu_v}_G^\B \neq \sem{\varphi}{\mu_w}_H^\B$, where $\mu_v$ and $\mu_w$ are any valuation mapping variable $x$ to $v$ and $w$, respectively. Now since $G$ and $H$ are finite graphs,  one can construct, from formula $\phi$, a formula $\phi'$ in $\mathsf{GC}^{\smallk{t}}$ such that 
$\sem{\psi}{\mu_v}_G^\B \neq \sem{\psi}{\mu_w}_H^\B$. Notice that this is in contradiction with our assumption that unravellings where isomorphic and therefore indistinguishable by formulae in $\mathsf{GC}^{\smallk{t}}$. 
To construct $\psi$, consider an infinitary disjunction $\bigvee_{a \in A} \alpha_a$. Since $G$ and $H$ have a finite number of vertices, and the formulae have a finite number of variables, the number of different valuations from the variables to the vertices in $G$ or $H$ is also 
finite. Thus, one can replace any extra copy of $\alpha_a$, $\alpha_{a'}$ such that their value is the same in $G$ and $H$. The final result is a finite disjunction, and the truth value over $G$ and $H$ is equivalent to the original infinitary disjunction. 

For identity (2) we refer to Corollary 2.4 in \citet{O2017}. 
\end{proof}

\subsection{From \texorpdfstring{$\FOL(\Omega)$}{TL(Ω)} to \texorpdfstring{$\mathsf{C}^k_{\infty\omega}$}{Cᵏ\_∞ω} and \texorpdfstring{$\mathsf{GC}_{\infty\omega}$}{GC\_∞ω}}\label{subsec:log-tl}
We are now finally ready to make the connection between expressions in $\FOL(\Omega)$ and the infinitary logics introduced earlier. 

\begin{proposition}\label{prop:TLtoL}
For any expression $\varphi(\vx)$ in $\FOLk{k}(\Omega)$ and $c\in\R$, there exists an expression $\tilde{\varphi}^c(\vx)$ in $\mathsf{C}^k_{\infty\omega}$ such that $\sem{\varphi}{\vv}_G=c$ if and only if $\sem{\tilde{\varphi}^c}{\vv}_G^\B=\top$
for any graph $G=(V_G,E_G,\mathsf{col}_G)$ in $\mathcal G$ and $\vv\in V_G^k$. 
Furthermore, if $\varphi(x)\in\GFk{2}(\Omega)$ then $\tilde{\varphi}^c\in \mathsf{GC}_{\infty\omega}$. Finally, if $\varphi$ has summation depth $t$ then $\tilde{\varphi}^c$ has quantifier rank $t$.
\end{proposition}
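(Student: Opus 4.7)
The proof will proceed by structural induction on $\varphi$, simultaneously for every target value $c \in \R$. The key idea is that although $\FOL(\Omega)$ expressions take real values, we only need to capture the Boolean condition ``$\sem{\varphi}{\vv}_G = c$'', and the infinitary disjunctions/conjunctions of $\mathsf{C}^k_{\infty\omega}$ let us range over all real-valued possibilities that realize this outcome. Throughout, the inductive hypothesis will be that $\tilde\varphi_1^{c'}$ exists for every subexpression $\varphi_1$ and every $c' \in \R$, with quantifier rank equal to the summation depth of $\varphi_1$, and living in $\mathsf{GC}_{\infty\omega}$ when $\varphi_1 \in \GFk{2}(\Omega)$.

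For the atomic cases, $\1_{x \mop y}$, $E(x,y)$, and $P_s(x)$ take values only in $\{0,1\}$ (using that we are in the discrete setting $\{0,1\}^\ell$), so $\tilde{\varphi}^c$ is either $(x \mop y)$, $E(x,y)$, $P_s(x)$ (for $c=1$), their negations (for $c=0$), or the constant false formula (for all other $c$). For Boolean combinations $\varphi_1 + \varphi_2 = c$, $\varphi_1 \cdot \varphi_2 = c$, $a \cdot \varphi_1 = c$, and $f(\varphi_1,\ldots,\varphi_p) = c$, we use an infinitary disjunction over all tuples $(c_1,\ldots,c_p) \in \R^p$ that realize the arithmetic/functional identity, conjoined with the inductively given formulas $\tilde\varphi_i^{c_i}$. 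None of these cases introduces new quantifiers, so quantifier rank is preserved correctly (taking the maximum, as summation depth does).

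The main obstacle is the summation case $\varphi = \sum_x \varphi_1$. Here I must express that $\sum_{v \in V_G} \sem{\varphi_1}{\nu[x \mapsto v]}_G = c$. The plan is to classify vertices by the value that $\varphi_1$ assigns them: for each function $\alpha : \R \to \Nb$ with finite support such that $\sum_{c' \in \R} \alpha(c') \cdot c' = c$, we require that exactly $\alpha(c')$ vertices satisfy $\tilde\varphi_1^{c'}$, for every $c'$. Using counting quantifiers, ``exactly $m$'' is definable as $\exists^{\geq m} x\,\psi \wedge \neg\exists^{\geq m+1} x\,\psi$. Thus we set
\[
\tilde{\varphi}^c \;:=\; \bigvee_{\alpha} \bigwedge_{c' \in \R} \exists^{= \alpha(c')} x\, \tilde{\varphi}_1^{c'}(x,\vec{y}),
\]
which is a legitimate $\mathsf{C}^k_{\infty\omega}$ formula since the index set of the disjunction may be uncountable. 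Variable reuse is respected (we quantify over the same $x$ bound in $\sum_x$), and the quantifier rank increases by exactly one, matching the increase in summation depth.

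For the guarded fragment, the only new subtlety is the guarded summation $\sum_{x_j}\bigl(E(x_i,x_j)\cdot\varphi_1(x_j)\bigr) = c$, which I will translate directly rather than through the product case (this avoids unnecessary introduction of unguarded subformulas). Exactly as above, I enumerate multiplicity profiles $\alpha$ with $\sum_{c'}\alpha(c')\cdot c' = c$ and write
\[
\tilde{\varphi}^c \;:=\; \bigvee_{\alpha} \bigwedge_{c' \in \R} \exists^{= \alpha(c')} x_j \bigl(E(x_i,x_j) \wedge \tilde{\varphi}_1^{c'}(x_j)\bigr),
\]
which is in the guarded form required by $\mathsf{GC}_{\infty\omega}$. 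All other $\GFk{2}(\Omega)$ constructors (addition, multiplication, and function application on expressions with a common single free variable) translate via the general cases above without introducing quantifiers, and the equality atoms $\1_{x_i = x_i}$ and $\1_{x_i \neq x_i}$ become the trivially true and false formulas. This completes the induction and matches the quantifier-rank/summation-depth correspondence claimed in the statement.
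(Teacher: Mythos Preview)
Your proposal is correct and follows essentially the same structural induction as the paper's proof. The only notable difference is cosmetic: in the summation case you parameterize by a finitely-supported multiplicity function $\alpha:\R\to\Nb$ and take an infinitary conjunction $\bigwedge_{c'\in\R}\exists^{=\alpha(c')}x\,\tilde\varphi_1^{c'}$, letting the $\alpha(c')=0$ conjuncts implicitly enforce that every vertex (or neighbor, in the guarded case) is accounted for; the paper instead enumerates finite lists $(c_1,\ldots,c_\ell)$ with multiplicities $(m_1,\ldots,m_\ell)$ and adds an explicit coverage clause ($\forall x\,\bigvee_i\tilde\varphi_1^{c_i}$ in the unguarded case, a degree constraint $\exists^{=m}x_2\,E(x_1,x_2)$ with $m=\sum_i m_i$ in the guarded case). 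Both encodings are equivalent and yield the same quantifier rank.
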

\begin{proof}
We define $\tilde{\varphi}^c$ inductively on the structure of expressions in $\FOLk{k}(\Omega)$.
\begin{itemize}
    \item $\varphi(x_i,x_j):=\1_{x_i\mop x_j}$. Assume first that $\mop$ is ``$=$''. 
    We distinguish between (a)~$i\neq j$ and (b)~$i=j$. 
    For case (a), if $c=1$, then we define $\tilde{\varphi}^1(x_i,x_j):=(x_i = x_j)$, if $c=0$, then we define 
$\tilde{\varphi}^0(x_i,x_j):=\neg (x_i = x_j)$, and if $c\neq 0,1$, then we define
$\tilde{\varphi}^c(x_i,x_j):=x_i\neq x_i$. 
For case (b), if $c=1$, then we define $\tilde{\varphi}^1(x_i,x_j):=(x_i=x_i)$, and for any $c\neq 1$, we define  $\tilde{\varphi}^c(x_i,x_j):=\neg (x_i= x_i)$.
The case when $\mop$ is ``$\neq$'' is treated analogously.

    \item $\varphi(x_i):=P_\ell(x_i)$. If $c=1$, then we  
    define $\tilde{\varphi}^1(x_i):=P_\ell(x_i)$, if $c=0$, then we define
     $\tilde{\varphi}^0(x_i):=\neg P_j(x_i)$. For all other $c$, 
     we define $\tilde{\varphi}^c(x_i,x_j):=\neg (x_i= x_i)$.

    \item $\varphi(x_i,x_j):=E(x_i,x_j)$. If $c=1$, then we  
    define $\tilde{\varphi}^1(x_i,x_j):=E(x_i,x_j)$, if $c=0$, then we define
     $\tilde{\varphi}^0(x_i,x_j):=\neg E(x_i,x_j)$. For all other $c$, 
     we define $\tilde{\varphi}^c(x_i,x_j):=\neg( x_i= x_i)$.
     
    \item $\varphi:=\varphi_1+\varphi_2$. We observe that $\sem{\varphi}{\vv}_G=c$
    if and only if there are $c_1, c_2\in\R$ such that 
    $\sem{\varphi_1}{\vv}_G=c_1$ and $\sem{\varphi_2}{\vv}_G=c_2$ and
    $c=c_1+c_2$. Hence, it suffices to define 
    $$
    \tilde{\varphi}^c:=\bigvee_{\substack{c_1,c_2\in\R\\c=c_1+c_2}}\tilde{\varphi}_1^{c_1}\land \tilde{\varphi}_2^{c_2},$$
    where $\tilde{\varphi}_1^{c_1}$ and $\tilde{\varphi}_2^{c_2}$ are the expressions such that $\sem{\varphi_1}{\vv}_G=c_1$ if and only if $\sem{\tilde{\varphi}_1^{c_1}}{\vv}_G^\B=\top$ and $\sem{\varphi_2}{\vv}_G=c_2$ if and only if $\sem{\tilde{\varphi}_2^{c_2}}{\vv}_G^\B=\top$, which exist by induction.
    
     \item $\varphi:=\varphi_1\cdot\varphi_2$. This is case is analogous to the previous one. Indeed, $\sem{\varphi}{\vv}_G=c$
    if and only if there are $c_1, c_2\in\R$ such that 
    $\sem{\varphi_1}{\vv}_G=c_1$ and $\sem{\varphi_2}{\vv}_G=c_2$ and
    $c=c_1\cdot c_2$. Hence, it suffices to define 
    $$
    \tilde{\varphi}^c:=\bigvee_{\substack{c_1,c_2\in\R\\c=c_1\cdot c_2}}\tilde{\varphi}_1^{c_1}\land \tilde{\varphi}_2^{c_2}.$$
    \item $\varphi:=a\cdot\varphi_1$. This is case is again dealt with in a similar way. Indeed,  $\sem{\varphi}{\vv}_G=c$
    if and only if there is a $c_1\in\R$ such that 
    $\sem{\varphi_1}{\vv}_G=c_1$ and
    $c=a\cdot c_1$. Hence, it suffices to define 
    $$
    \tilde{\varphi}^c:=\bigvee_{\substack{c_1\in\R\\c=a\cdot c_1}}\tilde{\varphi}_1^{c_1}.$$
    \item $\varphi:=f(\varphi_1,\ldots,\varphi_p)$
    with $f:\R^p\to \R$. We observe that $\sem{\varphi}{\vv}_G=c$
    if and only if there are $c_1,\ldots, c_p\in\R$ such that
    $c=f(c_1,\ldots,c_p)$ and  $\sem{\varphi_i}{\vv}_G=c_i$ for $i\in[p]$. Hence, it suffices to define 
    $$
    \tilde{\varphi}^c:=\bigvee_{\substack{c_1,\ldots,c_p\in\R\\c=f(c_1,\ldots,c_p)}}\tilde{\varphi}_1^{c_1}\land \cdots\land \tilde{\varphi}_p^{c_p}.$$

    \item $\varphi:=\sum_{x_i}\varphi_1$. We observe that $\sem{\varphi}{\mu}_G=c$ implies that we can partition
    $V_G$ into $\ell$ parts $V_1,\ldots,V_\ell$, of sizes $m_1,\ldots,m_\ell$, respectively, such that 
    $\sem{\varphi_1}{\mu[x_i\to v]}_G=c_i$ for each $v\in V_i$, and such that 
    all $c_i$'s are pairwise distinct and $c=\sum_{i=1}^\ell c_i\cdot m_i$.
    It now suffices to consider the following formula
    $$
        \tilde{\varphi}^c:=\bigvee_{\substack{\ell,m_1,\ldots,m_\ell\in\Nb\\
        c_1,\ldots,c_\ell\in\R\\
        c=\sum_{i=1}^\ell m_ic_i}} \bigwedge_{i=1}^\ell\exists^{=m_i}x_i\, \tilde{\varphi}_1^{c_i}\land \forall x_i\, \bigvee_{i=1}^\ell \tilde{\varphi}_1^{c_i}, 
    $$
    where $\exists^{=m_i}x_i\, \psi$ is shorthand notation for $\exists^{\geq m_i}x_i\,\psi \land \neg\exists^{\geq m_i+1} x_i\,\psi$, and $\forall x_i\,\psi$ denotes $\neg\exists^{\geq 1}x_i\,\neg\psi$. 
    \end{itemize}
This concludes the construction of $\tilde{\varphi}^c$. We observe that we only introduce a quantifiers when $\varphi=\sum_{x_i}\varphi_1$ and hence if we assume by induction that summation depth and quantifier rank are in sync, then if $\varphi_1$ has summation depth $t-1$ and thus $\tilde{\varphi}_1^c$ has quantifier rank $t-1$ for any $c\in \R$, then $\varphi$ has summation depth $t$,
and as can be seen from the definition of $\tilde{\varphi}^c$, this formula has quantifier rank $t$, as desired. 

It remains to verify the claim about guarded expressions. This is again verified by induction. The only case requiring some attention is $\varphi(x_1):=\sum_{x_2}\, E(x_1,x_2)\land\varphi_1(x_2)$ for which we can define 
    $$
        \tilde{\varphi}^c:=\bigvee_{\substack{\ell,m_1,\ldots,m_\ell\in\Nb\\
        c_1,\ldots,c_\ell\in\R\\
        c=\sum_{i=1}^\ell m_ic_i\\
        m = \sum_{i=1}^\ell m_i}}
        \exists^{=m}x_2 E(x_1,x_2) \land 
        \bigwedge_{i=1}^\ell\exists^{=m_i}x_2\, E(x_1,x_2)\land \tilde{\varphi}_1^{c_i}(x_2), 
    $$
which is a formula in $\mathsf{GC}$ again only adding one to the quantifier rank of the formulae $\tilde{\varphi}_1^c$ for $c\in\R$. So also here, we have the one-to-one correspondence between summation depth and quantifier rank.
\end{proof}

\subsection{Proof of Theorem~\ref{thm:mainsep-norounds}, \ref{thm:mainsep},
\ref{thm:mainsep-guarded} and~\ref{thm:mainsep-graph}}\label{subsec:proofs}

\begin{proposition}\label{prop:wltotl}
We have the following inclusions: For any $t\geq 0$ and any collection $\Omega$ of functions:
\begin{itemize}
    \item $\rho_1\bigl(\mathsf{cr}^{\smallk{t}}\bigr)\subseteq \rho_1\bigl(\GFk{2}^{\smallk{t}}(\Omega)\bigr)$; 
    \item $\rho_1\bigl(\mathsf{vwl}_k^{\smallk{t}}\bigr)\subseteq \rho_1\bigl(\FOLk{k+1}^{\smallk{t}}(\Omega)\bigr)$; and
    \item $\rho_0\bigl(\mathsf{gwl}_k^{\smallk{t}}\bigr)\subseteq \rho_0\bigl(\FOLk{k+1}^{(t+1)}(\Omega)\bigr)$.
\end{itemize}
\end{proposition}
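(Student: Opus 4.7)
The plan is to prove all three inclusions by the same uniform argument: chain together Proposition~\ref{prop:TLtoL} (tensor expression to infinitary formula), Theorem~\ref{thm:ltoinfl} (infinitary and finitary logic have the same separation power), and Proposition~\ref{prop:wlandlogic} (logic versus WL). Concretely, I would prove the contrapositive of each inclusion: if some expression in the tensor fragment distinguishes two inputs, then so does the corresponding WL (or $\mathsf{CR}$) labelling.

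For the first two inclusions, suppose $\bigl((G,v),(H,w)\bigr)$ lies in $\rho_1\bigl(\mathsf{cr}^{\smallk{t}}\bigr)$ (respectively, in $\rho_1\bigl(\mathsf{vwl}_k^{\smallk{t}}\bigr)$). Pick any expression $\varphi(x_1)$ in $\GFk{2}^{\!\smallk{t}}(\Omega)$ (respectively, $\FOLk{k+1}^{\!\smallk{t}}(\Omega)$) and set $c:=\sem{\varphi}{v}_G$. Proposition~\ref{prop:TLtoL} hands us a formula $\tilde{\varphi}^c(x_1)$ of the same quantifier rank in $\mathsf{GC}^{\smallk{t}}_{\infty\omega}$ (respectively, $\mathsf{C}^{k+1,\smallk{t}}_{\infty\omega}$) with $\sem{\tilde{\varphi}^c}{v}_G^\B=\top$. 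By Theorem~\ref{thm:ltoinfl} the infinitary and finitary versions of these logics yield the same $\rho_1$, and by Proposition~\ref{prop:wlandlogic}(1)--(2) these agree with $\rho_1\bigl(\mathsf{cr}^{\smallk{t}}\bigr)$ and $\rho_1\bigl(\mathsf{vwl}_k^{\smallk{t}}\bigr)$, respectively. So $(G,v)$ and $(H,w)$ are indistinguishable by $\tilde{\varphi}^c$, giving $\sem{\tilde{\varphi}^c}{w}_H^\B=\top$ and hence $\sem{\varphi}{w}_H=c=\sem{\varphi}{v}_G$. Since $\varphi$ was arbitrary, $(G,v)$ and $(H,w)$ lie in the required tensor-language equivalence.

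For the graph inclusion, I take an expression $\varphi$ in $\FOLk{k+1}^{\!(t+1)}(\Omega)$ without free variables and proceed identically: Proposition~\ref{prop:TLtoL} yields $\tilde{\varphi}^c$ of quantifier rank $t+1$ in $\mathsf{C}^{k+1,(t+1)}_{\infty\omega}$, Theorem~\ref{thm:ltoinfl}(2) transfers to the finitary logic, and Proposition~\ref{prop:wlandlogic}(2) gives $\rho_0\bigl(\mathsf{gwl}_k^{\smallk{t}}\bigr)\subseteq \rho_0\bigl(\mathsf{C}^{k+1,(t+1)}\bigr)$. The extra $+1$ in summation depth that appears in the statement matches exactly the extra $+1$ in quantifier rank in Proposition~\ref{prop:wlandlogic}(2), which in turn accounts for the aggregation over all $k$-tuples implicit in $\mathsf{gwl}_k^{\smallk{t}}$. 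Again taking $c:=\sems{\varphi}_G$ yields $\sems{\varphi}_H=c$, as required.

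The whole argument is largely bookkeeping once Proposition~\ref{prop:TLtoL} is in hand. The main obstacle, which that proposition already discharges, is the translation of a real-valued tensor expression (with arbitrary continuous $f\in\Omega$) into a Boolean characteristic formula $\tilde{\varphi}^c$ while preserving both the number of index variables and the summation depth; the trick there is the use of \emph{infinitary} disjunctions indexed by all tuples of reals summing/multiplying/applying $f$ to $c$, which is precisely why one must pass through $\mathsf{C}^k_{\infty\omega}$ and $\mathsf{GC}_{\infty\omega}$ before descending back to their finitary counterparts via Theorem~\ref{thm:ltoinfl}. No other subtlety arises; in particular, no property of the functions in $\Omega$ is used, which gives the ``independence from $\Omega$'' observation made in the paper.
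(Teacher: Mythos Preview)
Your proposal is correct and follows essentially the same route as the paper's own proof: chain Proposition~\ref{prop:TLtoL}, Theorem~\ref{thm:ltoinfl}, and Proposition~\ref{prop:wlandlogic}, with the third bullet using the $\rho_0\bigl(\mathsf{gwl}_k^{\smallk{t}}\bigr)\subseteq\rho_0\bigl(\mathsf{C}^{k+1,(t+1)}\bigr)$ inclusion of Proposition~\ref{prop:wlandlogic}(2). The only cosmetic difference is that the paper argues by contraposition (starting from an expression that separates and pushing toward WL), whereas you phrase it directly (starting from WL-equivalence and showing agreement on every expression); both rely on the ``iff'' in Proposition~\ref{prop:TLtoL} and are logically the same argument.
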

\begin{proof}
We first show the second bullet by contraposition. That is, we show that if $(G,v)$ and $(H,w)$ are not in $\rho_1\bigl(\FOLk{k+1}^{\smallk{t}}(\Omega)\bigr)$, then neither are they in $\rho_1\bigl(\mathsf{vwl}_k^{\smallk{t}}\bigr)$. Indeed, suppose that there exists an expression
$\varphi(x_1)$ in $\FOLk{k+1}^{\smallk{t}}(\Omega)$ such that $\sem{\varphi}{v}_G=c\neq c'=\sem{\varphi}{w}_H$. From Proposition~\ref{prop:TLtoL} we know that there exists a formula $\tilde{\varphi}^c$ in $\mathsf{C}^{k+1,\smallk{t}}_{\infty\omega}$ such that 
$\sem{\tilde{\varphi}^c}{v}_G^\B=\top$ and 
$\sem{\tilde{\varphi}^c}{w}_H^\B=\bot$. Hence, $(G,v)$ and $(H,w)$ do no belong to $\rho_1\bigl(\mathsf{C}^{k+1,\smallk{t}}_{\infty\omega}\bigr)$.  Theorem~\ref{thm:ltoinfl} implies that  $(G,v)$ and $(H,w)$ also do not belong to $\rho_1\bigl(\mathsf{C}^{k+1,\smallk{t}}\bigr)$. Finally, Proposition~\ref{prop:wlandlogic}  implies that $(G,v)$ and $(H,w)$ do not belong to $\rho_1(\mathsf{vwl}_k^{\smallk{t}}\bigr)$, as desired.
The third bullet is shown in precisely the same, but using the identities for $\rho_0$ rather than $\rho_1$, and $\mathsf{gwl}_k^{\smallk{t}}$ rather than $\mathsf{vwl}_k^{\smallk{t}}$.

Also the first bullet is shown in the same way, using the connection between $\GFk{2}^{\smallk{t}}(\Omega)$, $\mathsf{GC}_{\infty\omega}^{2,\smallk{t}}$, $\mathsf{GC}^{\smallk{t}}$ and $\mathsf{cr}^{\smallk{t}}$, as given by
Proposition~\ref{prop:wlandlogic}, Theorem~\ref{thm:ltoinfl}, and
Proposition~\ref{prop:TLtoL}.
\end{proof}

We next show that our tensor languages are also more separating than the color refinement and $k$-dimensional Weisfeiler-Leman algorithms.
\begin{proposition}\label{prop:tltowl}
We have the following inclusions: For any $t\geq 0$ and any collection $\Omega$ of functions:
\begin{itemize}
    \item $\rho_1\bigl(\GFk{2}^{\smallk{t}}(\Omega)\bigr)\subseteq \rho_1\bigl(\mathsf{cr}^{\smallk{t}}\bigr)$; 
      \item $\rho_1\bigl(\FOLk{k+1}^{\smallk{t}}(\Omega)\bigr)\subseteq \rho_1\bigl(\mathsf{vwl}_k^{\smallk{t}}\bigr)$; and
     \item   $\rho_0\bigl(\FOLk{k+1}^{(t+k)}(\Omega)\bigr)\subseteq \rho_0\bigl(\mathsf{gwl}_k^{\smallk{t}}\bigr)$.
\end{itemize}
\end{proposition}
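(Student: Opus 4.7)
The strategy inverts Proposition~\ref{prop:wltotl}, going from logic to tensor language. By contraposition, each inclusion reduces to building, from a logic formula separating the inputs, a tensor expression in the prescribed fragment with matching resource bounds. Proposition~\ref{prop:wlandlogic} supplies the initial step: if $\mathsf{cr}^{\smallk{t}}$ (respectively $\mathsf{vwl}_k^{\smallk{t}}$, $\mathsf{gwl}_k^{\smallk{t}}$) separates the inputs, then so does some formula in $\mathsf{GC}^{\smallk{t}}$ (respectively $\mathsf{C}^{k+1,\smallk{t}}$, $\mathsf{C}^{k+1,(t+k)}$). The extra $+k$ in the third case is exactly the inclusion $\rho_0(\mathsf{C}^{k+1,(t+k)}) \subseteq \rho_0(\mathsf{gwl}_k^{\smallk{t}})$ from Proposition~\ref{prop:wlandlogic}(2), reflecting that graph-level sentences need additional quantifiers to materialise the $k$-tuples that $\mathsf{gwl}_k$ labels directly.

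The core is a structurally inductive translation $\varphi \mapsto \hat\varphi$ from logic formulas to $\FOL(\Omega)$ expressions such that $\sem{\hat\varphi}{\mu}_G \in \{0,1\}$ and $\sem{\hat\varphi}{\mu}_G = 1$ iff $\sem{\varphi}{\mu}_G^\B = \top$. Atoms translate directly: $(x_i = x_j) \mapsto \1_{x_i = x_j}$, $E(x_i,x_j) \mapsto E(x_i,x_j)$, $P_s(x_i) \mapsto P_s(x_i)$, using that colors live in $\{0,1\}^\ell$. Conjunction becomes multiplication, $\widehat{\varphi_1 \land \varphi_2} := \hat\varphi_1 \cdot \hat\varphi_2$, and the negation of an atom uses $\1_{x=x} - \hat\varphi$, taking $x$ among the atom's free variables so the result has the right free variables. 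For a counting quantifier, the sum $S := \sum_x \hat\varphi$ lies in $\{0,1,\dots,n\}$ since $\hat\varphi \in \{0,1\}$ and graphs have fixed size $n$; Lagrange interpolation yields a polynomial $p_m$ of degree at most $n$ with $p_m(s) = 1$ iff $s \geq m$, and we set $\widehat{\exists^{\geq m} x\, \varphi} := p_m(S)$, realised by scalar multiplications, products, and additions in $\FOL$ — no function from $\Omega$ is invoked. Negations of counting quantifiers use the polynomial $1 - p_m$ in the same way. As a preprocessing step, negations are pushed inward (via $\neg(\varphi_1 \land \varphi_2) \equiv \neg\varphi_1 \lor \neg\varphi_2$ and $\neg \exists^{\geq m} \equiv \exists^{<m}$) so that $\neg$ only occurs on atoms and counting quantifiers; this rewriting preserves quantifier rank and variable count.

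Three invariants must then be verified by induction. First, free variables are preserved, so $k{+}1$-variable formulas yield $k{+}1$-variable expressions, handling the second and third bullets. Second, guardedness is preserved: the guarded quantifier $\exists^{\geq m} x_j\bigl(E(x_i,x_j) \land \varphi(x_j)\bigr)$ translates to $p_m\bigl(\sum_{x_j} E(x_i,x_j) \cdot \hat\varphi(x_j)\bigr)$, a legitimate $\GFk{2}$ expression since the guarded sum has sole free variable $x_i$ and every Boolean combination above it operates on expressions sharing that single free variable. Third, summation depth equals quantifier rank, because only the counting-quantifier clause introduces a $\sum$, exactly once per $\exists^{\geq m}$, while multiplications, additions, and scalar multiplications take the maximum of operand depths.

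The principal obstacle is realising the counting quantifier inside $\FOL(\Omega)$ without relying on any function in $\Omega$, as the result must hold for arbitrary $\Omega$, including $\Omega = \emptyset$. A naive thresholding function is unavailable; the resolution is to exploit the fixed graph size $n$ and encode the ``$\geq m$'' indicator as a polynomial on the finite domain $\{0,1,\dots,n\}$ via Lagrange interpolation, yielding an expression built solely from $\FOL$'s intrinsic arithmetic. A secondary subtlety is that $\FOL$ admits no constant $1$ without free variables; the inward push of negation confines its use to atoms and counting quantifiers, both of which always carry a free variable from the surrounding context, so $\1_{x=x}$ invariably supplies the needed constant without inflating summation depth.
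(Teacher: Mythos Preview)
Your proposal is correct and follows essentially the same route as the paper: contraposition via Proposition~\ref{prop:wlandlogic}, then an inductive translation from $\mathsf{C}^{k+1}$ (resp.\ $\mathsf{GC}$) into $\FOLk{k+1}$ (resp.\ $\GFk{2}$), with the counting quantifier realised by an interpolating polynomial on $\{0,\dots,n\}$ so that no function from $\Omega$ is needed. The paper is slightly cleaner in one respect: it handles negation uniformly at any level via $\neg\varphi_1 \mapsto \1_{x_i=x_i} - \hat\varphi_1$, which avoids your preprocessing of pushing negations inward---a step that introduces disjunctions your translation never explicitly treats (the obvious $\hat\varphi_1 + \hat\varphi_2 - \hat\varphi_1\cdot\hat\varphi_2$ works, but you should say so).
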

\begin{proof}
For any of these inclusions to hold, for any $\Omega$, we need to show the inclusion without the use of any functions. We again use the connections between the color refinement and $k$-dimensional Weisfeiler-Leman algorithms and finite variable logics as stated in Proposition~\ref{prop:wlandlogic}. More precisely, we show for any formula $\varphi(\vx)\in\mathsf{C}^{k,\smallk{t}}$ there exists an expression $\hat\varphi(\vx)\in\FOLk{k}^{\smallk{t}}$ such that for any graph $G$ in $\mathcal G$, $\sem{\varphi}{\vv}_G^\B=\top$ implies  $\sem{\hat\varphi}{\vv}_G=1$ and $\sem{\varphi}{\vv}_G^\B=\bot$ implies  $\sem{\hat\varphi}{\vv}_G=0$. 
By appropriately selecting $k$ and $t$ and by observing that when $\varphi(x)\in\mathsf{GC}$ then $\hat\varphi(x)\in \GFk{k}$, the inclusions follow. 

The construction of $\hat\varphi(\vx)$ is by induction on the structure of formulae in $\mathsf{C}^k$.
\begin{itemize}
    \item $\varphi:=(x_i=x_j)$. Then, we define $\hat\varphi:=\1_{x_i=x_j}$.
    \item $\varphi:=P_\ell(x_i)$. Then, we define $\hat\varphi:=P_\ell(x_i)$.
    \item $\varphi:=E(x_i,x_j)$. Then, we define $\hat\varphi:=E(x_i,x_j)$.
    \item $\varphi:=\neg\varphi_1$. Then, we define $\hat\varphi:=\1_{x_i=x_i}-\hat\varphi_1$.
    \item $\varphi:=\varphi_1\land \varphi_2$. Then, we define $\hat\varphi:=\hat{\varphi}_1\cdot\hat{\varphi}_2$.
    \item $\varphi:=\exists^{\geq m}x_i\, \varphi_1$. Consider a polynomial 
    $p(x):=\sum_{j}a_jx^j$ such that $p(x)=0$ for $x\in\{0,1,\ldots,m-1\}$ and $p(x)=1$ for $x\in\{m,m+1,\ldots,n\}$. Such a polynomial exists by interpolation. Then, we define
    $\hat\varphi:=\sum_{j}a_j\bigl(\sum_{x_i}\, \hat{\varphi}_1\bigr)^j$.
\end{itemize}
We remark that we here crucially rely on the assumption that $\mathcal G$ contains graphs of fixed size $n$ and that $\FOLk{k}$ is closed under linear combinations and product. Clearly, if $\varphi\in\mathsf{GC}$, then the above translations results in an expression $\hat\varphi\in\GFk{2}(\Omega)$. Furthermore, the quantifier rank of $\varphi$ is in one-to-one correspondence to the summation depth of $\hat\varphi$.

We can now apply Proposition~\ref{prop:wlandlogic}.
That is, if $(G,v)$ and $(H,w)$ are not in $\rho_1\bigl(\mathsf{cr}^{\smallk{t}}\bigr)$ then by Proposition~\ref{prop:wlandlogic}, there exists a formula $\varphi(x)$ in $\mathsf{GC}^{\smallk{t}}$ such that 
$\sem{\varphi}{v}_G^\B=\top\neq \sem{\varphi}{w}_H^\B=\bot$. We have just shown when we consider $\tilde{\varphi}$, in $\GFk{2}^{\smallk{t}}$, also $\sem{\tilde\varphi}{v}_G\neq \sem{\tilde\varphi}{w}_H$ holds. Hence, $(G,v)$ and $(H,w)$ are not in $\rho_1\bigl(\GFk{2}^{\smallk{t}}(\Omega)\bigr)$ either, for any $\Omega$. Hence, $\rho_1\bigl(\GFk{2}^{\smallk{t}}(\Omega)\bigr)\subseteq \rho_1\bigl(\mathsf{cr}^{\smallk{t}}\bigr)$ holds.
The other bullets are shown in the same way, again by relying on Proposition~\ref{prop:wlandlogic} and using that we can move from $\mathsf{vwl}_k^{\smallk{t}}$ and $\mathsf{gwl}_k^{\smallk{t}}$ to logical formulae, and to expressions in $\FOLk{k+1}^{\smallk{t}}$ and $\FOLk{k+1}^{(t+k)}$, respectively, to separate $(G,v)$ from $(H,w)$ or $G$ from $H$, respectively.
\end{proof}

Theorems~\ref{thm:mainsep-norounds}, \ref{thm:mainsep},
\ref{thm:mainsep-guarded} and~\ref{thm:mainsep-graph} now follow directly from Propositions~\ref{prop:wltotl} and \ref{prop:tltowl}.

\subsection{Other aggregation functions}\label{subsec:aggr}
As is mentioned in the main paper, our upper bound results on the separation power of tensor languages (and hence also of $\GNNs$ represented in those languages) generalize easily when other aggregation functions than summation are used in $\FOL$ expressions.

To clarify what we understand by an aggregation function, let us first recall the semantics of summation aggregation. Let $\varphi:=\sum_{x_i}\varphi_1$, where $\sum_{x_i}$ represents summation aggregation,  let $G=(V_G,E_G,\mathsf{col}_G)$ be a graph,
and  let $\nu$  be a valuation assigning  index variables to vertices in $V_G$. The semantics is then given by:
$$
\sem{\textstyle\sum_{x_i}\! \varphi_1}{\nu}_G:=\sum_{v\in V_G}\sem{\varphi_1}{\nu[x_i\mapsto v]}_G,
$$
as explained in Section~\ref{sec:tensorlang}.
Semantically, we can alternatively view $\sum_{x_i}\varphi_1$ as a function which takes the sum of the elements in the following multiset of real values:
$$\lbag \sem{\varphi_1}{\nu[x_i\mapsto v]}_G\mid v\in V_G\rbag.$$
One can now consider, more generally, an \textit{aggregation function} $F$ as a function which assigns to any multiset of values in $\R$ a single real value. For example, $F$ could be $\mathsf{max}$, $\mathsf{min}$, $\mathsf{mean}$, $\ldots$. Let $\Theta$ be such a collection of aggregation functions. We next incorporate general aggregation function in tensor language.

First, we extend the syntax of expressions in $\FOL(\Omega)$ by generalizing the construct $\sum_{x_i} \varphi$ in the grammar of $\FOL(\Omega)$ expression.
More precisely, we define $\FOL(\Omega,\Theta)$ as the class of expressions, formed just like tensor language expressions, but in which  two additional constructs, \textit{unconditional} and \textit{conditional aggregation}, are allowed. For an aggregation function $F$ we define:
$$\mathsf{aggr}_{x_j}^F(\varphi) \  \  \  \text{  and  }\  \  \   \mathsf{aggr}_{x_j}^F\bigl(\varphi(x_j) \mid E(x_i,x_j)\bigr),$$
where in the latter construct (conditional aggregation) the expression $\varphi(x_j)$ represents a $\FOL(\Omega,\Theta)$ expression whose only free variable is $x_j$.
The intuition behind these constructs is that unconditional aggregation $\mathsf{aggr}_{x_j}^F(\varphi)$ allows for aggregating, using aggregate function $F$, over the values of $\varphi$ where $x_j$ ranges unconditionally over \textit{all} vertices in the graph. In contrast, for conditional aggregation $\mathsf{aggr}_{x_j}^F\bigl(\varphi(x_j)\mid E(x_i,x_j)\bigr)$, aggregation by $F$ of the values of $\varphi(x_j)$ is conditioned on the neighbors of the vertex assigned to $x_i$. That is, the vertices for $x_j$ range only among the neighbors of the vertex assigned to $x_i$.

More specifically, the semantics of the aggregation constructs is defined as follows: 
\begin{align*}
\sem{\mathsf{aggr}_{x_j}^F(\varphi)}{\nu}_G& := F\left(\lbag \sem{\varphi}{\nu[x_j\mapsto v]}_G\mid v\in V_G\rbag\right)\in\R. \\
\sem{\mathsf{aggr}_{x_j}^F\bigl(\varphi(x_j) \mid E(x_i,x_j)\bigr)}{\nu}_G& := F\left(\lbag \sem{\varphi}{\nu[x_j\mapsto v]}_G\mid v\in V_G,  (\nu(x_i),v)\in E_G\rbag\right)\in\R.
\end{align*}
We remark that we can also consider aggregations functions
$F$ over multisets of values in $\R^\ell$ for some $\ell\in\Nb$. 
This requires extending the syntax with $\mathsf{aggr}_{x_j}^F(\varphi_1,\ldots,\varphi_\ell)$ for unconditional aggregation and with  $\mathsf{aggr}_{x_j}^F\bigl(\varphi_1(x_j),\ldots,\varphi_\ell(x_j)\mid E(x_i,x_j)\bigr)$ for conditional aggregation.
The semantics is as expected: 
$F(\lbag ((\sem{\varphi_1}{\nu[x_j\mapsto v]}_G,\ldots,\sem{\varphi_\ell}{\nu[x_j\mapsto v]}_G)\mid v\in V_G\rbag)\in\R$ and 
$F(\lbag ((\sem{\varphi_1}{\nu[x_j\mapsto v]}_G,\ldots,\sem{\varphi_\ell}{\nu[x_j\mapsto v]}_G)\mid v\in V_G, (\nu(x_i),v)\in E_G\rbag)\in\R$.

The need for considering conditional and unconditional aggregation separately is due to the use of arbitrary
aggregation functions. Indeed, suppose that one uses an aggregation function $F$ for which $0\in\R$ is a \textit{neutral value}. That is, for any multiset $X$ of real values, the equality $F(X)=F(X\uplus\{0\})$ holds. For example, the summation aggregation function satisfies this property. We then observe: 
\begin{align*}
\sem{\mathsf{aggr}_{x_j}^F\bigl(\varphi(x_j) \mid E(x_i,x_j)\bigr)}{\nu}_G&=F(\lbag\sem{\varphi}{\nu[x_j\mapsto v]}\mid v\in V_G, (\nu(x_i),v)\in E_G\rbag\bigr)\\
&=F(\lbag\sem{\varphi\cdot E(x_i,x_j)}{\nu[x_j\mapsto v]}\mid v\in V_G\rbag\bigr)\\
&=\sem{\mathsf{aggr}_{x_j}^F(\varphi(x_j)\cdot E(x_i,x_j))}{\nu}_G.
\end{align*}
In other words, unconditional aggregation can simulate conditional aggregation. In contrast, when $0$ is not a neutral value of the aggregation function $F$, conditional and unconditional aggregation behave differently. Indeed, in such cases
$\mathsf{aggr}_{x_j}^F\bigl(\varphi(x_j) \mid E(x_i,x_j)\bigr)$ and $\mathsf{aggr}_{x_j}^F(\varphi(x_j)\cdot E(x_i,x_j))$ may evaluate to different values, as illustrated in the following example.

As aggregation function $F$ we take the average $\mathsf{avg}(X):=\frac{1}{|X|}\sum_{x\in X} x$ for multisets $X$ of real values. We remark that $0$'s in $X$ contribute to the size of $X$ and hence $0$ is \textit{not} a neutral element of $\mathsf{avg}$. Now, let us consider the expressions
$$
\varphi_1(x_i):=\mathsf{aggr}_{x_j}^{\mathsf{avg}}(\1_{x_j=x_j}\cdot E(x_i,x_j)) \text{ and }
\varphi_2(x_i):=\mathsf{aggr}_{x_j}^{\mathsf{avg}}(\1_{x_j=x_j}\mid E(x_i,x_j)).
$$
Let $\nu$ be such that $\nu(x_i)=v$. Then,
$\sem{\varphi_1}{\nu}_G$ results in applying the average to the multiset 
$\lbag \1_{w=w}\cdot E(v,w)\mid w\in V_G\rbag
$
which includes the value $1$ for every $w\in N_G(v)$ and a $0$ for every non-neighbor $w\not\in N_G(v)$. In other words, $\sem{\varphi_1}{\nu}_G$ results in $|N_G(v)|/|V_G|$. In contrast, 
$\sem{\varphi_2}{\nu}_G$ results in applying the average
to the multiset $\lbag \1_{w=w}\mid w\in V_G, (v,w)\in E_G\rbag$. In other words, this multiset only contains the value $1$ for each $w\in N_G(v)$, ignoring any information about the non-neighbors of $v$. In other words, $\sem{\varphi_2}{\nu}_G$ results in $|N_G(v)|/|N_G(v)|=1$. Hence, conditional and unconditional aggregation behave differently for the average aggregation function.

This said, one could alternative use a more general variant of conditional aggregation of the form $\mathsf{aggr}_{x_j}^F(\varphi|\psi)$ with as semantics $\sem{\mathsf{aggr}_{x_j}^F(\varphi|\psi)}{\nu}_G:=F\bigl(\{\!\{
\sem{\varphi}{\nu[x_j\to v]}_G\mid
v\in V_G, \sem{\psi}{\nu[x_j\to v}_G\neq 0
\}\!\}
\bigr)$ where one creates a multiset only for those valuations $\nu[x_j\to v]$ for which the condition $\psi$ evaluates to a non-zero value. This general form of aggregation includes conditional aggregation, by replacing $\psi$ with $E(x_i,x_j)$ and restricting $\varphi$, and unconditional aggregation, by replacing $\psi$ with the constant function $1$, e.g., $\1_{x_j=x_j}$. In order not to overload the syntax of $\FOL$ expressions, we will not discuss this general form of aggregation further.

The notion of free index variables for expressions in $\FOL(\Omega,\Theta)$ is defined as before, where now
$\mathsf{free}(\mathsf{aggr}_{x_j}^F(\varphi)):=\mathsf{free}(\varphi)\setminus \{x_j\}$, and where 
$\mathsf{free}(\mathsf{aggr}_{x_j}^F\bigl(\varphi(x_j)\mid E(x_i,x_j)):= \{x_i\}$ (recall that $\mathsf{free}(\varphi(x_j))=\{x_j\}$ in conditional aggregation).
Moreover, summation depth is replaced by the notion of \textit{aggregation depth}, $\mathsf{agd}(\varphi)$, defined in the same way as summation depth except that $\mathsf{agd}(\mathsf{aggr}_{x_j}^F(\varphi)):=\mathsf{agd}(\varphi)+1$ and 
$\mathsf{agd}(\mathsf{aggr}_{x_j}^F(\varphi(x_j)\mid E(x_i,x_j)):=\mathsf{agd}(\varphi)+1$.
Similarly, the fragments $\FOLk{k}(\Omega,\Theta)$ and its aggregation depth restricted fragment $\FOLk{k}^{\smallk{t}}(\Omega,\Theta)$ are defined as before, using aggregation depth rather than summation depth.

For the guarded fragment,  $\GFk{2}(\Omega,\Theta)$, expressions are now restricted such that aggregations must occur only in the form 
$\mathsf{aggr}_{x_j}^F(\varphi(x_j) \mid E(x_i,x_j))$, for $i,j\in[2]$.
In other words, aggregation only happens on multisets of values obtained from neighboring vertices.

We now argue that our upper bound results on the separation power remain valid for the extension $\FOL(\Omega,\Theta)$ of $\FOL(\Omega)$ with arbitrary aggregation functions $\Theta$.
\begin{proposition}\label{prop:TLotheragg}
We have the following inclusions: For any $t\geq 0$, any collection $\Omega$ of functions and any collection $\Theta$ of aggregation functions:
\begin{itemize}
    \item $\rho_1\bigl(\mathsf{cr}^{\smallk{t}}\bigr)\subseteq \rho_1\bigl(\GFk{2}^{\smallk{t}}(\Omega,\Theta)\bigr)$; 
    \item $\rho_1\bigl(\mathsf{vwl}_k^{\smallk{t}}\bigr)\subseteq \rho_1\bigl(\FOLk{k+1}^{\smallk{t}}(\Omega,\Theta)\bigr)$; and
    \item $\rho_0\bigl(\mathsf{gwl}_k^{\smallk{t}}\bigr)\subseteq \rho_0\bigl(\FOLk{k+1}^{(t+1)}(\Omega,\Theta)\bigr)$.
\end{itemize}
\end{proposition}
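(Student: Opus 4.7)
The plan is to generalize Proposition~\ref{prop:TLtoL}: for every expression $\varphi(\vx)\in\FOLk{k}(\Omega,\Theta)$ of aggregation depth $t$ and every $c\in\R$, I construct a formula $\tilde{\varphi}^c(\vx)\in\mathsf{C}^{k}_{\infty\omega}$ of quantifier rank $t$ such that $\sem{\varphi}{\vv}_G=c$ iff $\sem{\tilde{\varphi}^c}{\vv}_G^\B=\top$, and such that if $\varphi\in\GFk{2}(\Omega,\Theta)$ then $\tilde{\varphi}^c\in\mathsf{GC}_{\infty\omega}$. Once this generalized translation is in place, the three inclusions of Proposition~\ref{prop:TLotheragg} follow by contraposition exactly as in the proof of Proposition~\ref{prop:wltotl}: any separating expression in $\FOLk{k+1}^{\!\smallk{t}}(\Omega,\Theta)$ (or $\GFk{2}^{\!\smallk{t}}(\Omega,\Theta)$) yields a separating formula in $\mathsf{C}^{k+1,\smallk{t}}_{\infty\omega}$ (resp.\ $\mathsf{GC}^{\smallk{t}}_{\infty\omega}$), which by Theorem~\ref{thm:ltoinfl} yields one in $\mathsf{C}^{k+1,\smallk{t}}$ (resp.\ $\mathsf{GC}^{\smallk{t}}$), which by Proposition~\ref{prop:wlandlogic} means the pair is separated by the corresponding $\WLk{k}$ (or $\mathsf{CR}$) labelling.

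The base cases and the inductive cases for $+$, $\cdot$, $a\cdot$, and function application $f$ are literally those in the proof of Proposition~\ref{prop:TLtoL}; no change is needed. The only new cases are the two aggregation constructs. For unconditional aggregation $\varphi=\mathsf{aggr}_{x_j}^F(\varphi_1)$, I use that $F$ depends only on the multiset it is applied to: $\sem{\varphi}{\nu}_G=c$ iff there exist $\ell\in\Nb$, pairwise distinct $c_1,\ldots,c_\ell\in\R$ and positive integers $m_1,\ldots,m_\ell$ with $\sum_i m_i=|V_G|$ such that $F(\lbag c_1^{[m_1]},\ldots,c_\ell^{[m_\ell]}\rbag)=c$ and $V_G$ partitions into cells of sizes $m_i$ on which $\varphi_1$ evaluates to $c_i$. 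This yields
\[
\tilde{\varphi}^c := \bigvee_{\substack{\ell,m_1,\ldots,m_\ell\in\Nb\\ c_1,\ldots,c_\ell\in\R \text{ distinct}\\ F(\lbag c_1^{[m_1]},\ldots,c_\ell^{[m_\ell]}\rbag)=c}} \Bigl(\bigwedge_{i=1}^\ell \exists^{=m_i}x_j\,\tilde{\varphi}_1^{c_i}\;\wedge\;\forall x_j\bigvee_{i=1}^\ell \tilde{\varphi}_1^{c_i}\Bigr),
\]
which has the same variables as $\varphi$ and quantifier rank one more than that of any $\tilde{\varphi}_1^{c_i}$, matching the aggregation depth of $\varphi$. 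For conditional aggregation $\varphi=\mathsf{aggr}_{x_j}^F(\varphi_1(x_j)\mid E(x_i,x_j))$, the same idea applies but the partition is of $N_G(\nu(x_i))$ rather than of $V_G$, yielding the \emph{guarded} formula
\[
\tilde{\varphi}^c := \bigvee_{\substack{\ell,m_1,\ldots,m_\ell\in\Nb\\ c_1,\ldots,c_\ell\in\R\text{ distinct}\\ F(\lbag c_1^{[m_1]},\ldots,c_\ell^{[m_\ell]}\rbag)=c}} \Bigl(\bigwedge_{i'=1}^\ell \exists^{=m_{i'}}x_j\bigl(E(x_i,x_j)\wedge\tilde{\varphi}_1^{c_{i'}}(x_j)\bigr)\;\wedge\;\forall x_j\bigl(E(x_i,x_j)\to\bigvee_{i'=1}^\ell \tilde{\varphi}_1^{c_{i'}}(x_j)\bigr)\Bigr),
\]
where $\exists^{=m}$ and $\forall$ are shorthand as in the existing proof. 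When $\varphi\in\GFk{2}(\Omega,\Theta)$, the inductive hypothesis gives $\tilde{\varphi}_1^{c_{i'}}\in\mathsf{GC}_{\infty\omega}$ with a single free variable $x_j$, and the displayed formula is manifestly guarded, so $\tilde{\varphi}^c\in\mathsf{GC}_{\infty\omega}$ as required.

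The main technical point to verify is that the indexing set of the infinitary disjunction really enumerates every way the multiset $\lbag\sem{\varphi_1}{\nu[x_j\mapsto v]}_G\mid v\in V_G\rbag$ can arise with $F$-value $c$; this is immediate from the definition of $F$ as a function of multisets, but it hinges on allowing the index set to be uncountable (since $c_i$ range over $\R$), which is exactly what $\mathsf{C}^k_{\infty\omega}$ and $\mathsf{GC}_{\infty\omega}$ permit. No other step is substantively harder than in the summation case, so the main obstacle is simply bookkeeping: checking that the quantifier rank tracks the aggregation depth (it gains exactly one at each aggregation), that the variable count is preserved, and that guardedness is preserved in the conditional case. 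With the generalized Proposition~\ref{prop:TLtoL} established, the chain ``$\FOL(\Omega,\Theta)\!\to\!\mathsf{C}_{\infty\omega}\!\to\!\mathsf{C}\!\to\!\WLk{k}/\mathsf{CR}$'' used in Proposition~\ref{prop:wltotl} applies verbatim, yielding the three inclusions.
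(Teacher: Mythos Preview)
Your proposal is correct and follows essentially the same approach as the paper: both reduce the claim to extending Proposition~\ref{prop:TLtoL} by handling the two new aggregation constructs, and both give the natural infinitary-disjunction formulas indexed over all possible multisets with the prescribed $F$-value, after which the contrapositive chain from Proposition~\ref{prop:wltotl} applies unchanged. The only cosmetic difference is in the conditional case: you enforce that all neighbours are covered via a guarded universal $\forall x_j\bigl(E(x_i,x_j)\to\bigvee_{i'}\tilde{\varphi}_1^{c_{i'}}\bigr)$, whereas the paper instead adds the conjunct $\exists^{=m}x_j\,E(x_i,x_j)$ with $m=\sum_s m_s$; both encodings are guarded and equivalent.
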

\begin{proof}
It suffices to show that Proposition~\ref{prop:TLtoL} also holds for expressions in the fragments of $\FOL(\Omega,\Theta)$ considered. In particular, we only need to revise the case of summation aggregation (that is, $\varphi:=\sum_{x_i}\varphi_1$) in the proof of Proposition~\ref{prop:TLtoL}. 
Indeed, let us consider the more general case when one of the two aggregating functions are used. 
\begin{itemize}
    \item $\varphi:=\mathsf{aggr}_{x_i}^F(\varphi_1)$. We then define
$$
\tilde{\varphi}^c:=\bigvee_{\ell\in\Nb}\bigvee_{(m_1,\ldots,m_\ell)\in\Nb^\ell}\bigvee_{(c,c_1,\ldots,c_\ell)\in \mathcal C(m_1,\ldots,m_\ell,F)}\bigwedge_{s=1}^\ell\exists^{=m_s} x_i\,\tilde{\varphi}_1^{c_s}\land \forall x_i\,  \bigvee_{s=1}^\ell \tilde{\varphi}_1^{c_s},
$$
where $\mathcal C(m_1,\ldots,m_\ell,F)$ now consists of all $(c,c_1,\dots,c_\ell)\in \R^{\ell+1}$ such that 
$$
c=F\Bigl(\lbag \underbrace{c_1,\ldots,c_1}_{\text{$m_1$ times}},\ldots, \underbrace{c_\ell,\ldots,c_\ell}_{\text{$m_\ell$ times}}\rbag\Bigr).$$

\item $\varphi:=\mathsf{aggr}_{x_i}^F(\varphi_1(x_i) \mid E(x_j,x_i))$. We then define
\begin{multline*}
\tilde{\varphi}^c:=\bigvee_{\ell\in\Nb}\bigvee_{(m_1,\ldots,m_\ell)\in\Nb^\ell}\bigvee_{(c,c_1,\ldots,c_\ell)\in \mathcal C(m_1,\ldots,m_\ell,F)} \exists^{=m}x_i\,  E(x_j,x_i)\  \land \\ 
\bigwedge_{s=1}^\ell\exists^{=m_s} x_i\, E(x_j,x_i)\ \land\  \tilde{\varphi}_1^{c_s}(x_i) 
\end{multline*}
where $\mathcal C(m_1,\ldots,m_\ell,F)$ again consists of all $(c,c_1,\dots,c_\ell)\in \R^{\ell+1}$ such that 
$$
c=F\Bigl(\lbag \underbrace{c_1,\ldots,c_1}_{\text{$m_1$ times}},\ldots, \underbrace{c_\ell,\ldots,c_\ell}_{\text{$m_\ell$ times}}\rbag\Bigr) \text{ and } m = \sum_{s=1}^\ell m_s.$$ 
\end{itemize}
It is readily verified that 
$\sem{\mathsf{aggr}_{x_i}^F(\varphi)}{\vv}_G=c$ iff 
$\sem{\tilde{\varphi}^c}{\vv}_G^\B=\top$, and $\sem{\mathsf{aggr}_{x_i}^F(\varphi(x_i) \mid E(x_j,x_i))}{\vv}_G=c$ iff 
$\sem{\tilde{\varphi}^c}{\vv}_G^\B=\top$, as desired.

For the guarded case, we note that the expression $\tilde{\varphi}^c$ above yields a guarded expression as long conditional aggregation is used of the form $\mathsf{aggr}_{x_i}^F(\varphi(x_i) \mid E(x_j,x_i))$ with $i,j \in [2]$, so we can reuse the argument in the proof of  Proposition~\ref{prop:TLtoL} for the guarded case.\end{proof}

We will illustrate later on (Section~\ref{sec:appconseqgnn}) that this generalization allows for assessing the separation power of $\GNNs$ that use a variety of aggregation functions.

The choice of supported aggregation functions has, of course, an impact on the ability of $\FOL(\Omega,\Theta)$ to match color refinement or the $\WLk{k}$ procedures in separation power. The same holds for $\GNNs$, as shown by \citet{XuHLJ19}. And indeed, the proof of Proposition~\ref{prop:tltowl} relies on the presence of summation aggregation. We note that most lower bounds on the separation power of $\GNNs$ in terms of color refinement or the $\WLk{k}$ procedures assume summation aggregation since summation suffices to construct injective sum-decomposable
functions on multisets \citep{XuHLJ19,ZaheerKRPSS17}, which are used to simulate color refinement and $\WLk{k}$.
A more in-depth analysis of lower bounding $\GNNs$ with less expressive aggregation functions, possibly using weaker versions of color refinement and $\WLk{k}$ is left as future work.

\subsection{Generalization to Graphs with real-valued vertex labels}\label{subsec:contgraphs}
We next consider the more general setting in which $\mathsf{col}_G:V_G\to\R^\ell$ for some $\ell\in\Nb$.
That is, vertices in a graph can carry real-valued vectors. We remark that no changes to neither the syntax nor the semantics of $\FOL$ expressions are needed, yet note that
$\sem{P_s(x)}{\nu}_G:=\mathsf{col}_G(\nu)_s$ is now an element in $\R$ rather than $0$ or $1$, for each $s\in[\ell]$.

A first observation is that the color refinement and $\WLk{k}$ procedures treat  each real value as a separate label. That is, two values that differ only by any small $\epsilon>0$, are considered different.
The proofs of Theorem~\ref{thm:mainsep-norounds}, \ref{thm:mainsep},
\ref{thm:mainsep-guarded} and~\ref{thm:mainsep-graph} rely on connections between color refinement and $\WLk{k}$ and the finite variable logics $\mathsf{GC}$ and $\mathsf{C}^{k+1}$, respectively. In the discrete context, the unary predicates $P_s(x)$ used in the logical formulas indicate which label vertices have. That is, $\sem{P_s}{v}_G^\B=\top$ iff $\mathsf{col}_G(v)_s=1$.
To accommodate for real values in the context of separation power, these logics now need to be able to differentiate between different labels, that is, different real numbers. We therefore extend the unary predicates allowed in formulas. More precisely, for each dimension $s\in[\ell]$, we now have  \textit{uncountably} many predicates of the form $P_{s,r}$, one for each $r\in \R$. In any formula in $\mathsf{GC}$ or $\mathsf{C}^{k+1}$ only a finite number of such predicates may occur. The Boolean semantics of these new predicates is as expected:
$$
\sem{P_{s,r}(x)}{\nu}_G^\B:=\mathrm{if~}\mathsf{col}_G(\mu(x_i))_s=r  \text{~then~} \top \text{~else~} \bot.
$$
In other words, in our logics, we can now detect which real-valued labels vertices have. Although, in general, the introduction of infinite predicates may cause problems, we here consider a specific setting in which the vertices in a graph have a unique label. This is commonly assumed in graph learning. Given this, it is easily verified that all results in Section~\ref{Prop:subsec:sep-logic} carry over, where all logics involved now use the unary predicates $P_{s,r}$ with $s\in [\ell]$ and $r\in \R$.

The connection between $\FOL$ and logics  also carries over. First, for Proposition~\ref{prop:TLtoL} we now need to connect $\FOL$ expressions, that use a finite number of predicates $P_s$, for $s\in[\ell]$, with the extended logics having uncountably many predicates $P_{s,r}$, for $s\in [\ell]$ and $r\in \R$, at their disposal. It suffices to reconsider the case $\varphi(x_i)=P_{s}(x_i)$ in the proof of  Proposition~\ref{prop:TLtoL}. More precisely, $\sem{P_s(x_i)}{\nu}_G$ can now be an arbitrary value $c\in\R$. We now simply define $\tilde{\varphi}^c(x_i):=P_{s,c}(x_i)$. By definition $\sem{P_s(x_i)}{\nu}_G=c$ if and only if $\sem{P_{s,c}(x_i)}{\nu}_G^\B=\top$, as desired.

The proof for the extended version of  proposition~\ref{prop:tltowl} now needs a slightly different strategy, where we build the relevant $\FOL$ formula after we construct the contrapositive of the Proposition. Let us first show how to construct a $\FOL$ formula that is equivalent to a logical formula on any graph using only labels in a specific (finite) set $R$ of real numbers.

In other words, given a set $R$ of real values, we show that for any formula $\varphi(\vx)\in\mathsf{C}^{k,(t)}$  using unary predicates $P_{s,r}$ such that $r\in R$, we can construct the desired $\hat{\varphi}$.
As mentioned, we only need to reconsider the case $\varphi(x_i):=P_{s,r}(x_i)$. We define
$$
\hat{\varphi}:= \frac{1}{
\prod_{r'\in R, r\neq r'} r-r'}\prod_{r'\in R,r\neq r'}(P_s(x_i)-r'\1_{x_i=x_i}).
$$
Then, $\sem{\hat{\varphi}}{\nu}_G$ evaluates to 
$$
\frac{\prod_{r'\in R, r\neq r'}(r-r')}{\prod_{r'\in R, r\neq r'} (r-r')}=\begin{cases}
1 & \sem{P_{s,r}}{\nu}=\top\\
0 & \sem{P_{s,r}}{\nu}=\bot
\end{cases}.
$$
Indeed,  if $\sem{P_{s,r}}{\nu}=\top$, then $\mathsf{col}_G(v)_s=r$ and hence $\sem{P_s}{v}_G=r$, resulting in the same nominator and denominator in the above fraction. If $\sem{P_{s,r}}{\nu}=\bot$, then $\mathsf{col}_G(v)_s=r'$ for some value $r'\in R$ with  $r\neq r'$. In this case, the nominator in the above fraction becomes zero.
We remark that this revised construction still results in a guarded $\FOL$ expression, when the input logical formula is guarded as well.

Coming back to the proof of the extended version of Proposition~\ref{prop:tltowl}, let us show the proof for the the fact that 
$\rho_1\bigl(\GFk{2}^{\smallk{t}}(\Omega)\bigr)\subseteq \rho_1\bigl(\mathsf{cr}^{\smallk{t}}\bigr),$
the other two items being analogous.
Assume that there is a  pair $(G,v)$ and $(H,w)$ 
which is not in $\rho_1\bigl(\mathsf{cr}^{\smallk{t}}\bigr)$. 
Then, by Proposition~\ref{prop:wlandlogic}, applied on graphs with real-valued labels, there exists a formula $\varphi(x)$ in $\mathsf{GC}^{\smallk{t}}$ such that 
$\sem{\varphi}{v}_G^\B=\top\neq \sem{\varphi}{w}_H^\B=\bot$. We remark that $\varphi(x)$ uses finitely many $P_{s,r}$ predicates.
Let $R$ be the set of real values used in both $G$ and $H$ (and $\varphi(x)$). We note that $R$ is finite. We invoke the construction sketched above, and obtain a formula $\hat{\varphi}$ in $\GFk{2}^{\smallk{t}}$ such that  $\sem{\tilde\varphi}{v}_G\neq \sem{\tilde\varphi}{w}_H$. Hence, $(G,v)$ and $(H,w)$ is not in $\rho_1\bigl(\GFk{2}^{\smallk{t}}(\Omega)\bigr)$ either, for any $\Omega$, which was to be shown. 

\section{Details of Section~\ref{sec:consecuences}}
\label{sec:appconseqgnn}

We here provide some additional details on the encoding of layers of $\GNNs$ in our tensor languages, and how, as a consequence of our results from Section~\ref{sec:seppower}, one obtains a bound on their separation power. This section showcases that it is relatively straightforward to represent $\GNNs$ in our tensor languages. Indeed, often, a direct translation of the layers, as defined in the literature, suffices.

\subsection{color Refinement}
We start with $\GNN$ architectures related to color refinement, or in other words, architectures  which can be represented in our guarded tensor language.

\paragraph{GraphSage.}
We first consider a ``basic'' $\GNN$, that is, an instance of GraphSage \citep{hyl17} in which sum aggregation is used.
The initial features are
given by $\mF^{\smallk{0}}=(\vf^{\smallk{0}}_1,\ldots,\vf^{\smallk{0}}_{d_0})$ where
$\vf^{\smallk{0}}_i\in\R^{n\times 1}$ is a hot-one encoding of the $i$th vertex label in $G$.  We can represent the initial embedding easily in $\GFk{2}^{\!\smallk{0}}$, without the use of any summation. Indeed, it suffices to define $\varphi_i^{\smallk{0}}(x_1):=P_i(x_1)$ for $i\in[d_0]$. We have $\emF^{\smallk{0}}_{vj}=\sem{\varphi_j^{\smallk{0}}}{v}_G$ for $j\in[d_0]$, and thus the initial features can be represented by simple expressions in $\GFk{2}^{\!\smallk{0}}$. 

Assume now, by induction, that we can also represent the features computed by a basic $\GNN$ in layer $t-1$. That is, let $\mF^{\smallk{t-1}}\in\R^{n\times d_{t-1}}$ be those features and for each $i\in[d_{t-1}]$ let $\varphi_i^{\smallk{t-1}}(x_1)$ be expressions in $\GFk{k}^{\smallk{t-1}}(\sigma)$ representing them. We assume that,
for each $i\in[d_{t-1}]$, $\emF^{\smallk{t-1}}_{vi}=\sem{\varphi^{\smallk{t-1}}_i}{v}_G$. We remark that we assume that a summation depth of $t-1$ is needed for layer $t-1$.

Then, in layer $t$, a basic $\GNN$ computes the next features as
$$
\mF^{\smallk{t}}:=\sigma\left(
\mF^{\smallk{t-1}}\cdot \mV^{\smallk{t}}+ \mA\cdot\mF^{\smallk{t-1}}\cdot\mW^{\smallk{t}}+\mB^{\smallk{t}}
\right),
$$
where $\mA\in\R^{n\times n}$ is the adjacency matrix of $G$, $\mV^{\smallk{t}}$ and $\mW^{\smallk{t}}$ are weight matrices in $\R^{d_{t-1}\times d_t}$, $\mB^{\smallk{t}}\in\R^{n\times d_t}$ is a (constant) bias matrix consist of $n$ copies of $\vb^{\smallk{t}}\in\R^{d_t}$, and $\sigma$ is some activation function. We can simply use the following expressions $\varphi_j^{\smallk{t}}(x_1)$, for $j\in[d_t]$:
$$\sigma\left(
	\Bigl(\sum_{i=1}^{d_{t-1}}\! \emV_{ij}^{\smallk{t}}\cdot \varphi_i^{\smallk{t-1}}(x_1)
	\Bigr)+\sum_{x_2}\, \biggl(E(x_1,x_2)\cdot \Bigl(\sum_{i=1}^{d_{t-1}}\! \emW_{ij}^{\smallk{t}}\cdot \varphi_i^{\smallk{t-1}}(x_2)
	\Bigr)\biggl) {}+ b_j^{\smallk{t}}\cdot\1_{x_1=x_1}
	\right).
$$
Here, $\emW_{ij}^{\smallk{t}}$, $\emV_{ij}^{\smallk{t}}$ and $b_j^{\smallk{t}}$ are real values corresponding the weight matrices and bias vector in layer $t$. These are expressions in $\GFk{2}^{\smallk{t}}(\sigma)$ since the additional summation is guarded, and combined with the summation depth of $t-1$ of $\varphi_i^{\smallk{t-1}}$, this results in a summation depth of $t$ for layer $t$.
Furthermore,
$\emF^{\smallk{t}}_{vi}=\sem{\varphi^{\smallk{t}}_i}{v}_G$, as desired. If we denote by $\mathsf{b}\GNN^{\smallk{t}}$ the class of $t$-layered basic $\GNNs$, then our results imply
$$
\rho_1\bigl(\mathsf{cr}^{\smallk{t}}\bigr)
\subseteq \rho_1(\bigl(\GFk{2}^{\smallk{t}}(\Omega)\bigr)\subseteq  \rho_1\bigl(\mathsf{b}\GNN^{\smallk{t}}\bigr),
$$
and thus the separation power of basic $\GNNs$ is bounded by the separation power of color refinement. We thus recover known results by \citet{XuHLJ19} and \citet{grohewl}.

Furthermore, if one uses a readout layer in basic $\GNNs$ to obtain a graph embedding, one typically applies a function $\mathsf{ro}:\R^{d_t}\to\R^{d_t}$ in the form of $\mathsf{ro}\bigl(\sum_{v\in V_G} \mF_v^{\smallk{t}}\bigr)$, in which aggregation takes places over all vertices of the graph. This corresponds to an expression in  $\FOLk{2}^{\!\smallk{t+1}}(\sigma,\mathsf{ro})$:
$
\varphi_j:=\mathsf{ro}_j\bigl( \sum_{x_1} \varphi_j^{\smallk{t-1}}(x_1)\bigr)
$, where $\mathsf{ro}_j$ is the projection of the readout function on the $j$the coordinate. We note that this is indeed not a guarded expression anymore, and thus our results tell that
$$
\rho_0\bigl(\mathsf{gcr}^{\smallk{t}}\bigr)
\subseteq\rho_0(\FOLk{2}^{\smallk{t+1}}(\Omega)\bigr)\subseteq
\rho_0\bigl(\mathsf{b}\GNN^{\smallk{t}}+\mathsf{readout}\bigr).
$$

More generally, GraphSage allows for the  use of general aggregation functions $F$ on the multiset of features of neighboring vertices. To cast the corresponding layers in $\FOL(\Omega)$, we need to consider the extension $\FOL(\Omega,\Theta)$ with an appropriate set $\Theta$ of aggregation functions, as described in Section~\ref{subsec:aggr}. 
In this way, we can represent layer $t$ by means of the following expressions
$\varphi_j^{\smallk{t}}(x_1)$, for $j\in [d_t]$. 
$$\sigma\left(
	\Bigl(\sum_{i=1}^{d_{t-1}}\! \emV_{ij}^{\smallk{t}}\cdot \varphi_i^{\smallk{t-1}}(x_1)
	\Bigr)+\sum_{i=1}^{d_{t-1}}\! \emW_{ij}^{\smallk{t}}\cdot \mathsf{aggr}_{x_2}^F\Bigl(  \varphi_i^{\smallk{t-1}}(x_2) \mid E(x_1,x_2)
	\Bigr) {}+ b_j^{\smallk{t}}\cdot\1_{x_1=x_1}
	\right),
$$
which is now an expression in $\GFk{2}^{\smallk{t}}(\{\sigma\},\Theta)$ and hence the bound in terms of $t$ iterations of color refinement carries over by Proposition~\ref{prop:TLotheragg}. Here, $\Theta$ simply consists of the aggregation functions used in the layers in GraphSage.
\paragraph{GCNs.} \textit{Graph Convolution Networks} ($\GCNs$) \citep{kipf-loose} 
operate alike basic $\GNNs$ except that a normalized Laplacian 
$\mD^{-1/2}(\mI+\mA)\mD^{-1/2}$ is used to aggregate features,
instead of the adjacency matrix $\mA$.
Here, $\mD^{-1/2}$ is the diagonal matrix consisting of reciprocal of 
the square root of the  vertex degrees in $G$ plus $1$. The initial embedding $\mF^{\smallk{0}}$ is just as before. We use again $d_t$ to denote the number of features in layer $t$.
In layer $t>0$, a $\GCN$ computes $\mF^{\smallk{t}}:=\sigma(\mD^{-1/2}(\mI+\mA)\mD^{-1/2}\cdot\mF^{\smallk{t-1}}\mW^{\smallk{t}}+\mB^{\smallk{t}})$.
If, in addition to the activation function $\sigma$ we add the function $\frac{1}{\sqrt{x+1}}:\R\to\R:x\mapsto \frac{1}{\sqrt{x+1}}$ to $\Omega$, we can represent the $\GCN$ layer, as follows. For $j\in[d_t]$, we define the $\GFk{2}^{\!\smallk{t+1}}(\sigma,\frac{1}{\sqrt{x+1}})$ expressions
\begin{multline*}
\varphi_j^{\smallk{t}}(x_1):=\sigma\Biggl(
f_{1/\sqrt{x+1}}\bigl(\sum_{x_2}E(x_1,x_2)\bigr)\cdot\Bigl(\sum_{i=1}^{d_{t-1}}\! \emW_{ij}^{\smallk{t}}\cdot \varphi_i^{\smallk{t-1}}(x_1)
	\Bigr)\cdot f_{1/\sqrt{x+1}}\bigl(\sum_{x_2}E(x_1,x_2)\bigr)\\{}+ f_{1/\sqrt{x+1}}\bigl(\sum_{x_2}E(x_1,x_2)\bigr)\cdot 	\Bigl(\sum_{x_2}E(x_1,x_2)\cdot
	f_{1/\sqrt{x+1}}\bigl(\sum_{x_1}E(x_2,x_1)\bigr)
	\cdot\Bigl(\sum_{i=1}^{d_{t-1}}\! \emW_{ij}^{\smallk{t}}\cdot \varphi_i^{\smallk{t-1}}(x_2)\Bigr)\Biggr),
\end{multline*}
where we omitted the bias vector for simplicity. 
We again observe that only guarded summations are needed. However,
we remark that in every layer we now add
two the overall summation depth, since we need an extra summation to compute the degrees. In other words, a $t$-layered $\GCN$ correspond to expressions in $\GFk{2}^{\!\smallk{2t}}(\sigma,\frac{1}{\sqrt{x+1}})$. 
If we denote by $\GCN^{\smallk{t}}$ the class of $t$-layered $\GCNs$, then
our results  imply 
$$\rho_1\bigl(\mathsf{cr}^{(2t)}\bigr)\subseteq 
\rho_1\bigl(\GFk{2}^{\!\smallk{2t}}(\Omega)\bigr)
\subseteq \rho_1\bigl(\GCN^{\smallk{t}}\bigr).
$$
We remark that another representation can be provided, in which the degree computation is factored out \citep{GeertsMP21}, resulting in a better upper bound
$\rho_1\bigl(\mathsf{cr}^{(t+1)}\bigr) 
\subseteq \rho_1\bigl(\GCN^{\smallk{t}}\bigr)$. In a similar way as for basic $\GNNs$, we also have
$\rho_0\bigl(\mathsf{gcr}^{(t+1)}\bigr)
\subseteq \rho_0\bigl(\GCN^{\smallk{t}}+\mathsf{readout}\bigr)
$.

\paragraph{SGCs.}
As an other example, we consider a variation of \textit{Simple Graph Convolutions} ($\SGCs$) \citep{WuSZFYW19}, which use powers the adjacency matrix and only apply a non-linear activation function at the end. That is, $\mF:=\sigma(\mA^p\cdot\mF^{\smallk{0}}\cdot\mW)$ for some $p\in\Nb$ and $\mW\in\R^{d_0\times d_1}$. We remark that $\SGCs$ actually use powers of the normalized Laplacian, that is,
$\mF:=\sigma\bigl((\mD^{-1/2}(\mI+\mA_G)\mD^{-1/2}))^p\cdot\mF^{\smallk{0}}\cdot\mW\bigr)$ but this only incurs an additional summation depth as for $\GCNs$. We focus here on our simpler version. It should be clear that we can represent the architecture in  $\FOL_{p+1}^{\!\smallk{p}}(\Omega)$ by means of the expressions:
$$
\varphi_j^{\smallk{t}}(x_1):=\sigma\left(\sum_{x_2}\cdots \sum_{x_{p+1}} \prod_{k=1}^{p} E(x_{k},x_{k+1})\cdot \Bigl(\sum_{i=1}^{d_0} 
\emW_{ij}\cdot\varphi_i^{\smallk{0}}(x_{p+1})\Bigr) \right),
$$
for $j\in[d_1]$. A naive application of our results would imply an upper bound on their separation power by $\WLk{p}$.
We can, however, use Proposition~\ref{prop:tw}.
Indeed, it is readily verified that these expressions have a treewidth of one, because the variables form a path. And indeed, when for example, $p=3$, we  can equivalently write $\varphi_j^{\smallk{t}}(x_1)$ as
$$
\sigma\left(
\sum_{x_2} E(x_1,x_2)\cdot \biggl(\sum_{x_1} E(x_2,x_1)\cdot
\Bigl(\sum_{x_2} E(x_1,x_2)\cdot\bigl(\sum_{i=1}^{d_0} 
\emW_{ij}\cdot\varphi_i^{\smallk{0}}(x_{2})\bigr)\Bigr)\biggl)\right),$$
by reordering the summations and reusing index variables. This holds for arbitrary $p$. We thus obtain guarded expressions in $\GFk{2}^{\!\smallk{p}}(\sigma)$ and our results tell that $t$-layered $\SGCs$ are
bounded by $\mathsf{cr}^{\smallk{p}}$ for vertex embeddings, and by $\mathsf{gcr}^{\smallk{p}}$ for $\SGCs+\mathsf{readout}$.

\paragraph{Principal Neighbourhood Aggregation.}
Our next example is a $\GNN$ in which different aggregation functions are used: \textit{Principal Neighborhood Aggregation} $(\mathsf{PNA})$  is an architecture
proposed by \citet{CorsoCBLV20} in which aggregation over neighboring vertices is done by means of $\mathsf{mean}$, $\mathsf{stdv}$, $\mathsf{max}$ and $\mathsf{min}$, and this in parallel. In addition, after aggregation, three different \textit{scalers} are applied. Scalers are diagonal matrices whose diagonal entries are a function of the vertex degrees.  Given the features for each vertex $v$ computed in layer $t-1$, that is, $\mF^{\smallk{t-1}}_{v:}\in\R^{1\times\ell}$, a $\mathsf{PNA}$ computes $v$'s new features in layer $t$ in the following way (see layer definition (8) in \citep{CorsoCBLV20}). First, vectors $\mG_{v:}^{\smallk{t}}\in\R^{1\times 4\ell}$ are computed such that
$$
\emG_{vj}^{\smallk{t}}=\begin{cases}
\mathsf{mean}\left(\lbag\mathsf{mlp}_j(\emF_{w:}^{\smallk{t-1}})\mid w\in N_G(v)\rbag\right) & \text{for $1\leq j\leq \ell$}\\
\mathsf{stdv}\left(\lbag\mathsf{mlp}_j(\emF_{w:}^{\smallk{t-1}})\mid w\in N_G(v)\rbag\right) & \text{for $\ell+1\leq j\leq 2\ell$}\\
\mathsf{max}\left(\lbag\mathsf{mlp}_j(\emF_{w:}^{\smallk{t-1}})\mid w\in N_G(v)\rbag\right) & \text{for $2\ell+1\leq j\leq 3\ell$}\\
\mathsf{min}\left(\lbag\mathsf{mlp}_j(\emF_{w:}^{\smallk{t-1}})\mid w\in N_G(v)\rbag\right) & \text{for $3\ell+1\leq j\leq 4\ell$},
\end{cases}
$$
where $\mathsf{mlp}_j:\R^\ell\to\R$ is the projection of an $\MLP$ $\mathsf{mlp}:\R^\ell\to\R^\ell$ on the $j$th coordinate.
Then, three different scalers are applied. The first scaler is simply the identity, the second two scalers $s_1$ and $s_2$ depend on the vertex degrees. As such, vectors
$\mH_{v:}^{\smallk{t}}\in\R^{12\ell}$ are constructed as follows:
$$
\emH_{vj}^{\smallk{t}}=\begin{cases}
\emH_{vj}^{\smallk{t}} & \text{for $1\leq j\leq 4\ell$}\\
s_1(\mathsf{deg}_G(v))\cdot\emH_{vj}^{\smallk{t}} & \text{for $4\ell+1\leq j\leq 8\ell$}\\
s_2(\mathsf{deg}_G(v))\cdot\emH_{vj}^{\smallk{t}} & \text{for $8\ell+1\leq j\leq 12\ell$},
\end{cases}
$$
where $s_1$ and $s_2$ are functions from $\R\to \R$ (see \citep{CorsoCBLV20} for details). Finally, the new vertex embedding is obtained as
$$
\mF^{\smallk{t}}_{v:}=\mathsf{mlp}'(\mH^{\smallk{t}}_{v:})
$$
for some $\MLP$ $\mathsf{mlp}':\R^{12\ell}\to\R^\ell$. The above layer definition translates naturally into expressions in $\FOL(\Omega,\Theta)$, the extension of $\FOL(\Omega)$ with aggregate functions (Section~\ref{subsec:aggr}). Indeed, suppose that for each $j\in[\ell]$ we have $\FOL(\Omega,\Theta)$ expressions $\varphi_j^{\smallk{t-1}}(x_1)$ such that $\sem{\varphi_j^{\smallk{t-1}}}{v}_G=\emF^{\smallk{t-1}}_{vj}$ for any vertex $v$. Then, $\emG_{vj}^{\smallk{t}}$ simply corresponds to the guarded expressions 
$$
\psi^{\smallk{t}}_j(x_1):=\mathsf{aggr}_{x_2}^{\mathsf{mean}}(\mathsf{mlp}_j(\varphi_1^{\smallk{t-1}}(x_2),\ldots,\varphi_\ell^{\smallk{t-1}}(x_2))\mid E(x_1,x_2)),$$
for $1\leq j\leq \ell$, and similarly for the other components of $\mG_{v:}^{\smallk{t}}$ using the respective aggregation functions, $\mathsf{stdv}$, $\mathsf{max}$ and $\mathsf{min}$. Then, $\emH_{vj}^{\smallk{t}}$ corresponds to 
$$
\xi_j^{\smallk{t}}(x_1)=\begin{cases}
\psi_{j}^{\smallk{t}}(x_1) & \text{for $1\leq j\leq 4\ell$}\\
s_1(\mathsf{aggr}_{x_2}^{\mathsf{sum}}(\1_{x_2=x_2}\mid E(x_1,x_2)))\cdot\psi_{j}^{\smallk{t}}(x_1) & \text{for $4\ell+1\leq j\leq 8\ell$}\\
s_2(\mathsf{aggr}_{x_2}^{\mathsf{sum}}(\1_{x_2=x_2}\mid E(x_1,x_2)))\cdot\psi_j^{\smallk{t}}(x_1) & \text{for $8\ell+1\leq j\leq 12\ell$},
\end{cases}
$$
where we use summation aggregation to compute the degree information used in the functions in the scalers $s_1$ and $s_2$. And finally,
$$
\varphi_j^{\smallk{t}}:=\mathsf{mlp}_j'(\xi_1^{\smallk{t}}(x_1),\ldots,\xi_{12\ell}^{\smallk{t}}(x_1))
$$
represents $\emF^{\smallk{t}}_{vj}$. We see that all expressions only use two index variables and aggregation is applied in a guarded way. Furthermore, in each layer, the aggregation depth increases with one. As such, a $t$-layered $\mathsf{PNA}$ can be represented in $\GFk{2}^{\smallk{t}}(\Omega,\Theta)$, where $\Omega$ consists of the $\MLPs$ and functions used in scalers, and $\Theta$ consists of $\mathsf{sum}$ (for computing vertex degrees), and 
 $\mathsf{mean}$, $\mathsf{stdv}$, $\mathsf{max}$ and $\mathsf{min}$.  Proposition~\ref{prop:TLotheragg} then implies a bound on the separation power by $\mathsf{cr}^{\smallk{t}}$.

\paragraph{Other example.} In the same way, one can also easily analyze  $\mathsf{GAT}\text{s}$
\citep{VelickovicCCRLB18} and show that these can be represented in $\GFk{2}(\Omega)$ as well, and thus bounds by color refinement can be obtained.

\subsection{\texorpdfstring{$k$}{k}-dimensional Weisfeiler-Leman tests}
We next discuss architectures related to the $k$-dimensional Weisfeiler-Leman algorithms.
For $k=1$, we discussed the extended $\GINs$ in the main paper. 
We here focus on arbitrary $k\geq 2$.

\paragraph{Folklore \textsf{GNN}s.}
We first consider the ``Folklore'' $\GNNs$ or $\FGNNks{k}$ for short \citep{MaronBSL19}. For $k\geq 2$, $\FGNNks{k}$ computes a tensors. In particular, the initial tensor $\tF^{\smallk{0}}$ encodes $\mathsf{atp}_k(G,\vv)$ for each $\vv\in V_G^k$. We can represent this tensor by the following $k^2(\ell+2)$ expressions in $\FOLk{k}^{\smallk{0}}$:
$$
\varphi_{r,s,j}^{\smallk{0}}(x_1,\ldots,x_k):=\begin{cases}
\1_{x_r=x_s}\cdot P_j(x_r) & \text{for $j\in[\ell]$}\\
E(x_r,x_s) & \text{for $j=\ell+1$}\\
\1_{x_r=x_s} & \text{for $j=\ell+2$}
\end{cases},
$$
for $r,s\in[k]$ and $j\in[\ell+2]$.
We note: $\sem{\varphi^{\smallk{0}}_{r,s,j}}{(v_1,\ldots,v_k)}_G=\etF^{\smallk{0}}_{v_1,\ldots,v_k,r,s,j}$
for all $(r,s,j)\in[k]^2\times [\ell+2]$, as desired.
We let $\tau_0:=[k]^2\times[\ell+2]$ and set $d_0=k^2\times (\ell+2)$.

Then, in layer $t$, a $\FGNNk{k}$ computes a tensor
$$\tF^{\smallk{t}}_{v_1,\ldots,v_k,\bullet}:=\mathsf{mlp}_0^{\smallk{t}}\bigl(\tF^{\smallk{t-1}}_{v_1,\ldots,v_k,\bullet},\sum_{w\in V_G}\prod_{s=1}^k \mathsf{mlp}_s^{\smallk{t}}(\tF_{v_1,\ldots,v_{s-1},w,v_{s+1},\ldots,v_k,\bullet}^{\smallk{t-1}})\bigr),$$ 
where $\mathsf{mlp}_s^{\smallk{t}}: \R^{d_{t-1}}\to \R^{d_{t}'}$, for $s\in[k]$, and 
and $\mathsf{mlp}_0^{\smallk{t}}:\R^{d_{t-1}\times d_{t}'}\to \R^{d_t}$ are $\MLPs$. We here use $\bullet$ to denote combinations of indices in $\tau_d$ for $\tF^{\smallk{t}}$ and in $\tau_{d-1}$ for $\tF^{\smallk{t-1}}$.

Let $\tF^{\smallk{t-1}}\in\R^{n^k\times d_{t-1}}$ be the tensor computed by an $\FGNNk{k}$
in layer $t-1$. Assume that for each tuple of elements $\vj$ in $\tau_{d_{t-1}}$ we have an expression $\varphi_{\vj}^{\smallk{t-1}}(x_1,\ldots,x_k)$ satisfying $\sem{\varphi^{\smallk{t-1}}_{\vj}}{(v_1,\ldots,v_k)}_G=\etF^{\smallk{t-1}}_{v_1,\ldots,v_k,\vj}$ and such that it is an expression in $\FOLk{k+1}^{\!\smallk{t-1}}(\Omega)$. That is, we need $k+1$ index variables and a summation depth of $t-1$ to represent layer $t-1$. 

Then, for layer $t$, for each $\vj\in\tau_{d_t}$, it suffices to consider the expression
\begin{multline*}
\varphi_\vj^{\smallk{t}}(x_1,\ldots,x_k):=
\mathsf{mlp}_{0,\vj}^{\smallk{t}}\Bigl(\bigl(\varphi_{\vi}^{(t-1)}(x_1,\ldots,x_k)\bigr)_{\vi\in\tau_{d_{t-1}}}, {}\\
\sum_{x_{k+1}}\, \prod_{s=1}^k \mathsf{mlp}_{s,\vj}^{\smallk{t}}\bigl((\varphi_{\vi}^{\smallk{t-1}}(x_1,\ldots,x_{s-1},x_{k+1},x_{s+1},\ldots,x_k)\bigr)_{\vi\in\tau_{d_{t-1}}}\Bigr),
\end{multline*}	
where $\mathsf{mlp}_{o,\vj}^{\smallk{t}}$ and $\mathsf{mlp}_{s,\vj}^{\smallk{t}}$ are the projections of the $\MLPs$ on the $\vj$-coordinates. We remark that we need $k+1$ index variables, and one extra summation is needed. We thus obtain expressions 
in $\FOLk{k+1}^{\!\smallk{t}}(\Omega)$
for the $t$th layer, as desired. We remark that the expressions are simple translations of the defining layer definitions. Also,
in this case, $\Omega$ consists of all $\MLPs$.  When a $\FGNNk{k}$ is used for vertex embeddings, we now simply add to each expression a factor $\prod_{s=1}^{k}\1_{x_1=x_s}$.
As an immediate consequence of our results, if we denote by $\FGNNk{k}^{\smallk{t}}$ the class of $t$-layered $\FGNNks{k}$, then for vertex embeddings:
$$
\rho_1\bigl(\mathsf{vwl}_{k}^{\smallk{t}}\bigr)\subseteq \rho_1\bigl(\FOLk{k+1}^{\!\smallk{t}}(\Omega)\bigr)\subseteq
\rho_1\bigl(\FGNNk{k}^{\smallk{t}}\bigr)
$$
in accordance with the known results from \citet{azizian2020characterizing}. When used for graph embeddings, an aggregation layer over all $k$-tuples of vertices is added, followed by the application of an $\MLP$. This results in expressions with no free index variables, and of summation depth $t+k$, where the increase with $k$ stems from the aggregation process over all $k$-tuples. In view of our results, for graph embeddings:
$$
\rho_0\bigl(\mathsf{gwl}_{k}^{\smallk{\infty}}\bigr)\subseteq \rho_0\bigl(\FOLk{k+1}(\Omega)\bigr)\subseteq
\rho_0\bigl(\FGNNk{k}\bigr)
$$
in accordance again with \citet{azizian2020characterizing}. We here emphasize that the upper bounds in terms of $\WLk{k}$ are obtained without the need to know how $\WLk{k}$ works. Indeed, one can really just focus on casting layers in the right tensor language!

We remark that \citet{azizian2020characterizing} define vertex embedding $\FGNNks{k}$ in a different way. Indeed, for a vertex $v$, its embedding is obtained by aggregating of all $(k-1)$ tuples in the remaining coordinates of the tensors. They define $\mathsf{vwl}_k$ accordingly. From the tensor language point of view, this corresponds to the addition of $k-1$ to the summation depth. Our results indicate that we loose the connection between rounds and layers, as in \citet{azizian2020characterizing}. This is the reason why we defined vertex embedding $\FGNNks{k}$ in a different way and can ensure a correspondence between rounds and layers for vertex embeddings.

\paragraph{Other higher-order examples.}
It is readily verified that  $t$-layered $k$-$\GNNs$
\citep{grohewl} can be represented in $\FOLk{k+1}^{\!\smallk{t}}(\Omega)$, recovering the known upper bound by $\mathsf{vwl}_k^{\smallk{t}}$ \citep{grohewl}. It is an equally easy exercise to show that $\WLk{2}$-convolutions \citep{DamkeMH20} and Ring-$\GNNs$ \citep{Chen} are bounded by $\WLk{2}$, by simply writing their layers in $\FOLk{3}(\Omega)$. The invariant graph networks ($\IGNks{k}$) \citep{MaronBSL19} will be treated in Section~\ref{sec:proofign}, as their representation in $\FOLk{k+1}(\Omega)$ requires some work. 

\subsection{Augmented GNNs}\label{subsec:augGNN}
Higher-order $\GNN$ architectures such as $k$-$\GNNs$, $\FGNNks{k}$ and $\IGNks{k}$, incur a substantial cost in terms of memory and computation \citep{Morris2020WeisfeilerAL}. Some recent proposals infuse more efficient $\GNNs$ with higher-order information by means of some pre-processing step. We next show that the tensor language approach also enables to obtain upper bounds on the separation power of such ``augmented'' $\GNNs$.

We first consider $\mathcal F\text{-}\mathsf{MPNN}\text{s}$ \citep{abs-2106-06707} in which the initial vertex features are augmented with
\textit{homomorphism counts} of rooted graph patterns. More precisely, let $P^r$ be a connected rooted graph (with root vertex $r$), and consider a graph $G=(V_G,E_G,\mathsf{col}_G)$ and vertex $v\in V_G$. Then, $\mathsf{hom}(P^r,G^v)$
denotes the number of homomorphism from $P$ to $G$, mapping $r$ to $v$. We recall that a homomorphism is an edge-preserving mapping between vertex sets.
Given a collection $\mathcal F=\{P_1^r,\ldots,P_\ell^r\}$ of rooted patterns,
an $\mathcal F\text{-}\mathsf{MPNN}$ runs an $\MPNN$ on the augmented initial vertex features:
$$
\tilde{\mF}^{\smallk{0}}_{v:}:=(\mF^{\smallk{0}}_{v:},\mathsf{hom}(P_1^r,G^v),\ldots,\mathsf{hom}(P_\ell^r,G^v)).
$$
Now, take any $\GNN$ architecture that can be cast in $\GFk{2}(\Omega)$ or $\FOLk{2}(\Omega)$ and assume, for simplicity of exposition, that a $t$-layer $\GNN$ corresponds to expressions in $\GFk{2}^{\smallk{t}}(\Omega)$ or $\FOLk{2}^{\smallk{t}}(\Omega)$. In order to analyze the impact of the augmented features, one only needs to revise the expressions $\varphi_j^{\smallk{0}}(x_1)$ that represent the initial features. In the absence of graph patterns, $\varphi_j^{\smallk{0}}(x_1):=P_j(x_1)$, as we have seen before. By contrast, to represent $\tilde{\mF}^{\smallk{0}}_{vj}$ we need to cast the computation of $\mathsf{hom}(P_i^r,G^v)$ in $\FOL$. Assume that
the graph pattern $P_i$ consists of $p$ vertices
and let us identify the vertex set with $[p]$.
Furthermore, without of loss generality, we 
assume that vertex ``$1$'' is the root vertex in $P_i$. To obtain $\mathsf{hom}(P_i^r,G^v)$ we need to create an indicator function for the graph pattern $P_i$ and then count how many times this indicator value is equal to one in $G$. The indicator function for $P_i$ is simply given by the expression $
\prod_{uv\in E_{P_i}} E(x_u,x_v)
$. Then, counting just pours down to summation over all index variables except the one for the root vertex. More precisely, if we define
$$
\varphi_{P_i}(x_1):=\sum_{x_2}\cdots\sum_{x_p}\prod_{uv\in E_{P_i}} E(x_u,x_v),
$$
then  $\sem{\varphi_{P_i}}{v}_G=\mathsf{hom}(P_i^r,G^v)$. This encoding results in an expression in $\FOLk{p}$. However, it is well-known that we can equivalently write $\varphi_{P_i}(x_1)$ as an expression
$\tilde{\varphi}_{P_i}(x_1)$ in $\FOLk{k+1}$ where $k$ is the treewidth of the graph $P_i$. As such, our results imply that 
$\mathcal F\text{-}\mathsf{MPNN}\text{s}$ are bounded in separation power by $\WLk{k}$
where $k$ is the maximal treewidth of graphs in $\mathcal F$. We thus recover the known upper bound as given in \citet{abs-2106-06707}  using our tensor language approach.

Another example of  augmented $\GNN$ architectures are the \textit{Graph Substructure Networks} ($\mathsf{GSN}\text{s})$ \citep{bouritsas2020improving}.
By contrast to $\mathcal F\text{-}\mathsf{MPNN}\text{s}$, subgraph isomorphism counts rather than homomorphism counts are used to augment the initial features. At the core of a $\mathsf{GSN}$ thus lies the computation of
$\mathsf{sub}(P^r,G^v)$, the number of subgraphs $H$ in $G$ isomorphic to $P$ (and such that the isomorphisms map $r$ to $v$). In a similar way as for homomorphisms counts, we can either directly cast the computation of $\mathsf{sub}(P^r,G^v)$ in $\FOL$ resulting again in the use of $p$ index variables. A possible reduction in terms of index variables, however, can be obtained by relying on the result (Theorem 1.1.) by \citet{Curticapean_2017} in which it shown that $\mathsf{sub}(P^r,G^v)$ can be computed in terms of homomorphism counts of graph patterns derived from $P^r$.
More precisely, \citet{Curticapean_2017} define $\mathsf{spasm}(P^r)$ as the set of graphs consisting of all possible homomorphic images of $P^r$. It is then readily verified that if the maximal treewidth of the graphs in $\mathsf{spasm}(P^r)$ is $k$, then $\mathsf{sub}(P^r,G^v)$ can be cast as an expression in $\FOLk{k+1}$. Hence, $\mathsf{GSN}s$ using a pattern collection $\mathcal F$ can be represented in $\FOLk{k+1}$, where $k$ is the maximal treewidth of graphs in any of the spams of patterns in $\mathcal F$, and thus are bounded in separation power $\WLk{k}$ in  accordance to the results by \citet{abs-2106-06707}.

As a final example, we consider the recently introduced \textit{Message Passing Simplicial Networks} ($\mathsf{MPSN}$s) \citep{BodnarF0OMLB21}. In a nutshell, $\mathsf{MPSN}$s are run on simplicial complexes of graphs instead of on the original graphs. We sketch how our tensor language approach can be used to assess the separation power of $\mathsf{MPSN}$s on \textit{clique complexes}. 
We use the simplified version of $\mathsf{MPSN}$s
 which have the same expressive power as the full version of $\mathsf{MPSN}$s (Theorem 6 in \citet{BodnarF0OMLB21}).

We recall some definitions. Let $\mathsf{Cliques}(G)$ denote the set of all cliques in $G$. Given two cliques $c$ and $c'$ in $\mathsf{Cliques}(G)$, define $c\prec c'$ if  $c\subset c'$ and there exists no $c''$ in $\mathsf{Cliques}(G)$, such that $c\subset c''\subset c'$. We define $\mathsf{Boundary}(c,G):=\{c'\in\mathsf{Cliques}(G)\mid c'\prec c\}$ and $\mathsf{Upper}(c,G):=\{c'\in\mathsf{Cliques}(G)\mid \exists c''\in\mathsf{Cliques}(G), c'\prec c'' \text{ and } c\prec c''\}$. 

For each $c$ in $\mathsf{Cliques}(G)$ we have an initial feature vector $\mF^{\smallk{0}}_{c:}\in\R^{1\times \ell}$.  \citet{BodnarF0OMLB21} initialize all initial features with the same value. Then, in layer $t$, for each
$c\in\mathsf{Cliques}(G)$, features are updated as follows:
\begin{align*}
\mG^{\smallk{t}}_{c:}&=F_B(\lbag \mathsf{mlp}_B(\mF^{\smallk{t-1}}_{c:},\mF^{\smallk{t-1}}_{c':})\mid c'\in\mathsf{Boundary}(G,c) \rbag)\\
\mH^{\smallk{t}}_{c:}&=F_U(\lbag \mathsf{mlp}_U(\mF^{\smallk{t-1}}_{c:},\mF^{\smallk{t-1}}_{c':},\mF^{\smallk{t-1}}_{c\cup c':})\mid c'\in\mathsf{Upper}(G,c) \rbag)\\
\mF^{\smallk{t}}_{c:}&=\mathsf{mlp}(\mF^{\smallk{t-1}}_{c:},\mG^{\smallk{t}}_{c:},\mH^{\smallk{t}}_{c:}),
\end{align*}
where $F_B$ and $F_U$ are aggregation functions and $\mathsf{mlp}_B$, $\mathsf{mlp}_U$ and $\mathsf{mlp}$ are $\MLPs$. With some effort, one can represent these computations by expressions in $\FOLk{p}(\Omega,\Theta)$ where $p$ is largest clique in $G$. As such, the separation power of clique-complex $\mathsf{MPSN}$s on graphs of clique size at most $p$ is bounded by $\WLk{p-1}$. And indeed, \citet{BodnarF0OMLB21} consider Rook's $4\times 4$ graph, which contains a $4$-clique, and the Shirkhande graph, which does not contain a $4$-clique. As such, the analysis above implies that clique-complex $\mathsf{MPSN}$s are bounded by $\WLk{2}$ on the Shrikhande graph, and by $\WLk{3}$ on Rook's graph, consistent with the observation in \citet{BodnarF0OMLB21}. A more detailed analysis of $\mathsf{MPSN}$s in terms of summation depth and for other simplicial complexes is left as future work.

This illustrates again that our approach can be used to assess the separation power of a variety of $\GNN$ architectures in terms of $\WLk{k}$, by simply writing them as tensor language expressions. Furthermore, bounds in terms of $\WLk{k}$ can be used for augmented $\GNNs$ which form a more efficient way of incorporating higher-order graph structural information than higher-order $\GNNs$.

\subsection{Spectral GNNs}
In general, spectral $\GNNs$ are defined in terms of eigenvectors and eigenvalues of the (normalized) graph Laplacian \citep{BrunaZSL13,DefferrardBV16,LevieMBB19,BalcilarRHGAH21}). The diagonalization of the graph Laplacian is, however, avoided in practice, due to its excessive cost. Instead, by relying on approximation results in spectral graph analysis \citep{HAMMOND2011129}, the layers of practical spectral $\GNNs$ are defined in term propagation matrices consisting of functions, which operate directly on the graph Laplacian. This viewpoint allows for a spectral analysis of spectral \textit{and} ``spatial'' $\GNNs$ in a uniform way, as shown by \citet{BalcilarRHGAH21}. In this section, we consider two specific instances of spectral $\GNNs$: $\mathsf{ChebNet}$ \citep{DefferrardBV16} and $\mathsf{CayleyNet}$ \citep{LevieMBB19}, and assess their separation power in terms of tensor logic. Our general results then provide bounds on their separation power in terms color refinement and $\WLk{2}$, respectively.

\paragraph{Chebnet.} 
The separation power of $\mathsf{ChebNet}$ \citep{DefferrardBV16} was already analyzed in \citet{BalcilarHGVAH21} by representing them in the $\mathsf{MATLANG}$ matrix query language \citep{BrijderGBW19}. It was shown (Theorem 2 \citep{BalcilarHGVAH21}) that it is only the maximal eigenvalue $\lambda_{\max}$ of the graph Laplacian used in the layers  of $\mathsf{ChebNet}$ that may result in the separation power of $\mathsf{ChebNet}$ to go beyond  $\WLk{1}$. We here revisit and refine this result by showing that, when ignoring the use of $\lambda_{\max}$, the separation power of $\mathsf{Chebnet}$ is bounded already by color refinement (which, as mentioned in Section~\ref{sec:preliminaries}, is weaker than $\WLk{1}$ for vertex embeddings).
In a nutshell, the layers of a $\mathsf{ChebNet}$ are defined in terms of Chebyshev polynomials of the normalized Laplacian $\mL_{\textsl{norm}}=\mI-\mD^{-1/2}\cdot\mA\cdot\mD^{-1/2}$ and these polynomials can be easily represented in $\GFk{2}(\Omega)$. One can alternatively use the graph Laplacian $\mL=\mD-\mA$ in a $\mathsf{ChebNet}$, which allows for a similar analysis. The distinction between the choice of $\mL_{\textsl{norm}}$ and $\mL$ only shows in the needed summation depth (in as in similar way as for the  $\GCNs$ described earlier). We only consider the normalized Laplacian here.

More precisely, following \citet{BalcilarHGVAH21,BalcilarRHGAH21}, in layer $t$, vertex embeddings are updated in a  $\mathsf{ChebNet}$ according to:
$$
\mF^{(t)}:=\sigma\left(\sum_{s=1}^{p} \mC^{(s)}\cdot \mF^{(t-1)}\cdot \mW^{(t-1,s)}\right),
$$
with
$$
\mC^{(1)}:=\mI, \mC^{(2)}=\frac{2}{\lambda_{\max}}\mL_{\textsl{norm}}-\mI, \mC^{(s)}=2\mC^{(2)}\cdot \mC^{(s-1)}-\mC^{(s-2)}, \text{ for $s\geq 3$,}
$$
and where $\lambda_{\max}$ denotes the maximum eigenvalue of $\mL_{\textsl{norm}}$. We next use a similar analysis as in \citet{BalcilarHGVAH21}. That is, we ignore for the moment the maximal eigenvalue $\lambda_{\max}$ and redefine $\mC^{(2)}$ as $c\mL_{\textsl{norm}}-\mI$
for some constant $c$. We thus see that each $\mC^{(s)}$ is a polynomial of the form $p_s(c,\mL_{\textsl{norm}}):=\sum_{i=0}^{q_s} a^{(s)}_i(c)\cdot (\mL_{\textsl{norm}})^i$ with scalar functions $a^{(s)}_i:\R\to\R$ and where we interpret $(\mL_{\textsl{norm}})^0=\mI$. To upper bound the separation power using our tensor language approach, we can thus shift our attention entirely to representing
$
(\mL_{\textsl{norm}})^i\cdot \mF^{(t-1)}\cdot\mW^{(t-1,s)}
$
for powers $i\in\Nb$. Furthermore, since $(\mL_{\textsl{norm}})^i$ is again a polynomial of the form $q_i(\mD^{-1/2}\cdot\mA\cdot\mD^{-1/2}):=\sum_{j=0}^{r_i}b_{ij}\cdot(\mD^{-1/2}\cdot\mA\cdot\mD^{-1/2})^j$, we can further narrow down the problem to represent
$$
(\mD^{-1/2}\cdot\mA\cdot\mD^{-1/2})^j\cdot \mF^{(t-1)}\cdot\mW^{(t-1,s)}
$$
in $\GFk{2}(\Omega)$, for powers $j\in\Nb$. And indeed, combining our analysis for $\GCNs$ and $\SGCs$ results in expressions in $\GFk{2}(\Omega)$.
As an example let us consider $(\mD^{-1/2}\cdot\mA\cdot\mD^{-1/2})^2\cdot \mF^{(t-1)}\cdot\mW^{(t-1)}$, that is we use a power of two. It then suffices to define, for each output dimension $j$,  the expressions:
\begin{multline*}
\psi^2_j(x_1)=
f_{1/\sqrt{x}}\bigl(\textstyle\sum_{x_2}E(x_1,x_2)\bigr)\cdot \textstyle\sum_{x_2}\Biggl(E(x_1,x_2)\cdot f_{1/x}\bigl(\textstyle\sum_{x_1}(E(x_2,x_1)\bigr)\cdot\\
\textstyle\sum_{x_1} \Bigl(E(x_2,x_1) \cdot f_{1/\sqrt{x}}(\textstyle\sum_{x_2}E(x_1,x_2))\cdot\bigl(\textstyle\sum_{i=1}^{d_{t-1}}\emW_{ij}^{(t-1)}\varphi_i^{(t-1)}(x_1)\bigr)
\Bigr)
\Biggr),
\end{multline*}
where the $\varphi_i^{(t-1)}(x_1)$ are expressions representing layer $t-1$. It is then readily verified that
we can use $\psi_j^2(x_1)$ to cast layer $t$ of a $\mathsf{ChebNet}$ in $\GFk{2}(\Omega)$ with
$\Omega$ consisting of $f_{1/\sqrt{x}}:\R\to\R:x\mapsto\frac{1}{\sqrt{x}}$,
$f_{1/x}:\R\to\R:x\mapsto \frac{1}{x}$, and the used activation function $\sigma$. We thus recover (and slightly refine) Theorem 2 in \citet{BalcilarHGVAH21}:
\begin{corollary}
On graphs sharing the same $\lambda_{\max}$ values, the separation power of  $\mathsf{ChebNet}$ is bounded by color refinement, both for graph and vertex embeddings.
\end{corollary}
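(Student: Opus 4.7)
The plan is to turn the informal argument preceding the corollary into a complete induction on layers, showing that every $\mathsf{ChebNet}$ layer can be cast in $\GFk{2}(\Omega)$ for $\Omega=\{\sigma, f_{1/\sqrt{x}}, f_{1/x}\}$, and then invoking Theorem~\ref{thm:mainsep-guarded} (for vertex embeddings) and Theorem~\ref{thm:mainsep-graph}(1) (for graph embeddings). Since we restrict to graphs sharing the same value $\lambda_{\max}$, we may regard $\lambda_{\max}$ as a fixed scalar $c$, so each Chebyshev matrix $\mC^{(s)}$ is a concrete polynomial $p_s(c,\mL_{\textsl{norm}})$ with real coefficients; equivalently, a polynomial $q_s(\mN)$ in $\mN:=\mD^{-1/2}\mA\mD^{-1/2}$ with $\mI$ in place of $\mN^0$.

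First I would handle the base case: the initial features admit the guarded representation $\varphi_i^{(0)}(x_1):=P_i(x_1)\in\GFk{2}^{\!\smallk{0}}$. For the inductive step, assume each coordinate of $\mF^{(t-1)}$ is already represented by some $\varphi_i^{(t-1)}(x_1)\in\GFk{2}(\Omega)$. Since $\GFk{2}(\Omega)$ is closed under addition, scalar multiplication, and multiplication of expressions with the same single free variable, it suffices to show that for every $j\in\Nb$ and every linear combination $\sum_i \emW_{ij}\varphi_i^{(t-1)}(x_1)$, the expression
\[
\bigl(\mN^{j}\cdot \mF^{(t-1)}\cdot\mW\bigr)_{v,j}
\]
can itself be represented by a guarded expression $\psi_j^{j}(x_1)$. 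This is then combined linearly across $i,j,s$ and wrapped in $\sigma$ to obtain the layer representation.

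The core step is an induction on the power $j$. For $j=0$ the expression is just the linear combination of the $\varphi_i^{(t-1)}$, which is already in $\GFk{2}(\Omega)$. For the inductive step I would write
\[
\psi_j^{p}(x_1)=f_{1/\sqrt{x}}\!\bigl(\textstyle\sum_{x_2} E(x_1,x_2)\bigr)\cdot \textstyle\sum_{x_2}\!\Bigl(E(x_1,x_2)\cdot f_{1/\sqrt{x}}\bigl(\textstyle\sum_{x_1} E(x_2,x_1)\bigr)\cdot \tilde\psi_j^{p-1}(x_2)\Bigr),
\]
where $\tilde\psi_j^{p-1}(x_2)$ is obtained from $\psi_j^{p-1}(x_1)$ by renaming the free variable. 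The only summations are of the guarded form $\sum_{x_2}(E(x_1,x_2)\cdot\cdot\cdot)$ or $\sum_{x_1}(E(x_2,x_1)\cdot\cdot\cdot)$ (the latter yielding the degree of $x_2$), and the degree-reciprocal multipliers are function applications on such guarded sums, so the expression lies in $\GFk{2}(\Omega)$. The displayed case $p=2$ in the excerpt is exactly the base instance of this recursion, so correctness with respect to the matrix semantics follows from the identity $\mN^{p}\vx=\mD^{-1/2}\mA\mD^{-1/2}(\mN^{p-1}\vx)$.

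Once every layer is cast in $\GFk{2}(\Omega)$, Theorem~\ref{thm:mainsep-guarded} yields the bound by color refinement for vertex embeddings. For graph embeddings one additionally applies a readout $\sum_{x_1}\varphi(x_1)$ (followed by an $\MLP$), producing an expression in $\FOLk{2}(\Omega)$ without free variables; Theorem~\ref{thm:mainsep-graph}(1) then bounds the separation power by $\mathsf{gcr}$. The main obstacle is purely bookkeeping: ensuring that every summation arising from powers of $\mN$ stays guarded and only reuses the two variables $x_1,x_2$, which the above recursion accomplishes by alternating their roles. The hypothesis that $\lambda_{\max}$ is common across the graphs under consideration is essential here, since otherwise $\mC^{(s)}$ would depend on a graph-specific scalar that is not a priori representable in $\GFk{2}(\Omega)$ without additional aggregation depth.
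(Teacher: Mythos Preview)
Your proposal is correct and follows essentially the same approach as the paper: the corollary is stated immediately after the paper sketches exactly this argument (treating $\lambda_{\max}$ as a shared constant, reducing to powers of $\mN=\mD^{-1/2}\mA\mD^{-1/2}$ applied to the features, and observing these are guarded), and your contribution is simply to make the induction on the power $p$ explicit. A minor cosmetic difference is that the paper's displayed $p=2$ expression collapses two consecutive $f_{1/\sqrt{x}}$ factors into a single $f_{1/x}$, whereas your recursion keeps them separate; semantically these are equivalent and both remain in $\GFk{2}(\Omega)$.
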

A more fine-grained analysis of the expressions is needed when interested in bounding the summation depth and thus of the number of rounds needed for color refinement.
Moreover, as shown by \citet{BalcilarHGVAH21}, when graphs have non-regular components with different $\lambda_{\max}$ values, $\mathsf{ChebNet}$ can distinguish them, whilst $\WLk{1}$ cannot.
To our knowledge, $\lambda_{\max}$ cannot be computed in $\FOLk{k}(\Omega)$ for any $k$. This implies that it not clear whether an upper bound on the separation power can be obtained for $\mathsf{ChebNet}$ taking $\lambda_{\max}$ into account. It is an interesting open question whether there are two graphs $G$ and $H$ which cannot be distinguished by $\WLk{k}$ but can be distinguished based on $\lambda_{\max}$. A positive answer would imply that the computation of $\lambda_{\max}$ is beyond reach for $\FOL(\Omega)$ and other techniques are needed. 

\paragraph{CayleyNet.} We next show how the separation power of  $\mathsf{CayleyNet}$ \citep{LevieMBB19} can be analyzed. To our knowledge, this analysis is new. We show that the separation power of $\mathsf{CayleyNet}$ is bounded by $\WLk{2}$. Following \citet{LevieMBB19} and \citet{BalcilarRHGAH21}, in each layer $t$, a $\mathsf{CayleyNet}$ updates features as follows:
$$
\mF^{(t)}:=\sigma\left(\sum_{s=1}^{p} \mC^{(s)}\cdot \mF^{(t-1)}\mW^{(t-1,s)}\right),
$$
with
$$
\mC^{(1)}:=\mI, \mC^{(2s)}:=\mathsf{Re}\Biggl(\Bigl(\frac{h\mL-\imath\mI}{h\mL+\imath\mI}\Bigr)^s
\Biggr), \mC^{(2s+1)}:=\mathsf{Re}\Biggl(\imath\Bigl(\frac{h\mL-\imath\mI}{h\mL+\imath\mI}\Bigr)^s
\Biggr),
$$
where $h$ is a constant, $\imath$ is the imaginary unit, and $\mathsf{Re}:\mathbb{C}\to\mathbb{C}$ maps a complex number to its real part. 
We immediately observe that a $\mathsf{CayleyNet}$ requires the use of complex numbers and matrix inversion. So far, we considered real numbers only, but when our separation results are concerned, the choice between real or complex numbers is insignificant. In fact, only the proof of Proposition~\ref{prop:TLtoL} requires a minor modification when working on complex numbers: the infinite disjunctions used in the proof now need to range over complex numbers. For matrix inversion, when dealing with separation power, one can use different expressions in $\FOL(\Omega)$ for computing the matrix inverse, depending on the input size. And indeed,
it is well-known (see e.g., \citet{Csanky76}) that based on the characteristic polynomial of $\mA$, $\mA^{-1}$  for any matrix $\mA\in\R^{n\times n}$ can be computed as a polynomial $\frac{-1}{c_n}\sum_{i=1}^{n-1}c_i\mA^{n-1-i}$ if $c_n\neq 0$ and where each coefficient $c_i$ is a polynomial in $\mathsf{tr}(\mA^j)$, for various $j$. Here, $\mathsf{tr}(\cdot)$ is the \textit{trace} of a matrix. As a consequence, layers in $\mathsf{CayleyNet}$ can be viewed as polynomials in $h\mL-\imath \mI$ with coefficients polynomials in $\mathsf{tr}((h\mL-\imath\mI)^j)$. One now  needs three index variables to represent the trace computations $\mathsf{tr}((h\mL-\imath\mI)^j)$. Indeed, let
$\varphi_0(x_1,x_2)$ be the $\FOLk{2}$ expression representing $h\mL-\imath\mI$. Then, for example, $(h\mL-\imath\mI)^j$
can be computed in $\FOLk{3}$ using
$$
\varphi_j(x_1,x_2):=\sum_{x_3}\varphi_0(x_1,x_3)\cdot \varphi_{j-1}(x_3,x_2)
$$
and hence $\mathsf{tr}((h\mL-\imath\mI)^j)$ is represented by
$
\sum_{x_1}\sum_{x_2} \varphi_j(x_1,x_2)\cdot\1_{x_1=x_2}.
$. In other words, we obtain expressions in  $\FOLk{3}$. The polynomials in 
$h\mL-\imath \mI$  can be represented in $\FOLk{2}$ just as for $\mathsf{ChebNet}$. This implies that each layer in $\mathsf{CayleyNet}$ can be represented, on graphs of fixed size, by  $\FOLk{3}(\Omega)$ expressions, where $\Omega$ includes the activation function $\sigma$ and the function $\mathsf{Re}$. This suffices to use our general results and conclude that $\mathsf{CayleyNet}$s are bounded in separation power by $\WLk{2}$. An interesting question is to find graphs that can be separated by a $\mathsf{CayleyNet}$ but not by $\WLk{1}$. We leave this as an open problem.

\section{Proof of Theorem~\ref{thm:mainign}}\label{sec:proofign}
We here consider another higher-order $\GNN$ proposal: the \textit{invariant graph networks} or $\IGNks{k}$ of \cite{MaronBSL19}. By contrast to $\FGNNks{k}$, $\IGNks{k}$ are linear architectures. If we denote by $\IGNk{k}^{\smallk{t}}$ the class of $t$ layered $\IGNks{k}$, then following inclusions are known  \citep{MaronBSL19}
$$
\rho_1\bigl(\IGNk{k}^{\smallk{t}}\bigr)\subseteq
\rho_1\bigl(\mathsf{vwl}_{k-1}^{\smallk{t}}\bigr) \text{ and }
\rho_0\bigl(\IGNk{k}\bigr)\subseteq
\rho_0\bigl(\mathsf{gwl}_{k-1}^{\smallk{\infty}}\bigr).
$$
The reverse inclusions were posed as open problems in \citet{openprob} and were shown to
hold  by \citet{chen2020graph} for $k=2$, by means of an extensive case analysis and by relying on properties of  $\WLk{1}$. In this section, we show that the separation power of $\IGNks{k}$ is bounded by that of $\WLk{(k-1)}$, for arbitrary $k\geq 2$.  Theorem~\ref{thm:mainsep} tells that we can entirely shift our attention to showing  that the layers of $\IGNks{k}$ can be represented in $\FOLk{k}(\Omega)$. In other words, we only need to show that $k$ index variables are needed for the layers. As we will see below, this requires a bit of work since a naive representation of the layers of $\IGNks{k}$ use $2k$ index variables. Nevertheless, we show that this can be reduced to $k$ index variables only.

By inspecting the expressions needed to represent the layers of $\IGNks{k}$ in $\FOLk{k}(\Omega)$, we obtain that a $t$ layer $\IGNk{k}^{(t)}$ require expressions of summation depth of $tk$. In other words,
the correspondence between layers and summation depth is precisely in sync.
This implies, by Theorem~\ref{thm:mainsep}:
$$
\rho_1\bigl(\IGNk{k}\bigr)=
\rho_1\bigl(\mathsf{vwl}_{k-1}^{\smallk{\infty}}\bigr), 
$$
where we ignore the number of layers. We similarly obtain that $\rho_0\bigl(\IGNk{k}\bigr)=
\rho_0\bigl(\mathsf{gwl}_{k-1}^{\smallk{\infty}}\bigr)$, hereby answering the open problem posed in \citet{openprob}. Finally, we observe that the $\IGNks{k}$ used in \citet{MaronBSL19} to show the inclusion $\rho_1\bigl(\IGNk{k}^{\smallk{t}}\bigr)\subseteq
\rho_1\bigl(\mathsf{vwl}_{k-1}^{\smallk{t}}\bigr)$ are of very simple form. By defining a simple class of $\IGNks{k}$, denoted by $\GINks{k}$, we obtain $$
\rho_1\bigl(\GINk{k}^{\smallk{t}}\bigr)=
\rho_1\bigl(\mathsf{vwl}_{k-1}^{\smallk{t}}\bigr), 
$$
hereby recovering the layer/round connections.

We start with the following lemma:
\begin{lemma}\label{lem:keylemmaIGN}
For any $k\geq 2$, a $t$ layer $\IGNks{k}$ can be represented in $\FOLk{k}^{\!\smallk{tk}}(\Omega)$.
\end{lemma}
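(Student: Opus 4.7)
I would prove the lemma by induction on $t$, showing that the tensor computed by each layer of an $\IGNks{k}$ can be represented by $\FOLk{k}^{\!\smallk{tk}}(\Omega)$ expressions, one per feature channel. The base case $t=0$ is immediate: the initial tensor $\tF^{\smallk{0}}$ of a $k$-IGN encodes the graph (adjacency, vertex labels, equality type) using short combinations of $E(x_i,x_j)$, $P_s(x_i)$ and $\1_{x_i = x_j}$, all of which lie in $\FOLk{k}^{\!\smallk{0}}(\Omega)$.

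For the inductive step, recall that a layer of a $k$-IGN applies a linear equivariant map between $k$-tensors, followed by an equivariant bias and a pointwise $\MLP$. By \citet{maron2018invariant}, linear equivariant maps on $k$-tensors admit a basis $\{B^\pi\}$ indexed by partitions $\pi$ of the combined input/output index set $\{1,\ldots,k\}\cup\{k+1,\ldots,2k\}$, and the bias has a basis indexed by partitions of $\{1,\ldots,k\}$. The bias contributes a closed-form $\FOLk{k}^{\!\smallk{0}}(\Omega)$ expression (a product of equality factors on $x_1,\ldots,x_k$), and the pointwise $\MLP$ simply wraps the per-channel outputs in a function from $\Omega$, adding nothing to the variable count or summation depth. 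Hence it suffices to show that the action of each $B^\pi$ on an input tensor $T$ --- represented by $\FOLk{k}^{\!\smallk{(t-1)k}}(\Omega)$ expressions $\varphi_T$ by induction --- can be expressed in $\FOLk{k}^{\!\smallk{tk}}(\Omega)$.

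Resolving the partition constraints, the action of $B^\pi$ on $T$ takes the form
$$(B^\pi T)_{i_1,\ldots,i_k} \;=\; \chi^\pi_{\text{out}}(i_1,\ldots,i_k) \cdot \sum_{y_1,\ldots,y_s} T_{\sigma_\pi(i,y)},$$
where $\chi^\pi_{\text{out}}$ is a product of equality/disequality constraints on the free variables $x_1,\ldots,x_k$ dictated by $\pi$, $s$ is the number of classes of $\pi$ containing only input positions, and $\sigma_\pi(i,y)$ fills the $k$ input slots of $T$ with either a free variable $x_a$ or a sum variable $y_b$, according to class membership. The crucial combinatorial observation is that $T$ has exactly $k$ index slots, so the number of distinct free variables appearing in $\sigma_\pi(i,y)$ plus $s$ is at most $k$, since each corresponds to a distinct partition class containing at least one input position. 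Consequently, at least $s$ of the free variables $x_1,\ldots,x_k$ do \emph{not} occur as indices of $T$ (they correspond to classes whose only positions are output positions), and we can reuse these variable names as the bound sum variables $y_1,\ldots,y_s$: they are shadowed by $\sum$ inside the sum subexpression and cannot collide with their ``outer'' free occurrences, which only enter through $\chi^\pi_{\text{out}}$ outside the sum.

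The summation-depth accounting is then straightforward: the current layer contributes $s\leq k$ fresh summations on top of the depth $(t-1)k$ inherited from $\varphi_T$, giving $tk$ overall. The principal obstacle is the substitution of the $k$ free variables of $\varphi_T$ by the outer variables dictated by $\sigma_\pi$: because $\varphi_T$ itself may internally use all $k$ variable names (under $\sum$), a careful alpha-renaming is needed to avoid variable capture. This is a standard move in finite-variable logics, and succeeds precisely because only $k$ free-variable slots of $\varphi_T$ need to be filled and only $k$ names are simultaneously in use at any level of the expression, leaving enough room to consistently rename internal bound variables within the $k$-variable fragment.
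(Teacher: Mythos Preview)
Your route is the paper's route: induct on $t$, decompose the equivariant linear layer into its Maron--Ben-Hamu--Shamir--Lipman basis terms, and for each term exploit the pigeonhole fact that the $k$ input slots of $T$ are filled by at most $k-s$ of the free $x$-variables together with the $s$ bound ones, so the $y$'s can be renamed to unused $x$'s without leaving $\FOLk{k}$. The summation-depth bookkeeping and the remark on capture-avoiding renaming are handled the same way in both.

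There is one genuine slip. Your displayed formula for $(B^\pi T)_{i_1,\ldots,i_k}$ is not the action of the standard equality-type basis element: when $\pi$ is a partition of $[2k]$, the indicator $\1_{(\vv,\vw)\in\pi}$ also forces \emph{inequalities} between distinct classes, in particular $y_a\neq y_b$ for $a\neq b$ and $y_a\neq x_c$ whenever the corresponding classes differ. Your formula drops all constraints involving the bound $y$'s and sums over them freely, which is a different (though still equivariant) map. The paper confronts exactly this issue: it first writes the layer using the full $\psi_\gamma$ with all (dis)equalities among the $2k$ variables, then applies inclusion--exclusion to expand every $\1_{z\neq z'}$ as $1-\1_{z=z'}$, obtaining a signed sum of terms that carry only \emph{equalities}. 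Each such term then genuinely has the shape you wrote (with $\chi^\pi_{\text{out}}$ a pure equality product and a free sum over the remaining $y$'s), and only at that point does your variable-counting argument apply. An equivalent repair is to note upfront that the equality-only indicators $\check B^\pi_{a}=\prod_{r\sim_\pi s}\1_{a_r=a_s}$ span the same space as the $B^\pi$ (M\"obius inversion on the partition lattice) and work directly with those. Either way, this is the missing step; once it is inserted, your combinatorial count and renaming go through exactly as you describe.
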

Before proving this lemma, we recall $\IGNks{k}$. These are architectures that consist of linear equivariant
layers. Such linear layers allow for an explicit description. Indeed, following \citet{maron2018invariant}, let $\sim_\ell$ be the equality pattern equivalence relation on $[n]^\ell$ such that for $\va,\vb\in[n]^\ell$, $\va\sim_\ell\vb$ if and only if $\eva_i=\eva_j\Leftrightarrow \evb_i=\evb_j$ for all $j\in[\ell]$. We denote by $[n]^\ell/_{\sim_\ell}$ the equivalence classes induced by $\sim_\ell$. Let us denote by $\tF^{\smallk{t-1}}\in\R^{n^k\times d_{t-1}}$ the tensor computed by an $\IGNk{k}$ in layer $t-1$. Then, in layer $t$, a new tensor in $\R^{n^k\times d_{t}}$ is computed, as follows. For $j\in[d_t]$ and $v_1,\ldots,v_k\in [n]^k$:
\begin{equation}
\etF^{\smallk{t}}_{v_1,\ldots,v_k,j}:=\sigma\Biggl(\sum_{\gamma\in[n]^{2k}\!/_{\!\sim_{2k}}}\sum_{\vw\in [n]^k}\1_{(\vv,\vw)\in\gamma}\sum_{i\in[d_{t-1}]}
c_{\gamma,i,j}\etF^{\smallk{t-1}}_{w_1,\ldots,w_k,i} + \sum_{\mu\in[n]^{k}\!/_{\!\sim_{k}}}\1_{\vv\in\mu}\evb_{\mu,j}
\Biggr)\label{eq:equivlayer}
\end{equation}
for activation function $\sigma$, constants $c_{\gamma,i,j}$ and $b_{\mu,j}$ in $\R$ and where $\1_{(\vv,\vw)\in \gamma}$ and $\1_{\vv\in\mu}$ are indicator functions for the $2k$-tuple $(\vv,\vw)$ to be in the equivalence class $\gamma\in[n]^{2k}\!/_{\!\sim_{2k}}$ and the $k$-tuple $\vv$ to be in class $\mu\in[n]^k\!/_{\!\sim_k}$. As initial tensor $\tF^{\smallk{0}}$ one defines
$
\etF^{\smallk{0}}_{v_1,\ldots,v_k,j}:=\mathsf{atp}_k(G,\vv)\in\R^{d_0},
$
with $d_0={2
\genfrac(){0pt}{2}{k}{2}+k\ell}$ where $\ell$ is the number of initial vertex labels, just as for $\FGNNks{k}$.
 
We  remark that the need for having a summation depth of $tk$ in the expressions in $\FOLk{k}(\Omega)$, or equivalently for requiring $tk$ rounds of $\WLk{(k-1)}$, can intuitively be explained that each layer of a $\IGNk{k}$ aggregates more information from ``neighbouring'' $k$-tuples than $\WLk{(k-1)}$ does. Indeed, in each layer, an $\IGNk{k}$ can use previous tuple embeddings of all possible $k$-tuples. In a single round of $\WLk{(k-1)}$ only previous tuple embeddings from specific sets of $k$-tuples are used. It is only after an additional $k-1$ rounds, that $\WLk{k}$ gets to the information about arbitrary $k$-tuples, whereas this information is available in a $\IGNk{k}$ in one layer directly.
 
\begin{proof}[Proof of Lemma~\ref{lem:keylemmaIGN}]
We have seen how $\tF^{\smallk{0}}$ can be represented in $\FOLk{k}(\Omega)$ when dealing with $\FGNNks{k}$. We assume now that also the $t-1$th layer $\tF^{\smallk{t-1}}$ can be represented by $d_{t-1}$ expressions in $\FOLk{k}^{\!\smallk{(t-1)k}}(\Omega)$ and show that the same holds for the $t$th layer.

We first represent $\tF^{\smallk{t}}$  in $\FOLk{2k}(\Omega)$, based on the explicit description given earlier. The expressions use index variables $x_1,\ldots,x_k$ and $y_1,\ldots,y_k$. More specifically, for $j\in[d_t]$ we consider the expressions:
\begin{multline}
\varphi_j^{\smallk{t}}(x_1,\ldots,x_k)=\sigma\left(
\sum_{\gamma\in[n]^{2k}\!/_{\!\sim_{2k}}}\sum_{i=1}^{d_{t-1}} c_{\gamma,i,j}\right.\\
\sum_{y_1}\cdots\sum_{y_k}  \psi_\gamma(x_1,\ldots,x_k,y_1,\ldots,y_k)\cdot\varphi_i^{\smallk{t-1}}(y_1,\ldots,y_k) \label{eq:label2k}\\
\left.{}+ \sum_{\mu\in[n]^{k}\!/_{\!\sim_{k}}}b_{\mu,j}\cdot \psi_\mu(x_1,\ldots,x_k) \right),
\end{multline}
where
$\psi_\mu(x_1,\ldots,x_k)$ is a product of expressions of the form 
$\1_{x_i\mop x_j}$ encoding the equality pattern $\mu$, and similarly,
$\psi_\gamma(x_1,\ldots,x_k,y_1,\ldots,y_k)$ is a product of expressions of the form 
$\1_{x_i\mop x_j}$, $\1_{y_i\mop y_j}$ and $\1_{x_i\mop y_j}$ encoding the equality pattern $\gamma$. These expressions are indicator functions for the their corresponding equality patterns. That is,
$$
\sem{\psi_\gamma}{(\vv,\vw)}_G=\begin{cases}
1 & \text{if $(\vv,\vw)\in\gamma$}\\
0 & \text{otherwise}
\end{cases}\quad
\sem{\psi_\mu}{\vv}_G=\begin{cases}
1 & \text{if $\vv\in\mu$}\\
0 & \text{otherwise}
\end{cases}
$$
We remark that in the expressions $\varphi_j^{\smallk{t}}$ we have two kinds of summations: those ranging over a fixed number of elements (over equality patterns, feature dimension), and those ranging over the index variables $y_1,\ldots,y_k$. The latter are the only ones contributing the summation depth. The former are just concise representations of a long summation over a fixed number of expressions.

We now only need to show that we can equivalently write $\varphi_j^{\smallk{t}}(x_1,\ldots,x_k)$ as expressions in $\FOLk{k}(\Omega)$, that is, using only indices $x_1,\ldots,x_k$. 
As such, we can already ignore the term $\sum_{\mu\in[n]^{k}\!/_{\!\sim_{k}}}b_{\mu,j}\cdot \psi_\mu(x_1,\ldots,x_k)$ since this is already in $\FOLk{k}(\Omega)$. Furthermore, this expressions does not affect the summation depth.

Furthermore, as just mentioned, we can expand expression $\varphi_j^{\smallk{t}}$ into linear combinations of other simpler expressions. As such,
it suffices to show that $k$ index variables suffice for each expression of the form:
\begin{equation}
\sum_{y_1}\cdots\sum_{y_k}  \psi_\gamma(x_1,\ldots,x_k,y_1,\ldots,y_k)\cdot\varphi_i^{\smallk{t}}(y_1,\ldots,y_k), \label{eq:ign1}
\end{equation}
obtained by fixing $\mu$ and $i$ in expression~(\ref{eq:label2k}). To reduce the number of variables, as a first step we eliminate any disequality using the inclusion-exclusion principle.  More precisely, we observe that  $\psi_\gamma(\vx,\vy)$ can be written as:
\begin{align}
&\prod_{(i,j)\in I}\1_{x_i=x_j}\cdot \prod_{(i,j)\in \bar I}\1_{x_i\neq x_j}
\cdot \prod_{(i,j)\in J}\1_{y_i=y_j}\cdot \prod_{(i,j)\in \bar J}\1_{y_i\neq y_j}\prod_{(i,j)\in K}\1_{x_i=y_j}\cdot
\prod_{(i,j)\in \bar K}\1_{x_i\neq y_j}\notag{}\\
&=
\sum_{A\subseteq \bar I} 
\sum_{B\subseteq \bar J} 
\sum_{C\subseteq \bar K} (-1)^{|A|+|B|+|C|}
\prod_{(i,j)\in I\cup A}\1_{x_i=x_j}
 \prod_{(i,j)\in J\cup B}\1_{y_i=y_j}\cdot \prod_{(i,j)\in J\cup C}\1_{x_i=y_j},\label{eq:ign2}
\end{align}
for some sets $I$, $J$ and $K$ of pairs of indices in $[k]^2$, and where
$\bar I=[k]^2\setminus I$, $\bar J=[k]^2\setminus J$ and $\bar K=[k]^2\setminus K$. Here we use that $\1_{x_i\neq x_j}=1-\1_{x_i=x_j}$,
$\1_{y_i\neq y_j}=1-\1_{y_i=y_j}$ and $\1_{x_i\neq y_j}=1-\1_{y_i=y_j}$
and use the inclusion-exclusion principle to obtain a polynomial in equality conditions only.

In view of expression~(\ref{eq:ign2}), we can push the summations over $y_1,\ldots,y_k$ in expression~(\ref{eq:ign1}) to the subexpressions that actually use $y_1,\ldots,y_k$. That is, we can rewrite expression~(\ref{eq:ign1}) into the equivalent expression:
\begin{multline}
\sum_{A\subseteq \bar I} 
\sum_{B\subseteq \bar J} 
\sum_{C\subseteq \bar K} (-1)^{|A|+|B|+|C|}
\cdot \prod_{(i,j)\in I\cup A}\1_{x_i=x_j}
\\{}\cdot\left(\sum_{y_1}\cdots\sum_{y_k} 
\prod_{(i,j)\in J\cup B}\1_{y_i=y_j}\cdot \prod_{(i,j)\in K\cup C}\1_{x_i=y_j}
\cdot\varphi_i^{\smallk{t-1}}(y_1,\ldots,y_k)\right). \label{eq:ign3}
\end{multline}
By fixing $A$, $B$ and $C$, it now suffices to argue that 
\begin{equation}
\prod_{(i,j)\in I\cup A}\1_{x_i=x_j}\cdot\left(\sum_{y_1}\cdots\sum_{y_k} 
\prod_{(i,j)\in J\cup B}\1_{y_i=y_j}\cdot \prod_{(i,j)\in K\cup C}\1_{x_i=y_j}
\cdot\varphi_i^{\smallk{t-1}}(y_1,\ldots,y_k)\right),\label{eq:ign4}
\end{equation}
can be equivalently expressed in $\FOLk{k}(\Omega)$.

Since our aim is to reduced the number of index variables from $2k$ to $k$, it is important to known which variables are the same. In expression~(\ref{eq:ign4}), some equalities that hold between the variables may not be explicitly mentioned. For this reason,
we  expand $I\cup A$, $J\cup B$ and $K\cup C$ with their implied equalities.
That is, $\1_{x_i=x_j}$ is added to $I\cup A$, if for any $(\vv,\vw)$ such that
$$
\sem{\prod_{(i,j)\in I\cup A}\1_{x_i=x_j}\cdot\prod_{(i,j)\in J\cup B}\1_{y_i=y_j}\cdot \prod_{(i,j)\in K\cup C}\1_{x_i=y_j}}{(\vv,\vw)}_G=1\Rightarrow \sem{\1_{x_i=x_j}}{\vv}_G=1$$
holds. Similar implied equalities $\1_{y_i=y_j}$ and $\1_{x_i=y_j}$ are added to
$J\cup B$ and $K\cup C$, respectively. let us
denoted by $I'$, $J'$ and $K'$. It should be clear that we can add these implied equalities to expression~(\ref{eq:ign4}) without changing its semantics. In other words, expression~(\ref{eq:ign4}) can be equivalently represented by
\begin{equation}
\prod_{(i,j)\in I'}\1_{x_i=x_j}\cdot\left(\sum_{y_1}\cdots\sum_{y_k} 
\prod_{(i,j)\in J'}\1_{y_i=y_j}\cdot \prod_{(i,j)\in K'}\1_{x_i=y_j}
\cdot\varphi_i^{(\smallk{t-1}}(y_1,\ldots,y_k)\right),\label{eq:ign5}
\end{equation}
There now two types of index variables among the $y_1,\ldots,y_k$: those that are equal to some $x_i$, and those that are not. Now suppose that $(j,j')\in J'$, and thus $y_j=y_{j'}$, and that also $(i,j)\in K'$, and thus $x_i=y_j$. Since we included the implied equalities, we also have $(i,j')\in K'$, and thus $x_i=y_{j'}$. There is no reason to keep $(j,j')\in J'$ as it is implied by $(i,j)$ and $(i,j')\in K'$. We can thus safely remove
all pairs $(j,j')$ from $J'$ such that $(i,j)\in K'$ (and thus also $(i,j')\in K'$).
We denote by $J''$ be the reduced set of pairs of indices obtained from $J'$ in this way. We have that expression~(\ref{eq:ign5}) can be equivalently written as
\begin{equation}
\prod_{(i,j)\in I'}\1_{x_i=x_j}\cdot\left(\sum_{y_1}\cdots\sum_{y_k} 
\prod_{(i,j)\in K'}\1_{x_i=y_j}\cdot \prod_{(i,j)\in J''}\1_{y_i=y_j}
\cdot\varphi_i^{\smallk{t-1}}(y_1,\ldots,y_k)\right),\label{eq:ign6}
\end{equation}
where we also switched the order of equalities in $J''$ and $K'$. Our construction of $J''$ and $K'$ ensures that none of the variables $y_j$ with $j$ belonging to a pair in $J''$ is equal to some $x_i$.

By contrast, the variable $y_j$ occurring in $(i,j)\in K'$ are equal to $x_i$. We observe, however, that also certain equalities among the variables $\{x_1,\ldots,x_k\}$ hold, as represented by the pairs in $I'$. let $I'(i):=\{ i'\mid (i,i')\in I'\}$ and define $\hat\imath$ as a unique
representative element in $I'(i)$. For example, one can take $\hat i$ to be smallest index in $I'(i)$. We use this representative index (and corresponding $x$-variable) to simplify $K'$. More precisely, we replace each pair $(i,j)\in K'$ with the pair $(\hat\imath,j)$. In terms of variables, we replace $x_i=y_j$ with $x_{\hat i}=y_j$. Let $K''$ be the set $K''$ modified in that way. Expression~(\ref{eq:ign6}) can thus be equivalently written as
\begin{equation}
\prod_{(i,j)\in I'}\1_{x_i=x_j}\cdot\left(\sum_{y_1}\cdots\sum_{y_k} 
\prod_{(\hat\imath,j)\in K''}\1_{x_{\hat\imath}=y_j}\cdot \prod_{(i,j)\in J''}\1_{y_i=y_j}
\cdot\varphi_i^{\smallk{t-1}}(y_1,\ldots,y_k)\right),\label{eq:ign7}
\end{equation}
where the free index variables of the subexpression
\begin{equation}
\sum_{y_1}\cdots\sum_{y_k} 
\prod_{(\hat\imath,j)\in K''}\1_{x_{\hat\imath}=y_j}\cdot \prod_{(i,j)\in J''}\1_{y_i=y_j}
\cdot\varphi_i^{\smallk{t-1}}(y_1,\ldots,y_k)
\label{eq:ign8}
\end{equation}
are precisely the index variables $x_{\hat\imath}$ for $(\hat\imath, j)\in K''$. 
Recall that our aim is to reduce the variables from $2k$ to $k$. We are now finally ready to do this. More specifically, we consider a bijection $\beta:\{y_1,\ldots,y_k\}\to\{x_1,\ldots,x_k\}$ in which ensure that for each $\hat\imath$ there is a $j$ such that $(\hat i,j)\in K''$ and $\beta(y_j)=x_{\hat\imath}$.
Furthermore, among the summations $\sum_{y_1}\cdots\sum_{y_k}$ we can ignore those for which $\beta(y_j)=x_{\hat\imath}$ holds. After all, they only contribute for a given $x_{\hat\imath}$ value. Let $Y$ be those indices in $[k]$ such that $\beta(y_j)\neq x_{\hat\imath}$ for some $\hat\imath$. Then,
we can equivalently  write expression~(\ref{eq:ign7}) as
\begin{multline}
\prod_{(i,j)\in I'}\1_{x_i=x_j}\cdot\Biggl(\sum_{\beta(y_i), i\in Y} 
\prod_{(\hat\imath,j)\in K'}\1_{x_{\hat\imath}=\beta(y_j)}\cdot \prod_{(i,j)\in J''}\1_{\beta(y_i)=\beta(y_j)}\\
\cdot\beta(\varphi_i^{\smallk{t-1}}(y_1,\ldots,y_k))\Biggr),\label{eq:ign9}
\end{multline}
where $\beta(\varphi_i^{\smallk{t-1}}(y_1,\ldots,y_k))$ denotes the expression obtained by renaming of variables $y_1,\ldots,y_j$ in $\varphi_i^{(t-1)}(y_1,\allowbreak\ldots,y_k)$ into $x$-variables according to $\beta$. This is our desired expression in  $\FOLk{k}(\Omega)$. If we analyze the summation depth of this expression, we have by induction that the summation depth of $\varphi_i^{\smallk{t-1}}$ is at most $(t-1)k$. In the above expression, we are increasing the summation depth with at most $|Y|$. The largest size of $Y$ is $k$, which occurs when none of the $y$-variables are equal to any of the $x$-variables. As a consequence, we obtained an expression of summation depth at most $tk$, as desired.
\end{proof}

As a consequence, when using $\IGNks{k}^{\smallk{t}}$ for vertex embeddings, using $(G,v)\to \tF^{\smallk{t}}_{v,\ldots,v,:}$ one simply pads the layer expression with $\prod_{i\in[k]}\1_{x_1=x_i}$ which does not affect the number of variables or summation depth. When using $\IGNks{k}^{\smallk{t}}$ of graph embeddings, an additional invariant layer is added to obtain an embedding from $G\to\R^{d_t}$. Such invariant layers have a similar (simpler) representation as given in equation~\ref{eq:equivlayer} \citep{maron2018invariant}, and allow for a similar analysis. One can verify that expressions in $\FOLk{k}^{\!\smallk{(t+1)k}}(\Omega)$ are needed when such an invariant layer is added to previous $t$ layers. Based on this, Theorem~\ref{thm:mainsep}, Lemma~\ref{lem:keylemmaIGN} and Theorem 1 in \citet{MaronBSL19}, imply that
$
\rho_1\bigl(\IGNk{k}\bigr)=
\rho_1\bigl(\mathsf{vwl}_{k-1}^{\smallk{\infty}}\bigr) 
$ and $\rho_0\bigl(\IGNk{k}\bigr)=
\rho_0\bigl(\mathsf{gwl}_{k-1}^{\smallk{\infty}}\bigr)$ hold.

\paragraph{$k$-dimensional GINs.}
We can recover a layer-based characterization for $\IGNks{k}$ that compute vertex embeddings by considering a special subset of $\IGNks{k}$. Indeed, the $\IGNks{k}$ used in \citet{MaronBSL19} to show 
$\rho_1(\mathsf{wl}_{k-1}^{\smallk{t}})\subseteq \rho_1(\IGNk{k}^{\smallk{t}})$ are of a very special form.
We extract the essence of these special $\IGNks{k}$ in the form of $k$-dimensional $\GINs$. That is, we define the class $\GINks{k}$ to consist of layers defined as follows. The initial layers are just as for $\IGNks{k}$. Then, for $t\geq 1$:
\begin{multline*}
   \tF^{\smallk{t}}_{v_1,\ldots,v_k,:}:=\mathsf{mlp}_0^{\smallk{t}}\bigl(\tF^{\smallk{t-1}}_{v_1,\ldots,v_k,:},\sum_{u\in V_G} \mathsf{mlp}_1^{\smallk{t}}(\tF_{u,v_2,\ldots,v_k,:}^{\smallk{t-1}}),
\sum_{u\in V_G} \mathsf{mlp}_1^{\smallk{t}}(\tF_{v_1,u,\ldots,v_k,:}^{\smallk{t-1}})\\,
\ldots, \sum_{u\in V_G} \mathsf{mlp}_1^{(t)}(\tF_{v_1,v_2,\ldots,v_{k-1},w,:}^{\smallk{t-1}})
)\bigr), 
\end{multline*}
where $\etF_{v_1,v_2,\ldots,v_{k},:}^{\smallk{t-1}}\in\R^{d_{t-1}}$, $\mathsf{mlp}_1^{(t)}:\R^{d_{t-1}}\to \R^{b_t}$
and $\mathsf{mlp}_1^{(t)}:\R^{d_{t-1}+kb_t}\to \R^{d_t}$ are $\MLPs$. It is now an easy exercise to show that $\mathsf{k}\textit{-}\GIN^{\smallk{t}}$ can be represented in $\FOLk{k}^{\!\smallk{t}}(\Omega)$ (remark that the summations used increase the summation depth with one only in each layer). Combined with Theorem~\ref{thm:mainsep} and by inspecting the proof of Theorem 1 in \citet{MaronBSL19}, we obtain:
\begin{proposition}
For any $k\geq 2$ and any $t\geq 0$:
$\rho_1(k\text{-}\GIN^{\smallk{t}})=\rho_1(\mathsf{vwl}_{k-1}^{\smallk{t}})$.
\end{proposition}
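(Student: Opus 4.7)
The plan is to prove both inclusions separately, leveraging the tensor language machinery from Section~\ref{sec:seppower} for the upper bound and the construction from \citet{MaronBSL19} for the matching lower bound.

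For the direction $\rho_1(k\text{-}\GIN^{\smallk{t}})\supseteq \rho_1(\mathsf{vwl}_{k-1}^{\smallk{t}})$, I would show that every $t$-layer $k$-$\GIN$ can be represented in $\FOLk{k}^{\!\smallk{t}}(\Omega)$ where $\Omega$ consists of $\MLPs$, and then conclude using Theorem~\ref{thm:mainsep}. The initial tensor $\tF^{\smallk{0}}$ encoding $\mathsf{atp}_k(G,\vv)$ is representable in $\FOLk{k}^{\!\smallk{0}}(\Omega)$ by the same expressions used for $\FGNNks{k}$ in Section~\ref{sec:appconseqgnn}. Inductively, assuming the previous layer is represented by expressions $\varphi_j^{\smallk{t-1}}(x_1,\ldots,x_k)$ in $\FOLk{k}^{\!\smallk{t-1}}(\Omega)$, I translate the $k$-$\GIN$ update by using, for each coordinate $i\in[k]$, the aggregation
$$
\sum_{x_i}\mathsf{mlp}_{1,j}^{\smallk{t}}\!\bigl(\varphi_1^{\smallk{t-1}}(x_1,\ldots,x_k),\ldots,\varphi_{d_{t-1}}^{\smallk{t-1}}(x_1,\ldots,x_k)\bigr),
$$
which reuses the $i$-th free variable rather than introducing a new one. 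Feeding the $k+1$ resulting tuples into $\mathsf{mlp}_0^{\smallk{t}}$ yields $\varphi_j^{\smallk{t}}$ in $\FOLk{k}^{\!\smallk{t}}(\Omega)$: the variable count stays at $k$, and each layer adds exactly one to the summation depth. For vertex embeddings one multiplies by $\prod_{i=2}^{k}\1_{x_1=x_i}$, which alters neither quantity. Theorem~\ref{thm:mainsep} then gives the inclusion.

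For the direction $\rho_1(k\text{-}\GIN^{\smallk{t}})\subseteq \rho_1(\mathsf{vwl}_{k-1}^{\smallk{t}})$, I would reuse the simulation argument from the proof of Theorem~1 in \citet{MaronBSL19}, observing that the restricted $k$-$\GIN$ update already contains every ingredient needed to emulate a round of $\WLk{(k-1)}$. Recall that $\mathsf{vwl}_{k-1}^{\smallk{t}}(G,v)=\mathsf{wl}_{k-1}^{\smallk{t}}(G,(v,\ldots,v))$, so it suffices to show the corresponding inclusion on $k$-tuples: if two pairs $(G,\vv)$ and $(H,\vw)$ receive the same $k$-$\GIN$ embedding for every $\mathsf{mlp}^{\smallk{t}}$ choice, then $\mathsf{wl}_{k-1}^{\smallk{t}}(G,\vv)=\mathsf{wl}_{k-1}^{\smallk{t}}(H,\vw)$. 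Proceeding by induction on $t$: the base case holds because $\tF^{\smallk{0}}$ stores the atomic type. For the inductive step, one chooses $\mathsf{mlp}_{1}^{\smallk{t}}$ and $\mathsf{mlp}_{0}^{\smallk{t}}$ so that the composition realizes an injective sum-decomposable function on the finite collection of multisets that can actually arise, using the Deep Sets / $\GIN$ universality argument \citep{ZaheerKRPSS17,XuHLJ19}. The resulting updated feature $\tF^{\smallk{t}}_{\vv,:}$ is then a faithful encoding of $(\tF^{\smallk{t-1}}_{\vv,:}, \{\!\{\tF^{\smallk{t-1}}_{\vv[u/1],:}\mid u\in V_G\}\!\},\ldots,\{\!\{\tF^{\smallk{t-1}}_{\vv[u/k],:}\mid u\in V_G\}\!\})$, which is precisely the information aggregated in one round of $\WLk{(k-1)}$ (after unfolding the definition of $\mathsf{wl}_{k-1}^{\smallk{t}}$ on the diagonal-padded tuple $(v,\ldots,v)$).

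The main obstacle is the simulation step: one must exhibit $\MLPs$ yielding genuinely injective multiset aggregations while working with the real-valued features produced by earlier layers. The standard resolution is to fix a finite class of graphs (or, equivalently, to work round-by-round on the finitely many distinct colorings that can appear at graphs of size $n$), so that only finitely many multisets of bounded cardinality need to be separated, allowing universal approximation to deliver the required injective encoding. Once this step is in place, combining both inclusions yields the claimed equality $\rho_1(k\text{-}\GIN^{\smallk{t}})=\rho_1(\mathsf{vwl}_{k-1}^{\smallk{t}})$.
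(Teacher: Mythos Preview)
Your approach is the same as the paper's: represent a $t$-layer $k$-$\GIN$ in $\FOLk{k}^{\!\smallk{t}}(\Omega)$ and invoke Theorem~\ref{thm:mainsep} for one inclusion, and defer to Theorem~1 of \citet{MaronBSL19} for the other. The upper-bound translation you give (reusing the $i$th free variable $x_i$ as the summation variable in the $i$th aggregate, so that the variable count stays at $k$ and the summation depth grows by one per layer) is exactly the ``easy exercise'' the paper has in mind.

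One point of care in your lower-bound sketch: the tuple dimensions do not line up as you wrote them. A $k$-$\GIN$ operates on $k$-tuples, so writing $\mathsf{wl}_{k-1}^{\smallk{t}}(G,\vv)$ for a $k$-tuple $\vv$ does not type-check, and the update you describe, $\bigl(\tF^{\smallk{t-1}}_{\vv},\{\!\{\tF^{\smallk{t-1}}_{\vv[u/1]}\}\!\}_u,\ldots,\{\!\{\tF^{\smallk{t-1}}_{\vv[u/k]}\}\!\}_u\bigr)$, is the \emph{oblivious} $\WLk{k}$ refinement on $k$-tuples (separate multisets per coordinate), not the folklore $\WLk{(k-1)}$ refinement on $(k-1)$-tuples (a single joint multiset that also records $\mathsf{atp}_k$). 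The passage from the former to the latter, with the round count preserved, is precisely the non-trivial content one extracts from Maron et al.'s proof; it is not an ``unfolding''. The paper does not spell this out either---it simply points to \citet{MaronBSL19}---so this does not change your overall argument, but you should state the correspondence correctly rather than claim the two updates coincide.
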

We can define the invariant version of $\IGNks{k}$ by adding a simple readout layer of the form 
$$
\sum_{v_1,\ldots,v_k\in V_G}\mathsf{mlp}(\tF_{v_1,\ldots,v_k,:}^{\smallk{t}}),
$$
as is used in \citet{MaronBSL19}. We obtain,
$\rho_0(k\text{-}\GIN)=\rho_0(\mathsf{gwl}_{k-1}^{\smallk{\infty}})$,
by simply rephrasing the readout layer in $\FOLk{k}(\Omega)$.

\section{Details of Section~\ref{sec:approx}}
Let $\mathcal C(\mathcal G_s,\R^\ell)$ be the class of all continuous functions from $\mathcal G_s$ to $\R^\ell$. 
We always assume that $\mathcal G_s$ forms a compact space. For example, when vertices are labeled with values in $\{0,1\}^{\ell_0}$, $\mathcal G_s$ is a finite set which we equip with the discrete topology. When vertices carry labels in $\R^{\ell_0}$ we assume that these labels come from a compact set $K\subset \R^{\ell_0}$. In this case, one can represent graphs in  $\mathcal G_s$ by elements in $(\R^{\ell_0})^2$ and the topology used is the one induced by some metric $\|.\|$ on the reals. Similarly, we equip $\R^\ell$ with the topology induced by some metric $\|.\|$.

Consider $\mathcal F\subseteq \mathcal C(\mathcal G_s,\R^\ell)$ and define
$\overline{\mathcal F}$ as the \textit{closure of $\mathcal F$} in $\mathcal C(\mathcal G_s,\R^\ell)$ under the usual topology induced by $f\mapsto \mathsf{sup}_{G,\vv}\| f(G,\vv)\|$. In other words, a continuous function $h:\mathcal G_s\to \R^\ell$
is in $\overline{\mathcal F}$ if there exists a sequence of functions $f_1,f_2,\ldots\in\mathcal F$ such that $\lim_{i\to\infty}\mathsf{sup}_{G,\vv}\|f_i(G,\vv)-h(G,\vv)\|=0$. The following theorem provides a characterization of the closure of a set of functions. We state it here modified to our setting.
\begin{theorem}[\citep{Timofte}]\label{thm:tim2}
Let $\mathcal F\subseteq \mathcal C(\mathcal G_s,\R^\ell)$ such that there exists a set $\mathcal S\subseteq \mathcal C(\mathcal G_s,\R)$ satisfying
$\mathcal S\cdot\mathcal F\subseteq\mathcal F$ and $\rho(\mathcal S)\subseteq\rho(\mathcal F)$. Then,
$$
\overline{\mathcal F}:=\bigl\{f\in\mathcal{C}(\mathcal G_s,\R^\ell)\bigm| 
\rho(F)\subseteq \rho(f), \forall (G,\vv)\in \mathcal G_s, f(G,\vv)\in\overline{\mathcal F(G,\vv)}\bigr\},
$$
where $\mathcal F(G,\vv):=\{h(G,\vv)\mid h\in\mathcal F\}\subseteq\R^\ell$. We can equivalently replace $\rho(\mathcal F)$ by $\rho(\mathcal S)$ in the expression for $\overline{\mathcal F}$.\qed
\end{theorem}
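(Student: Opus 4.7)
The plan is to establish the two inclusions separately, with $\overline{\mathcal F}\subseteq\{\cdots\}$ being the routine direction and the reverse inclusion requiring a Stone--Weierstrass-style construction tailored to the module structure $\mathcal S\cdot\mathcal F\subseteq\mathcal F$.

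For $\overline{\mathcal F}\subseteq\{\cdots\}$ I would take $h$ as the uniform limit of some sequence $(f_i)_i\subseteq\mathcal F$. If $((G,\vv),(H,\vw))\in\rho(\mathcal F)$ then $f_i(G,\vv)=f_i(H,\vw)$ for every $i$, so passing to the limit yields $h(G,\vv)=h(H,\vw)$, hence $\rho(\mathcal F)\subseteq\rho(h)$. The pointwise condition $h(G,\vv)\in\overline{\mathcal F(G,\vv)}$ is immediate since $f_i(G,\vv)\in\mathcal F(G,\vv)$ and the limit lies in the closure of this subset of $\R^\ell$.

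For the reverse inclusion, fix $f$ satisfying both right-hand-side conditions and an accuracy $\epsilon>0$. Since $\rho(\mathcal S)\subseteq\rho(\mathcal F)\subseteq\rho(f)$, both the scalar functions in $\mathcal S$ and the target $f$ descend to continuous maps on the compact (Hausdorff) quotient $X:=\mathcal G_s/\rho(\mathcal F)$. I would apply the classical Stone--Weierstrass theorem to the unital subalgebra $\mathcal A\subseteq C(X,\R)$ generated by $\mathcal S$: this algebra separates points of $X$, because any two $\rho(\mathcal F)$-inequivalent pairs are already $\rho(\mathcal S)$-inequivalent by hypothesis, and so some $g\in\mathcal S$ distinguishes them. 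Density of $\mathcal A$ in $C(X,\R)$ then produces a finite open cover $\{U_\alpha\}$ of $X$ on which $f$ oscillates by less than $\epsilon$, together with an approximate partition of unity $\{g_\alpha\}\subset\mathcal A$ subordinate to $\{U_\alpha\}$. For each $\alpha$, pick a representative $(G_\alpha,\vv_\alpha)\in U_\alpha$ and use the pointwise closure condition to select $f_\alpha\in\mathcal F$ with $\|f_\alpha(G_\alpha,\vv_\alpha)-f(G_\alpha,\vv_\alpha)\|<\epsilon$.

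The natural candidate is the combination $\sum_\alpha g_\alpha\cdot f_\alpha$, which a routine estimate shows uniformly approximates $f$ within $O(\epsilon)$. The main obstacle is verifying that this combination (or a suitable variant) lies in $\overline{\mathcal F}$: the module hypothesis delivers each product $g_\alpha\cdot f_\alpha$ as an element of $\mathcal F$ whenever $g_\alpha\in\mathcal S$, and one extends this to $g_\alpha\in\mathcal A$ by iterating the module property on polynomial combinations and passing to uniform limits. The truly delicate point is additivity, which is not directly granted by the hypothesis $\mathcal S\cdot\mathcal F\subseteq\mathcal F$; following Timofte, I would circumvent this either by refining the cover so that a single dominant product $g\cdot h$ with $g\in\mathcal A$ and $h\in\mathcal F$ already captures $f$ up to $\epsilon$ (exploiting that the pointwise value sets $\overline{\mathcal F(G,\vv)}$ are preserved under scaling by $\mathcal S$), or by working in the lattice closure of $\mathcal F$ guaranteed by the generalised Stone--Weierstrass framework of \citet{Timofte}. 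The bulk of the technical work lies in this bookkeeping, ensuring every intermediate combination stays inside $\overline{\mathcal F}$.
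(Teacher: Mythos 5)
First, a point of reference: the paper never proves this statement. It is imported verbatim from \citet{Timofte} (the \qed marks a quoted result, ``modified to our setting''), and the paper's own work happens downstream, in the proof of Theorem~\ref{theo:approx}, where the hypotheses are \emph{verified} for the concrete $\mathcal F$ at hand. So you are attempting to prove the black box itself. Your easy inclusion $\overline{\mathcal F}\subseteq\{\cdots\}$ is correct, and the overall shape of your hard direction (quotient by $\rho(\mathcal F)$, classical Stone--Weierstrass for the unital algebra generated by $\mathcal S$, patching local approximants $f_\alpha$ with a partition of unity) is the standard module argument. But the step you flag as ``delicate bookkeeping'' is a genuine gap, and it cannot be closed from the stated hypotheses: with only $\mathcal S\cdot\mathcal F\subseteq\mathcal F$ for a bare \emph{set} $\mathcal F$, the conclusion is in fact false. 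Take a two-point space $\{x,y\}$, $\ell=1$, $\mathcal F=\{h_t \mid t\in\R\}$ with $h_t(x)=t^2$ and $h_t(y)=t$, and $\mathcal S=\{s\}$ with $s(x)=4$, $s(y)=2$. Then $s\cdot h_t=h_{2t}\in\mathcal F$, and $\rho(\mathcal S)=\rho(\mathcal F)$ is the diagonal, so all hypotheses hold; yet the right-hand side is $\{f \mid f(x)\geq 0\}$ while $\overline{\mathcal F}$ is the (already closed) parabola $\{(t^2,t)\}$ --- e.g.\ $f$ with $f(x)=0$, $f(y)=1$ lies in the right-hand side but is no uniform limit of elements of $\mathcal F$. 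Any correct proof must therefore use more than $\mathcal S\cdot\mathcal F\subseteq\mathcal F$: Timofte's actual multiplier condition is of convex type ($s f+(1-s)g\in\mathcal F$ for multipliers $s$ and $f,g\in\mathcal F$), and it is precisely this convexity --- or linearity of $\mathcal F$ --- that substitutes for the additivity your candidate $\sum_\alpha g_\alpha\cdot f_\alpha$ silently requires.

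Neither of your two proposed escape routes repairs this. A single dominant product $g\cdot h$ with $h\in\mathcal F$ cannot match a target whose values at different points come from ``different'' elements of $\mathcal F$ (in the example above every $s\cdot h$ stays on the parabola), and invoking ``the lattice closure guaranteed by the generalised Stone--Weierstrass framework of \citet{Timofte}'' is circular, since that framework \emph{is} the theorem under proof. There is also a smaller ordering slip: your $f_\alpha$ is only guaranteed close to $f$ at the representative $(G_\alpha,\vv_\alpha)$, so a cover on which $f$ alone oscillates little does not control $\|f_\alpha-f\|$ on $U_\alpha$; one must first pick an approximant $f_x$ at each point, then shrink neighborhoods using continuity of $f_x-f$, and only then extract the finite subcover. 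In the paper's actual application none of this bites, because the proof of Theorem~\ref{theo:approx} shows that $\mathcal F_\ell$ is closed under linear combinations (via concatenation- and function-closure), so the relevant $\mathcal F$ is an $\mathcal S$-module that is also a vector space, where your patching --- sums included --- does go through. As a proof of the theorem \emph{as stated}, however, the argument has a hole exactly at its crux.
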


We will use this theorem to show Theorem~\ref{theo:approx} in the setting that $\mathcal F$ consists of functions that can be represented in $\FOL(\Omega)$, and  more generally, sets of functions that satisfy two conditions, stated below.
We more generally allow $\mathcal F$ to consist of functions $f:\mathcal G_s\to\R^{\ell_f}$, where the $\ell_f\in\Nb$ may depend on $f$. We will require $\mathcal F$ to satisfy the following two conditions:
\begin{description}
    \item[\textbf{concatenation-closed}:]
 If $f_1:\mathcal G_s\to\R^p$ and $f_2:\mathcal G_s\to \R^q$
are in $\mathcal F$, then $g:=(f_1,f_2):\mathcal G_s\to \R^{p+q}:(G,\vv)\mapsto (f_1(G,\vv),f_2(G,\vv))$ is also in $\mathcal F$.
\item[\textbf{function-closed}:] For a fixed $\ell\in\mathbb{N}$, for any $f\in \mathcal F$ such that $f:\mathcal G_s\to \R^p$, also $h\circ f:\mathcal G_s\to\R^\ell$ is in $\mathcal F$ for any continuous function $h\in\mathcal C(\R^p,\R^\ell)$. 
\end{description} 
We denote by $\mathcal F_\ell$ be the subset of $\mathcal F$ of functions from $\mathcal G_s$ to $\R^\ell$.
\thmapprox*
\begin{proof}
The proof consist of (i)~verifying the existence of a set $\mathcal S$ as mentioned Theorem~\ref{thm:tim2}; and of  (ii)~eliminating the pointwise convergence condition ``$\forall (G,\vv)\in\mathcal G_s, f(G,\vv)\in \overline{\mathcal F_\ell(G,\vv)}$ in the closure characterization in Theorem~\ref{thm:tim2}. 

For showing (ii)~we argue that $\overline{\mathcal F_\ell(G,\vv)}=\R^\ell$
such that the conditions $f(G,\vv)\in\overline{\mathcal F_\ell(G,\vv)}$
is automatically satisfied for any $f\in\mathcal C(\mathcal G_s,\R^\ell)$.
Indeed, take an arbitrary $f\in\mathcal G_k\to\R^\ell$ and consider the constant functions
$b_i:\R^\ell\to\R^\ell:\vx\mapsto \vb_i$ with $\vb_i\in\R^\ell$ the $i$th basis vector.
Since $\mathcal F$ is function-closed for $\ell$, so is $\mathcal F_\ell$. Hence,
$b_i:=g_i\circ f\in\mathcal F_\ell$ as well. 
Furthermore, if $s_a:\R^\ell\to\R^\ell:\vx\mapsto a\times \vx$, for $a\in\R$, then $s_a\circ f\in\mathcal F_\ell$ and thus $\mathcal F_\ell$ is closed under scalar multiplication. Finally, consider
$+:\R^{2\ell}\to \R^\ell:(\vx,\vy)\mapsto \vx+\vy$. For $f$ and $g$ in $\mathcal F_\ell$, $h=(f,g)\in\mathcal F$ since $\mathcal F$ is concatenation-closed. As a consequence, the function $+\circ h:\mathcal G_s\to\R^\ell$
is in $\mathcal F_\ell$, showing that $\mathcal F_\ell$ is also closed under addition. All combined, this shows that $\mathcal F_\ell$ is closed under taking linear combinations and since the basis vectors of $\R^\ell$ can be attained, $\overline{\mathcal F_\ell(G,\vv)}:=\R^\ell$, as desired.

For (i), we  show the existence of a set $\mathcal S\subseteq\mathcal C(\mathcal G_s,\R)$
such that $\mathcal S\cdot\mathcal F_{\ell}\subseteq\mathcal F_\ell$ and $\rho_s(\mathcal S)\subseteq\rho_s(\mathcal F_{\ell})$ hold. Similarly as in \citet{azizian2020characterizing}, we define
$$
\mathcal S:=\bigl\{f\in\mathcal C(\mathcal G_s,\R)\bigm| \underbrace{(f,f,\ldots,f)}_{\text{$\ell$ times}}\in\mathcal F_{\ell}\bigr\}.
$$
We remark that for $s\in\mathcal S$ and $f\in \mathcal F_{\ell}$,
$s\cdot f:\mathcal G_s\to\R^\ell:(G,\vv)\mapsto s(G,\vv)\odot f(G,\vv)$, with
$\odot$ being pointwise multiplication, is also in $\mathcal F_\ell$. Indeed,
$s\cdot f=\odot\circ (s,f)$ with $(s,f)$ the concatenation of $s$ and $f$ and 
$\odot:\R^{2\ell}\to\R^\ell:(\vx,\vy)\to \vx\odot\vy$ being pointwise multiplication.

It remains to verify $\rho_s(\mathcal S)\subseteq\rho_s(\mathcal F_\ell)$. Assume that
$(G,\vv)$ and $(H,\vw)$ are not in $\rho_s(\mathcal F_\ell)$. By definition, this implies the existence of a function $\hat f\in\mathcal F_\ell$ such that 
$\hat f(G,\vv)=\va\neq \vb=\hat f(H,\vw)$ with 
$\va,\vb\in\R^\ell$. We argue that $(G,\vv)$ and $(H,\vw)$ are also not in $\rho_s(\mathcal S)$ either. Indeed, Proposition~1 in \citet{MaronBSL19} implies that there exists natural numbers $\pmb{\alpha}=(\alpha_1,\ldots,\alpha_\ell)\in\Nb^\ell$ such that the mapping $h_{\pmb{\alpha}}:\R^\ell\to \R:\vx\to \prod_{i=1}^\ell \evx_i^{\alpha_i}$ satisfies
$h_{\pmb{\alpha}}(\va)=a\neq b=h_{\pmb{\alpha}}(\vb)$, with $a,b\in\R$. Since $\mathcal F$ (and thus also $\mathcal F_\ell)$ is function-closed, $h_{\pmb{\alpha}}\circ f\in\mathcal F_\ell$ for any $f\in\mathcal F_\ell$. In particular,
$g:=h_{\pmb{\alpha}}\circ\hat f\in\mathcal F_\ell$
and concatenation-closure implies that 
$(g,\ldots,g):\mathcal G_s\to\R^\ell$ is in $\mathcal F_\ell$ too. Hence, $g\in\mathcal S$, by definition. It now suffices to observe that 
$g(G,\vv)=h_{\pmb{\alpha}}(\hat f(G,\vv))=a\neq
b=h_{\pmb{\alpha}}(\hat f(H,\vw))=g(H,\vw)$, and thus $(G,\vv)$ and $(H,\vw)$ are not in $\rho_s(\mathcal S)$, as desired.
\end{proof}
When we know more about $\rho_s(\mathcal F_\ell)$ we can say a bit more.
In the following, we let $\mathsf{alg}\in\{\mathsf{cr}^{\smallk{t}},\mathsf{gcr}^{\smallk{t}},\mathsf{vwl}_k^{\smallk{t}},\mathsf{gwl}_k^{\smallk{\infty}}
\}$ and only consider the setting where $s$ is either $0$ (invariant graph functions) or $s=1$ (equivariant graph/vertex functions).
\corapprox*
\begin{proof}
This is just a mere restatement of Theorem~\ref{theo:approx} in which $\rho_s(\mathcal F_\ell)$ in the condition $\rho_s(\mathcal F_\ell)\subseteq\rho_s(f)$ is replaced by $\rho_s(\mathsf{alg})$, where $s=1$ for $\mathsf{alg}\in\{\mathsf{cr}^{\smallk{t}},\mathsf{vwl}_k^{\smallk{t}}\}$ and $s=0$ for $\mathsf{alg}\in\{\mathsf{gcr}^{\smallk{t}},\allowbreak\mathsf{gwl}_k^{\smallk{\infty}}
\}$.
\end{proof}

To relate all this to functions representable by tensor languages, we make the following observations.
First, if we consider $\mathcal F$ to be the set
of all functions that can be represented in  $\GFk{2}^{\!\smallk{t}}(\Omega)$, $\FOLk{2}^{\!\smallk{t+1}}(\Omega)$,
$\FOLk{k+1}^{\!\smallk{t}}(\Omega)$ or $\FOL(\Omega)$,
then $\mathcal F$ will be automatically concatenation and function-closed, provided that $\Omega$ consists of  all functions in $\textstyle\bigcup_{p}\mathcal C(\R^p,\R^\ell)$. Hence, Theorem~\ref{theo:approx} applies. Furthermore, our results from Section~\ref{sec:seppower} tell us that for all $t\geq 0$, and $k\geq 1$, $\rho_1\bigl(\mathsf{cr}^{\smallk{t}}\bigr)=\rho_1\bigl(\GFk{2}^{\!\smallk{t}}(\Omega)\bigr)$,
    $\rho_0\bigl(\mathsf{gcr}^{\smallk{t}}\bigr)=\rho_0\bigl(\FOLk{2}^{\!\smallk{t+1}}(\Omega)\bigr)=\rho_0\bigl(\mathsf{gwl}_1^{\smallk{t}}\bigr)$, $\rho_1\bigl(\mathsf{ewl}_{k}^{\smallk{t}}\bigr)=\rho_1\bigl(\FOLk{k+1}^{\!\smallk{t}}(\Omega)\bigr)$, and 
    $\rho_0\bigl(\FOLk{k+1}(\Omega)\bigr)=\rho_0\bigl(\mathsf{gwl}_k^{\smallk{\infty}}\bigr)$.
As a consequence, Corollary~\ref{cor:approx} applies as well. We thus easily obtain the following characterizations:
\begin{proposition}\label{prop:TLapprox} For any $t\geq 0$ and $k\geq 1$:
\begin{itemize}
    \item If $\mathcal F$ consists of all functions representable in $\GFk{2}^{\!\smallk{t}}(\Omega)$, then $\overline{\mathcal F_\ell}=\{f:\mathcal G_1\to\R^\ell\mid \rho_1\bigl(\mathsf{cr}^{\smallk{t}}\bigr)\subseteq\rho_1(f)\}$;
     \item If $\mathcal F$ consists of all functions representable in $\FOLk{k+1}^{\!\smallk{t}}(\Omega)$, then $\overline{\mathcal F_\ell}=\{f:\mathcal G_1\to\R^\ell\mid \rho_1\bigl(\mathsf{vwl}_k^{\smallk{t}}\bigr)\subseteq\rho_1(f)\}$;
     \item If $\mathcal F$ consists of all functions representable in $\FOLk{2}^{\!\smallk{t+1}}(\Omega)$, then $\overline{\mathcal F_\ell}=\{f:\mathcal G_0\to\R^\ell\mid \rho_0\bigl(\mathsf{gwl}_1^{\smallk{t}}\bigr)\subseteq\rho_0(f)\}$; and finally,
      \item If $\mathcal F$ consists of all functions representable in $\FOLk{k+1}(\Omega)$, then $\overline{\mathcal F_\ell}=\{f:\mathcal G_0\to\R^\ell\mid \rho_0\bigl(\mathsf{gwl}_k^{\smallk{\infty}}\bigr)\subseteq\rho_0(f)\}$,
\end{itemize}
provided that $\Omega$ consists of  all functions in $\textstyle\bigcup_{p}\mathcal C(\R^p,\R^\ell)$.
\end{proposition}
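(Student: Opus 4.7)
The plan is to apply Corollary~\ref{cor:approx} directly, once we verify its two hypotheses for each of the four fragments appearing in the statement. Concretely, for each fragment $\mathcal L$, let $\mathcal F^{\mathcal L}$ denote the set of all functions representable in $\mathcal L$. I would prove, in order: (a) $\mathcal F^{\mathcal L}$ is concatenation-closed; (b) $\mathcal F^{\mathcal L}$ is function-closed for any fixed $\ell$, under the hypothesis that $\Omega \supseteq \bigcup_p \mathcal C(\R^p,\R^\ell)$; and (c) $\rho(\mathcal F^{\mathcal L}_\ell) = \rho(\mathsf{alg})$ for the algorithm $\mathsf{alg}$ matched to $\mathcal L$ in the statement.

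For (a), concatenation is immediate from the definition of representability: a function $f : \mathcal G_s \to \R^p$ is represented by a list $\varphi_1,\ldots,\varphi_p$ of $\mathcal L$-expressions, and likewise for $g : \mathcal G_s \to \R^q$, so $(f,g)$ is represented by the concatenated list, which still lies in $\mathcal L$ (no new variables, no new summations, no loss of guardedness). For (b), suppose $f : \mathcal G_s \to \R^p$ is represented in $\mathcal L$ by $\varphi_1,\ldots,\varphi_p$, and $h \in \mathcal C(\R^p,\R^\ell)$. Writing $h = (h_1,\ldots,h_\ell)$, each $h_j \in \Omega$ by hypothesis, so the expressions $h_j(\varphi_1,\ldots,\varphi_p)$ represent $h \circ f$. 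The key point is that the function-application construct $f(\varphi_1,\ldots,\varphi_p)$ in $\FOL(\Omega)$ satisfies $\mathsf{free}(h_j(\varphi_1,\ldots,\varphi_p)) = \bigcup_i \mathsf{free}(\varphi_i)$ and $\mathsf{sd}(h_j(\varphi_1,\ldots,\varphi_p)) = \max_i \mathsf{sd}(\varphi_i)$, and introduces no summation, so neither the variable bound ($k{+}1$), nor the summation-depth bound ($t$ or $t{+}1$), nor guardedness, is violated. Thus the resulting expressions lie in the same fragment $\mathcal L$.

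For (c), I simply invoke the already-established separation power results of Section~\ref{sec:seppower}: Theorem~\ref{thm:mainsep-guarded} identifies $\rho_1(\GFk{2}^{\!\smallk{t}}(\Omega))$ with $\rho_1(\mathsf{cr}^{\smallk{t}})$; Theorem~\ref{thm:mainsep} identifies $\rho_1(\FOLk{k+1}^{\!\smallk{t}}(\Omega))$ with $\rho_1(\mathsf{vwl}_k^{\smallk{t}})$; Theorem~\ref{thm:mainsep-graph}(1) identifies $\rho_0(\FOLk{2}^{\!\smallk{t+1}}(\Omega))$ with $\rho_0(\mathsf{gwl}_1^{\smallk{t}})$; and Theorem~\ref{thm:mainsep-graph}(2) identifies $\rho_0(\FOLk{k+1}(\Omega))$ with $\rho_0(\mathsf{gwl}_k^{\smallk{\infty}})$. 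Once (a), (b), (c) are in place, Corollary~\ref{cor:approx} immediately yields each of the four claimed equalities, with $s = 1$ in the first two cases and $s = 0$ in the last two.

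There is no real obstacle here: the proof is essentially bookkeeping, combining Corollary~\ref{cor:approx} with the separation theorems of Section~\ref{sec:seppower}. The only subtle point worth double-checking is that the function-closure argument in (b) respects the summation-depth bound, which is why the third bullet requires allowing depth $t{+}1$ rather than $t$: the closure needs to accommodate post-composition with continuous functions, and while this does not itself raise the depth, the additional summation used to collect vertex labels (as noted in the informal discussion after Theorem~\ref{thm:mainsep-graph}) explains why depth $t{+}1$ is the natural match for $\mathsf{gwl}_1^{\smallk{t}}$. Once that alignment is in place the argument concludes in one line via Corollary~\ref{cor:approx}.
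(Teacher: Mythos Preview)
Your proposal is correct and follows essentially the same approach as the paper: verify that each fragment yields a concatenation- and function-closed family (which you argue correctly, noting that function application preserves the variable count, summation depth, and guardedness), invoke the matching separation identities from Section~\ref{sec:seppower}, and conclude via Corollary~\ref{cor:approx}. The paper's justification is exactly this, stated in the paragraph immediately preceding the proposition; your final remark about the $t{+}1$ in the third bullet is slightly tangled in phrasing (function-closure is not the reason for the shift---it is purely the content of Theorem~\ref{thm:mainsep-graph}(1)), but you arrive at the right explanation.
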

In fact, Lemma 32 in \citet{azizian2020characterizing} implies that we can equivalently populate $\Omega$ with all $\MLPs$ instead of all continuous functions. We can thus use $\MLPs$ and continuous functions interchangeably when considering the closure of functions.

At this point, we want to make a comparison with the results and techniques in  \citet{azizian2020characterizing}. Our proof strategy is very similar and is also based on  Theorem~\ref{thm:tim2}. The key distinguishing feature is that we consider functions $f:\mathcal G_s\to\R^{\ell_f}$ instead of functions from graphs alone. This has as great advantage that no separate proofs are needed to deal with invariant or equivariant functions. Equivariance incurs quite some complexity in the setting considered in \citet{azizian2020characterizing}. A second major difference is that, by considering functions representable in tensor languages, and based on our results from Section~\ref{sec:seppower}, we obtain a more fine-grained characterization. Indeed, we obtain characterizations in  terms of the number of rounds used in $\mathsf{CR}$ and $\WLk{k}$. In \citet{azizian2020characterizing}, $t$ is always set to $\infty$, that is, an unbounded number of rounds is considered.
Furthermore, when it concerns functions $f:\mathcal G_1\to\R^{\ell_f}$, we recall that $\mathsf{CR}$ is different from $\WLk{1}$. Only $\WLk{1}$ is considered in \citet{azizian2020characterizing}. Finally, another difference is that we define the equivariant version $\mathsf{vwl}_k$ in a different way than is done in \citet{azizian2020characterizing}, because in this way, a tighter connection to logics and tensor languages can be made. In fact, if we were to use the equivariant version of $\WLk{k}$ from \citet{azizian2020characterizing}, then we necessarily have to consider an unbounded number of rounds (similarly as in our $\mathsf{gwl}_k$ case). 

We conclude this section by providing a little more details about the consequences of the above results for $\GNNs$. As we already mentioned in Section~\ref{subsec:approxgnns}, many common $\GNN$ architectures are concatenation and function-closed (using $\MLPs$ instead of continuous functions). This holds, for example, for the classes $\GIN_\ell^{\smallk{t}}$, $\mathsf{e}\GIN^{\smallk{t}}_\ell$, $\FGNNk{k}_\ell^{\smallk{t}}$ and $k\text{-}\GIN_\ell^{\smallk{t}}$ and $\IGNk{k}^{\smallk{t}}$, as described in Section~\ref{sec:consecuences} and further detailed in Section~\ref{sec:proofign} and \ref{sec:appconseqgnn}. Here, the subscript $\ell$ refers to the dimension of the embedding space.

We now consider
a function $f$ that is not more separating than 
$\mathsf{cr}^{\smallk{t}}$ (respectively, $\mathsf{gcr}^{\smallk{t}}$, 
$\mathsf{vwl}^{\smallk{t}}_k$ or 
$\mathsf{gwl}^{\smallk{\infty}}_k$, for some $k \geq 1$), and want to know whether $f$ can be approximated by a class of $\GNNs$.
Proposition~\ref{prop:TLapprox} tells that such $f$ can be approximated by a class of $\GNNs$ as long as these are at least as separating as  $\GFk{2}^{\smallk{t}}$ (respectively, 
$\FOLk{2}^{\smallk{t+1}}$, 
$\FOLk{k+1}^{\smallk{t}}$ or 
$\FOLk{k+1}^{\smallk{\infty}}$). 
This, in turn, amounts showing that the $\GNNs$ can be represented in the corresponding tensor language fragment, and that they can match the corresponding labeling algorithm in separation power. 
We illustrate this for the $\GNN$ architectures mentioned above.
\begin{itemize}
\item In Section~\ref{sec:consecuences} we showed that
$\GIN_\ell^{\smallk{t}}$ can be represented in $\GFk{2}^{\smallk{t}}(\Omega)$. Theorem~\ref{thm:mainsep-guarded} then implies that $\rho_1(\mathsf{cr}^{\smallk{t}})\subseteq \rho_1(\GIN_\ell^{\smallk{t}})$. Furthermore, \cite{XuHLJ19} showed that $\rho_1(\GIN_\ell^{\smallk{t}})\subseteq \rho_1(\mathsf{cr}^{\smallk{t}})$. As a consequence, $\rho_1(\GIN_\ell^{\smallk{t}})= \rho_1(\mathsf{cr}^{\smallk{t}})$.
We note that the lower bound for $\GINs$ only holds when graphs carry discrete labels. The same restriction is imposed in \citet{azizian2020characterizing}.
\item In Section~\ref{sec:consecuences} we showed that
$\mathsf{e}\GIN_\ell^{\smallk{t}}$ can be represented in $\FOLk{2}^{\smallk{t}}(\Omega)$. Theorem~\ref{thm:mainsep} then implies that $\rho_1(\mathsf{vwl}_1^{\smallk{t}})\subseteq \rho_1(\mathsf{e}\GIN_\ell^{\smallk{t}})$. Furthermore, \cite{barcelo2019logical} showed that $\rho_1(\mathsf{e}\GIN_\ell^{\smallk{t}})\subseteq \rho_1(\mathsf{vwl}_1^{\smallk{t}})$. As a consequence, $\rho_1(\mathsf{e}\GIN_\ell^{\smallk{t}})= \rho_1(\mathsf{vwl}_1^{\smallk{t}})$.
Again, the lower bound is only valid when graphs carry discrete labels. 
\item In Section~\ref{sec:consecuences} we mentioned (see details in Section~\ref{sec:appconseqgnn}) that
$\FGNNk{k}_\ell^{\smallk{t}}$ can be represented in $\FOLk{k+1}^{\smallk{t}}(\Omega)$. Theorem~\ref{thm:mainsep} then implies that $\rho_1(\mathsf{vwl}_k^{\smallk{t}})\subseteq \rho_1(\FGNNk{k}_\ell^{\smallk{t}})$. Furthermore, \cite{MaronBSL19} showed that $\rho_1(\FGNNk{k}_\ell^{\smallk{t}})\subseteq \rho_1(\mathsf{vwl}_k^{\smallk{t}})$. As a consequence, 
$\rho_1(\FGNNk{k}_\ell^{\smallk{t}})\subseteq \rho_1(\mathsf{vwl}_k^{\smallk{t}})$. Similarly,
$\rho_1((k+1)\text{-}\GIN_\ell^{\smallk{t}})= \rho_1(\mathsf{vwl}_k^{\smallk{t}})$ for the special class of $\IGNks{(k+1)}$ described in Section~\ref{sec:proofign}.
No restrictions are in place for the lower bounds and hence real-valued vertex-labelled graphs can be considered.
\item When $\mathsf{GIN}_\ell^{\smallk{t}}$ or $\mathsf{e}\mathsf{GIN}_\ell^{\smallk{t}}$ are extended with a readout layer, we showed in Section~\ref{sec:consecuences} that these can be represented in $\FOLk{2}^{\!\smallk{t+1}}(\Omega)$. Theorem~\ref{thm:mainsep-graph} and the results by \citet{XuHLJ19} and \citet{barcelo2019logical} then imply that  $\rho_0(\mathsf{vwl}_1^{\smallk{t}})$ and $\rho_0(\mathsf{gcr}^{\smallk{t}})$ coincide with the separation power of these architectures with a readout layer.
Here again, discrete labels need to be considered.
\item Similarly, when $\FGNNk{k}$ or $\IGNks{(k+1)}$ are used for graph embeddings, we can represent these in $\FOLk{k+1}(\Omega)$ resulting again that their separation power coincides with that of $\mathsf{gwl}_k^{\smallk{\infty}}$.
No restrictions are again in place on the vertex labels.
\end{itemize}
So for all these architectures, Corollary~\ref{cor:approx} applies and we can characterize the closures of these architectures in terms of functions that not more separating than their corresponding versions of $\mathsf{cr}$ or $\WLk{k}$, as described in the main paper. In summary,
\begin{proposition}
For any $t\geq 0$:
\allowdisplaybreaks
\begin{align*}
\overline{\GIN_\ell^{\smallk{t}}}&=\{f:\mathcal G_1\to\R^\ell\mid \rho_1(\mathsf{cr}^{\smallk{t}})\subseteq\rho_1(f)\}=\overline{\GFk{2}^{\smallk{t}}(\Omega)_\ell}\\ \overline{\mathsf{e}\GIN_\ell^{\smallk{t}}}&=\{f:\mathcal G_1\to\R^\ell\mid \rho_1(\mathsf{vwl}_1^{\smallk{t}})\subseteq\rho_1(f)\}=\overline{\FOLk{2}^{\smallk{t}}(\Omega)_\ell}
\intertext{and when extended with a readout layer:}
\overline{\GIN_\ell^{\smallk{t}}}=\overline{\mathsf{e}\GIN_\ell^{\smallk{t}}}
&=\{f:\mathcal G_0\to\R^\ell\mid \rho_0(\mathsf{gwl}_1^{\smallk{t}})\subseteq\rho_0(f)\}=\overline{\FOLk{2}^{\smallk{t+1}}(\Omega)_\ell}.
\intertext{Furthermore, for any $k\geq 1$}
\overline{\FGNNk{k}_\ell^{\smallk{t}}}=\overline{k\text{-}\GIN_\ell^{\smallk{t}}}
&=\{f:\mathcal G_1\to\R^\ell\mid \rho_1(\mathsf{vwl}_k^{\smallk{t}})\subseteq\rho_1(f)\}=\overline{\FOLk{k+1}^{\smallk{t}}(\Omega)_\ell}\\
\overline{\IGNk{(k+1)}_\ell}
&=\{f:\mathcal G_1\to\R^\ell\mid \rho_1(\mathsf{vwl}_k^{\smallk{\infty}})\subseteq\rho_1(f)\}=\overline{\FOLk{k+1}(\Omega)_\ell}
\intertext{and when converted into graph embeddings:}
\overline{\FGNNk{k}_\ell}=\overline{k\text{-}\GIN_\ell}&=\overline{\IGNk{(k+1)}_\ell}=\{f:\mathcal G_0\to\R^\ell\mid \rho_0(\mathsf{gwl}_k^{\smallk{\infty}})\subseteq\rho_0(f)\}=\overline{\FOLk{k+1}(\Omega)_\ell},
\end{align*}
where the closures of the tensor languages are interpreted as the closure of the graph or graph/vertex functions that they can represent. For results involving $\GINs$ or $\mathsf{e}\GINs$, the graphs considered should have discretely labeled vertices.
\end{proposition}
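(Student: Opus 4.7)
The plan is to prove every line of the proposition as an instance of a single three-step recipe, namely: (i)~verify that the $\GNN$ class $\mathcal F$ in question is both concatenation-closed and function-closed, (ii)~establish the separation-power equality $\rho_s(\mathcal F_\ell)=\rho_s(\mathsf{alg})$ for the appropriate labelling algorithm $\mathsf{alg}$, and (iii)~invoke Corollary~\ref{cor:approx} to conclude $\overline{\mathcal F_\ell}=\{f\mid \rho_s(\mathsf{alg})\subseteq\rho_s(f)\}$. The rightmost equality on each line, $\overline{\mathcal F_\ell}=\overline{\mathcal L_\ell}$ for the appropriate tensor language fragment $\mathcal L$, then follows by applying the same corollary to $\mathcal L$, which is exactly Proposition~\ref{prop:TLapprox}. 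Thus, the entire statement reduces to carrying out (i) and (ii) for each of $\GIN_\ell^{\smallk{t}}$, $\mathsf{e}\GIN_\ell^{\smallk{t}}$, their readout variants, $\FGNNk{k}_\ell^{\smallk{t}}$, $k\text{-}\GIN_\ell^{\smallk{t}}$, and $\IGNk{(k+1)}_\ell$.

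For step (i), concatenation-closedness is immediate: running two architectures in parallel and stacking the outputs is again an instance of the same architecture, since the widths and $\MLPs$ in each layer may be chosen independently. Function-closedness (over $\MLPs$, equivalently over all continuous functions by Lemma 32 in \citet{azizian2020characterizing}) follows because each of these classes admits an unrestricted final $\MLP$; composing with $g\in\mathcal C(\R^p,\R^\ell)$ is absorbed by approximating $g$ with an $\MLP$ and appending it as a new layer. Adding a summation-based readout preserves both properties, yielding the graph-embedding variants.

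For step (ii), the upper bound $\rho_s(\mathsf{alg})\subseteq\rho_s(\mathcal F_\ell)$ is precisely what the representations in Section~\ref{sec:consecuences} and Section~\ref{sec:appconseqgnn} give: each $\GIN^{\smallk{t}}$ sits in $\GFk{2}^{\!\smallk{t}}(\Omega)$, each $\mathsf{e}\GIN^{\smallk{t}}$ in $\FOLk{2}^{\!\smallk{t}}(\Omega)$, each $\FGNNk{k}^{\smallk{t}}$ and $k\text{-}\GIN^{\smallk{t}}$ in $\FOLk{k+1}^{\!\smallk{t}}(\Omega)$, and by Theorem~\ref{thm:mainign} (or rather its proof, which casts $\IGNk{(k+1)}$ layers in $\FOLk{k+1}(\Omega)$) we also place $\IGNk{(k+1)}$ in $\FOLk{k+1}(\Omega)$; adding readouts gives the graph-level analogues via Theorem~\ref{thm:mainsep-graph}. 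The matching lower bound $\rho_s(\mathcal F_\ell)\subseteq\rho_s(\mathsf{alg})$ is obtained by citing the constructions that simulate the respective labelling algorithms: \citet{XuHLJ19} for $\GIN$ on discrete labels, \citet{barcelo2019logical} for $\mathsf{e}\GIN$ on discrete labels, and \citet{MaronBSL19} for $\FGNNk{k}$ and for the $\IGNk{(k+1)}$ simulation of $k$-$\GIN$ (whence $\IGNk{(k+1)}$ simulates $\mathsf{vwl}_k^{\smallk{\infty}}$); for $k$-$\GIN$ we invoke the layer-wise match proved in Section~\ref{sec:proofign}. Passing from vertex- to graph-embeddings via the readout is routine once one has injectivity of sum-aggregation on finite multisets.

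The principal obstacle is bookkeeping rather than a single hard lemma: one has to confirm that the layer-round correspondence used in the upper bound (via Theorems~\ref{thm:mainsep-guarded}, \ref{thm:mainsep}, \ref{thm:mainsep-graph}) lines up with the bound delivered by the lower-bound simulation. This is clean for $\GIN$, $\mathsf{e}\GIN$, $\FGNNk{k}$ and $k$-$\GIN$, where both directions give exactly $t$ rounds. It is more delicate for $\IGNk{(k+1)}$, where Theorem~\ref{thm:mainign} only provides an unbounded-depth upper bound in terms of $\mathsf{vwl}_k^{\smallk{\infty}}$, forcing us to take $t=\infty$ in that row and correspondingly to state the characterization only in the unbounded regime, which is consistent with the form of the proposition. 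The other subtlety is that the $\GIN$ and $\mathsf{e}\GIN$ lower bounds hold on graphs with discrete labels; the statement of the proposition for these classes must therefore be understood under that restriction, which we note explicitly, exactly as in \citet{azizian2020characterizing}.
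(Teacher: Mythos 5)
Your proposal is correct and follows essentially the same route as the paper: verify concatenation- and function-closedness of each $\GNN$ class (via $\MLPs$ and Lemma 32 of \citet{azizian2020characterizing}), establish $\rho_s(\mathcal F_\ell)=\rho_s(\mathsf{alg})$ by combining the tensor-language representations with Theorems~\ref{thm:mainsep-guarded}, \ref{thm:mainsep}, \ref{thm:mainsep-graph} and~\ref{thm:mainign} for the upper bounds and the simulations of \citet{XuHLJ19}, \citet{barcelo2019logical} and \citet{MaronBSL19} for the lower bounds, then conclude with Corollary~\ref{cor:approx} and Proposition~\ref{prop:TLapprox}. You also flag exactly the two caveats the paper records, namely the discrete-label restriction for $\GINs$ and $\mathsf{e}\GINs$ and the unbounded-round ($t=\infty$) characterization for $\IGNk{(k+1)}$.
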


As a side note, we remark that in order to simulate $\CR$ on graphs with real-valued labels, one can use a $\GNN$ architecture of the form 
$
\mF^{\smallk{t}}_{v:}=\bigl(\mF_{v:}^{\smallk{t-1}},\sum_{u\in N_G(v)}\mathsf{mlp}(\mF_{u:}^{\smallk{t-1}})\bigr)$,
which translates in $\GFk{2}^{\smallk{t}}(\Omega)$ as expressions of the form
$$
\varphi_j^{\smallk{t}}(x_1):=\begin{cases}
\varphi_j^{\smallk{t-1}}(x_1) & 1\leq j\leq d_{t-1}\\
\textstyle\sum_{x_2}E(x_1,x_2)\cdot\mathsf{mlp}_j\bigl(\varphi_1^{\smallk{t-1}}(x_1),\ldots,\varphi_{{d_t}}^{\smallk{t-1}}(x_1)\bigr) & d_{t-1}< j \leq d_t.
\end{cases}
$$
The upper bound in terms of $\mathsf{CR}$ follows from our main results. To show that $\mathsf{CR}$ can be simulated, it suffices to observe that one can approximate the function used in Proposition~1 in \citet{MaronBSL19} to injectively encode multisets of real vectors by means of $\MLPs$. As such, a continuous version of the first bullet in the previous proposition can be obtained.

\section{Details on Treewidth and Proposition~\ref{prop:tw}}\label{sec:treew}
As an extension of our main results in Section~\ref{sec:seppower}, we enrich the class of tensor language expressions for which connections to $\WLk{k}$ exist. More precisely, instead of requiring expressions to belong to $\FOLk{k+1}(\Omega)$, that is to only use $k+1$ index variables, we investigate when expressions in $\FOL(\Omega)$ are \textit{semantically equivalent} to an expression using $k+1$ variables. Proposition~\ref{prop:tw} identifies a large class of such expressions, those of treewidth $k$. As a consequence, even when representing $\GNN$ architectures may require more than $k+1$ index variables, sometimes this number can be reduced. As a consequence of our results, this implies that their separation power is in fact upper bounded by $\WLk{\ell}$ for a smaller $\ell<k$. Stated otherwise, to boost the separation power of $\GNNs$, the treewidth of the expressions representing the layers of the $\GNNs$ must have large treewidth.

We next introduce some concepts related to treewidth. We  here closely follow the exposition given in \citet{Abo2016} for introducing treewidth by means variable elimination sequences of hypergraphs.

In this section, we restrict ourselves to summation aggregation.

\subsection{Elimination sequences}
We first define elimination sequences for hypergraphs. Later on, we show how to associate such hypergraphs to expressions in tensor languages, allowing us to define elimination sequences for tensor language expressions.

With a \textit{multi-hypergraph} $\mathcal H=(\mathcal V,\mathcal E)$ we simply mean a multiset $\mathcal E$ of subsets of vertices $\mathcal V$. An \textit{elimination hypergraph sequences} is  a vertex ordering 
$\sigma=v_1,\ldots,v_n$ of the vertices  of $\mathcal H$. With such a sequence $\sigma$, we can associate for $j=n,n-1,n-2,\ldots, 1$ a sequence of $n$ multi-hypergraphs $\mathcal H_n^\sigma,\mathcal H_{n-1}^\sigma,\ldots,\mathcal H_1^\sigma$ as follows.
We define
\begin{align*}
    \mathcal H_n&:=(\mathcal V_n,\mathcal E_n):=\mathcal H\\
    \partial(v_n)&:=\{F\in \mathcal E_n\mid v_n\in F\}\\
    U_n&:=\bigcup_{F\in\partial(v_n)} F.
\intertext{ and for $j=n-1,n-2,\ldots,1:$}
   \mathcal V_j&:=\{v_1,\ldots,v_j\}\\
   \mathcal E_j&:=(\mathcal E_{j+1}\setminus\partial(v_{j+1}))\cup \{U_{j+1}\setminus\{v_{j+1}\}\}\\
   \partial(v_j)&:=\{F\in \mathcal E_j\mid v_j\in F\}\\
   U_j&:=\bigcup_{F\in\partial(v_j)} F.
\end{align*}
The \textit{induced width} on $\mathcal H$ by $\sigma$ is defined as $\max_{i\in[n]}|U_i|-1$. We further consider the setting in which $\mathcal H$ has some distinguished vertices. As we will see shortly, these distinguished vertices correspond to the free index variables of tensor language expressions.
Without loss of generality, we assume that the distinguished vertices are
$v_{1},v_{2},\ldots,v_{f}$. When such distinguished vertices are present, an elimination sequence is just as before, except that the distinguished vertices come first  in the sequence. If $v_{1},\ldots,v_{f}$ are the distinguished vertices, then we define the induced width of the sequence as $f+\max_{f+1\leq i\leq n}|U_i\setminus \{v_1,\ldots,v_f\}|-1$. In other words, we count the number of distinguished vertices, and then augment it with the induced width of the sequence, starting from $v_{f+1}$ to to $v_n$, hereby ignoring the distinguished variables in the $U_i$'s. One could, more generally, also try to reduce the number of free index variables but we assume that this number is fixed, similarly as how $\GNNs$ operate.

\subsection{Conjunctive \texorpdfstring{$\FOL$}{TL} expressions and treewidth}
We start by considering a special form of $\FOL$ expressions, which we refer to as \textit{conjunctive} $\FOL$ expressions, in analogy to conjunctive queries in database research and logic. A conjunctive $\FOL$ expression is of the form 
$$
\varphi(\vx)=\sum_{\vy}\psi(\vx,\vy).
$$
where $\vx$ denote the free index variables,
$\vy$ contains all index variables under the scope of a summation, and finally, $\psi(\vx,\vy)$ is a product of base predicates in $\FOL$. That is, $\psi(\vx,\vy)$ is a product of $E(z_i,z_j)$ and $P_\ell(z_i)$ with $z_i,z_j$ variables in $\vx$ or $\vy$. With such a conjunctive $\FOL$ expression, one can associate a multi-hypergraph in a canonical way \citep{Abo2016}. More precisely, given a conjunctive $\FOL$ expression $\varphi(\vx)$ we define $\mathcal H_\varphi$ as:
\begin{itemize}
    \item $\mathcal V_\varphi$ consist of all index variables in $\vx$ and $\vy$; 
    \item $\mathcal E_\varphi$: for each atomic base predicate $\tau$ in $\psi$ we have an edge $F_\tau$ containing the indices occurring in the predicate; and
    \item the vertices corresponding to the free index variables
$\vx$ form the distinguishing set of vertices.
\end{itemize}
We now define an \textit{elimination sequence for $\varphi$} as an elimination sequence for $\mathcal H_\varphi$ taking the distinguished vertices into account. 
The following observation ties elimination sequences of $\varphi$ to the number of variables needed to express $\varphi$.
\begin{proposition}\label{prop:elseq}
Let $\varphi(\vx)$ be a conjunctive $\FOL$ expression for which an elimination sequence of induced with $k-1$ exists. Then $\varphi(\vx)$ is equivalent to an expression $\tilde{\varphi}(\vx)$ in $\FOLk{k}$.
\end{proposition}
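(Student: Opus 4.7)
The approach is variable (bucket) elimination along the given order $\sigma=v_1,\ldots,v_n$, processing the bound variables in the reverse order $v_n,v_{n-1},\ldots,v_{f+1}$. First I would rewrite $\varphi(\vx)=\sum_{\vy}\psi(\vx,\vy)$ by pulling each summation $\sum_{v_j}$ inward so that it only scopes those factors (base predicates or intermediate sub-expressions produced by earlier steps) in which $v_j$ actually occurs. This is semantically sound in $\FOL$ because a factor not mentioning $v_j$ can be pulled out of $\sum_{v_j}\,(\,\cdot\,)$. The result is an expression whose nesting structure mirrors $\sigma$: the innermost subexpression is $\sum_{v_n}\prod_{\tau\in\partial(v_n)}\tau$, which I call an intermediate factor $\chi_{n-1}$ with free indices exactly $U_n\setminus\{v_n\}$; the next subexpression is $\sum_{v_{n-1}}\bigl(\chi_{n-1}\cdot\prod_{\tau\in\partial(v_{n-1})}\tau\bigr)$, producing $\chi_{n-2}$ with free indices $U_{n-1}\setminus\{v_{n-1}\}$, and so on, until only the distinguished free variables $v_1,\ldots,v_f$ remain.

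Next, I would prove by downward induction on $j$ that the expression obtained after absorbing $v_j$ can be written in $\FOL$ using only the $k$ names $\{x_1,\ldots,x_k\}$, with $x_1,\ldots,x_f$ reserved for the free variables $\vx$. The base case and the inductive step are both instances of the same observation: at the moment when $v_j$ is eliminated, the base predicates in $\partial(v_j)$ together with the already-constructed $\chi_j$ involve exactly the variables in $U_j$, and the induced-width bound gives $|U_j\setminus\{v_1,\ldots,v_f\}|\leq k-f$, so at most $k$ distinct names are simultaneously in use. One chooses a renaming $\pi_j : U_j \to \{x_1,\ldots,x_k\}$ that sends each free index of $\varphi$ appearing in $U_j$ to its fixed name in $\{x_1,\ldots,x_f\}$, applies $\pi_j$ (via $\alpha$-equivalent substitution of free indices) to both the relevant base predicates and the inductively constructed sub-expression for $\chi_j$, and then writes $\chi_{j-1} := \sum_{\pi_j(v_j)}\bigl(\pi_j(\chi_j)\cdot\prod_{\tau\in\partial(v_j)}\pi_j(\tau)\bigr)$. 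Crucially, the bound name $\pi_j(v_j)$ can be freely reused outside the scope of this summation, which is how old names get recycled into new buckets.

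The main obstacle will be the bookkeeping needed to make successive renamings \emph{consistent}: when $\chi_j$ is handed to the next step under a different renaming $\pi_{j-1}$, we must first apply the change of variable $\pi_{j-1}\circ\pi_j^{-1}$ to the free indices of $\chi_j$. This is routine semantically (free-index substitution preserves the value of a $\FOL$ expression), but one has to check that $\pi_{j-1}$ and $\pi_j$ can always be chosen to agree on $U_j\cap U_{j-1}\setminus\{v_j\}$, which is exactly the set of ``surviving'' indices; this is possible because both $|U_{j-1}|$ and $|U_j|-1$ are at most $k$, leaving enough slack in the pool $\{x_1,\ldots,x_k\}$ to match them on their overlap. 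The final expression, obtained after the $v_{f+1}$ step, has free variables exactly $\vx$ (already named $x_1,\ldots,x_f$ by the design of the renamings), uses only the $k$ names $\{x_1,\ldots,x_k\}$, and is provably equal to $\varphi(\vx)$ by a straightforward semantic induction that unfolds each bucket-elimination step to the distributivity identity $\sum_{y}(f\cdot g)=f\cdot\sum_y g$ when $y\notin\mathsf{free}(f)$.
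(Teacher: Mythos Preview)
Your proposal is correct and follows essentially the same variable-elimination idea as the paper. The paper packages the argument slightly differently: instead of carrying explicit intermediate factors $\chi_j$ and renaming maps $\pi_j$, it replaces the eliminated sub-expression $\sum_{y_p}\psi_2$ by a fresh predicate $R_p$ of arity $|U_p|-1$, applies the induction hypothesis to the reduced conjunctive expression (which now lives on $p-1$ undistinguished vertices and has hypergraph $\mathcal H_{p-1}^\sigma$), and only at the end substitutes $R_p$ back, using the fact that $R_p$ occupies at most $k-1$ names so one of $x_1,\ldots,x_k$ is free to serve as the bound variable. This placeholder-predicate trick hides the renaming bookkeeping you spell out with $\pi_j$; in exchange, your version is more explicit about why the recycling of names is consistent across steps. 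One minor imprecision in your sketch: there can be several intermediate $\chi$-factors alive simultaneously (when successive $U_j$'s are disjoint), so the ``next subexpression'' is not always $\sum_{v_{j}}(\chi_{j}\cdot\prod_{\tau\in\partial(v_{j})}\tau)$ with a single $\chi$; rather, $\partial(v_j)$ at stage $j$ may contain several previously produced $\chi$'s and original atoms. This is easy to fix and does not affect the argument.
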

\begin{proof}
We show this by induction on the number of vertices in $\mathcal H_{\varphi}$ which are not distinguished. For the base case, all vertices are distinguished and hence $\varphi(\vx)$ does not contain any summation and is an expression in $\FOLk{k}$ itself.

Suppose that in $\mathcal H_\varphi$ there are $p$ undistinguished vertices. That is, 
$$
\varphi(\vx)=\sum_{y_1}\cdots\sum_{y_p}\psi(\vx,\vy).
$$
By assumption, we have an elimination sequence of the undistinguished vertices. Assume that $y_p$ is first in this ordering. Let us write
\begin{align*}
\varphi(\vx)&=\sum_{y_1}\cdots \sum_{y_p}\psi(\vx,\vy)\\
&=\sum_{y_1}\cdots \sum_{y_{p-1}}\psi_1(\vx,\vy\setminus y_p)\cdot \sum_{y_p}\psi_2(\vx,\vy)
\end{align*}
where $\psi_1$ is the product of predicates corresponding to
the edges $F\in\mathcal E_{\varphi}\setminus\partial(y_p)$, that is, those not containing $y_p$, and $\psi_2$ is the product of all predicates corresponding to the edges $F\in\partial(y_p)$, that is, those containing the predicate $y_p$. Note that, because of the induced width of $k-1$,
$\sum_{y_p}\psi_2(\vx,\vy)$ contains all indices in $U_p$ which is of size $\leq k$. We now replace the previous expression with another expression
\begin{align*}
\varphi'(\vx)&=\sum_{y_1}\cdots \sum_{y_{p-1}}\psi_1(\vx,\vy\setminus y_p)\cdot R_p(\vx,\vy)
\end{align*}
Where $R_p$ is regarded as an $|U_p|-1$-ary predicate over the
indices in $U_p\setminus y_p$. It is now easily verified that
$\mathcal H_{\varphi'}$ is the hypergraph $\mathcal H_{p-1}$
corresponding to the variable ordering $\sigma$. We note that this is a hypergraph over $p-1$ undistinguished vertices. We can apply the induction hypothesis and replace
$\varphi'(\vx)$ with its equivalent expression
$\tilde{\varphi}'(\vx)$ in $\FOL_k$.
To obtain the expression $\tilde{\varphi}(\vx)$ of $\varphi(\vx)$, it now remains to replace the new predicate $R_p$ with its defining expression. We note again that $R_p$ contains at most $k-1$ indices, so it will occur in 
$\tilde{\varphi}'(\vx)$ in the form $R_p(\vx,\vz)$ where $|\vz|\leq k-1$. In other words, one of the variables in $\vz$ is not used, say $z_s$, and we can simply replace 
$R_p(\vx,\vz)$ by $\sum_{z_s}\psi_x(\vx,\vz,z_s)$. 
\end{proof}

As a consequence, one way of showing that a conjunctive expression $\varphi(\vx)$ in $\FOL$ is equivalently expressible in $\FOLk{k}$, is to find an elimination sequence of induced width $k-1$. This in turn is equivalent to $\mathcal H_\varphi$ having a \textit{treewidth} of $k-1$, as is shown, e.g., in \citet{Abo2016}. 
As usual, we define the treewidth of a conjunctive expression $\varphi(\vx)$  in $\FOL$ as the treewidth of its associated hypergraph $\mathcal H_{\varphi}$.

We recall the definition of treewidth (modified to our setting): A \textit{tree decomposition} $T=(V_T,E_T,\xi_T)$ of $\mathcal H_\varphi$ with $\xi_T:V_T\to 2^{\mathcal V}$ is such that
\begin{itemize}
    \item For any $F\in \mathcal E$, there is a $t\in V_T$ such that $F\subseteq \xi_T(t)$; and 
    \item For any $v\in \mathcal V$ corresponding to a non-distinguished index variable, the set 
    $\{t\mid t\in V_T, v\in\xi(t)\}$ is not empty and forms a connected sub-tree of $T$.
\end{itemize}
The \textit{width of a tree decomposition} $T$ is given by 
$\max_{t\in V_T}|\xi_T(t)|-1$. Now the treewidth of $\mathcal H_\varphi$, $\mathsf{tw}(\mathcal H)$ is the minimum width of any of its tree decompositions. We denote by $\mathsf{tw}(\varphi)$ the treewidth of $\mathcal H_\varphi$. Again, similar modifications are used when distinguished vertices are in place. Referring again to \citet{Abo2016}, $\mathsf{tw}(\varphi)=k-1$ is equivalent to having a variable elimination sequence for $\varphi$ of an induced width of $k-1$. Hence, combining this observation with Proposition~\ref{prop:elseq} results in:

\begin{corollary}\label{cor:twcq}
Let $\varphi(\vx)$ be a conjunctive $\FOL$ expression of treewidth $k-1$. Then $\varphi(\vx)$ is equivalent to an expression $\tilde{\varphi}(\vx)$ in $\FOLk{k}$.
\end{corollary}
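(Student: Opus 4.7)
The plan is that this corollary is essentially a one-line consequence of Proposition~\ref{prop:elseq} combined with a standard equivalence between treewidth and minimum induced width of variable elimination orderings. So my proof would be very short: first invoke the equivalence, then apply the previously-established proposition.

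More precisely, I would first recall the following classical fact (attributed in the paper to \citet{Abo2016}): for any multi-hypergraph $\mathcal{H}$ with distinguished vertex set $D$, the treewidth of $\mathcal{H}$ (in the sense defined just before the corollary, i.e., relative to $D$) equals the minimum induced width over all variable elimination sequences that list the vertices of $D$ first. Applying this equivalence to $\mathcal{H}_\varphi$ with $D$ the free index variables $\vx$, the assumption that $\mathsf{tw}(\varphi) = k-1$ yields an elimination sequence $\sigma$ for $\varphi$ of induced width $k-1$. I would then plug this $\sigma$ directly into Proposition~\ref{prop:elseq} to produce an equivalent expression $\tilde\varphi(\vx) \in \FOLk{k}$, as required.

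The one place that might require real work is verifying the treewidth/induced-width equivalence in the mildly nonstandard setting where we insist that distinguished vertices are eliminated first and count them up front in the induced width. The forward direction (from a tree decomposition of width $k-1$, extract an elimination ordering) uses the standard trick of choosing a leaf bag of the decomposition and eliminating a vertex that appears in no other bag; one only needs to make sure the distinguished vertices are placed together in (or pushed into) a single bag so that they come out first in the ordering, which can be done by a simple rerooting/augmentation of the decomposition. The reverse direction (from an elimination sequence, build a tree decomposition) uses the standard construction where bag $i$ is the set $U_i$, linked in the order of elimination. I would cite \citet{Abo2016} for the standard (non-distinguished) case and sketch only the small bookkeeping needed to accommodate the distinguished vertices.

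In summary, the proof plan has two steps: (a) invoke the treewidth $\leftrightarrow$ induced-width equivalence to convert the hypothesis $\mathsf{tw}(\varphi) = k-1$ into the existence of an elimination sequence of induced width $k-1$; (b) apply Proposition~\ref{prop:elseq} verbatim. No new inductive argument on the expression is needed, since the inductive work is already done in the proof of the proposition.
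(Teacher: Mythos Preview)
Your proposal is correct and follows essentially the same approach as the paper: the paper also derives the corollary in one line by citing \citet{Abo2016} for the equivalence between treewidth $k-1$ of $\mathcal{H}_\varphi$ and the existence of an elimination sequence of induced width $k-1$, and then invoking Proposition~\ref{prop:elseq}. Your additional remarks on handling the distinguished vertices go slightly beyond what the paper spells out, but the core argument is identical.
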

That is, we have established Proposition~\ref{prop:tw} for conjunctive $\FOL$ expressions. We next lift this to arbitrary $\FOL(\Omega)$ expressions.

\subsection{Arbitrary \texorpdfstring{$\FOL(\Omega)$}{TL(Ω)} expressions}

First, we observe that any expression in 
$\FOL$ can be written as a linear combination of conjunctive expressions. This readily follows from the linearity of the operations in $\FOL$ and that equality and inequality predicates can be eliminated. More specifically, we may assume that $\varphi(\vx)$ in $\FOL$ is of the form 
$$\sum_{\alpha\in A}a_\alpha \psi_\alpha(\vx,\vy),$$
 with $A$ finite set of indices and $a_\alpha\in \R$, and $\psi_\alpha(\vx,\vy)$
 conjunctive $\FOL$ expressions. We now define 
 $$
 \mathsf{tw}(\varphi):=\max\{\mathsf{tw}(\psi_\alpha)\mid \alpha\in A\}
 $$
for expressions in $\FOL$.
To deal with expressions in $\FOL(\Omega)$ that may contain function application, we define $\mathsf{tw}(\varphi)$ as the maximum treewidth of the expressions: (i)~$\varphi_{\mathsf{nofun}}(\vx)\in \FOL$ obtained by
replacing each top-level function application $f(\varphi_1,\ldots,\varphi_p)$ by a new predicate $R_f$ with free indices $\mathsf{free}(\varphi_1)\cup\cdots\cup\mathsf{free}(\varphi_p)$; and (ii)~all expressions 
$\varphi_1,\ldots,\varphi_p$ occurring in a top-level function application
$f(\varphi_1,\ldots,\varphi_p)$ in $\varphi$. We note that these expression either have no function applications (as in (i)) or
have function applications of lower nesting depth (in $\varphi$, as in $(ii)$). In other words, applying this definition recursively, we end up with expressions with no function applications, for which treewidth was already defined. With this notion of treewidth at hand, Proposition~\ref{prop:tw} now readily follows.

\section{Higher-order MPNNs}\label{sec:kMPNN}
We conclude the supplementary material by elaborating on $k$-$\MPNNs$ and by relating them to classical $\MPNNs$ \citep{GilmerSRVD17}. As underlying tensor language we use $\FOLk{k+1}(\Omega,\Theta)$ which includes arbitrary functions ($\Omega$) and aggregation functions ($\Theta$), as defined in Section~\ref{subsec:aggr}.

We recall from Section~\ref{sec:tensorlang} that $k$-$\MPNNs$ refer to the class of embeddings
$f:\mathcal G_s\to\R^\ell$ for some $\ell\in\Nb$ that can be represented in $\FOLk{k+1}(\Omega,\Theta)$. When considering an embedding $f:\mathcal G_s\to\R^\ell$, the notion of being represented is defined in terms of the existence of $\ell$ expressions in $\FOLk{k+1}(\Omega,\Theta)$, which together provide each of the $\ell$ components of the embedding in $\R^\ell$. We remark, however, that we can alternatively include concatenation in tensor language. As such, we can concatenate $\ell$ separate expressions into a single expression. As a positive side effect,  for $f:\mathcal G_s\to\R^\ell$ to be represented in tensor language, we can then simply define it by requiring the existence of a single expression, rather than $\ell$ separate ones. This results in a slightly more succinct way of reasoning about $k$-$\MPNNs$.

In order to reason about $k$-$\MPNNs$ as a class of embeddings, we can obtain an equivalent definition for the class of $k$-$\MPNNs$ by inductively stating how new embeddings are computed out of old embeddings. Let $X=\{x_1,\ldots,x_{k+1}\}$ be a set of $k+1$ distinct variables. In the following, $\vv$ denotes a tuple of vertices that have at least as many components as the highest index of variables used in expressions. Intuitively, variable $x_j$ refers to the $j$th component in $\vv$. We also denote the image of a graph $G$ and tuple $\vv$ by an expression $\varphi$, i.e., the semantics of $\varphi$ given $G$ and $\vv$, as $\varphi(G,\vv)$ rather than by $\sem{\varphi}{\vv}_G$. We further simply refer to embeddings rather than expressions.

We first define ``atomic'' $k$-$\MPNN$ embeddings which extract basic information from the graph $G$ and the given tuple $\vv$ of vertices.
\begin{itemize}
    \item \textbf{Label embeddings} of the form $\varphi(x_i):=\mathsf{P}_s(x_i)$, with $x_i\in X$, and defined by $\varphi(G,\vv):=(\mathsf{col}_G(v_i))_s$, are $k$-$\MPNNs$;
    \item \textbf{Edge embeddings} of the form
$\varphi(x_i,x_j):=\mathsf{E}(x_i,x_j)$, with $x_i,x_j\in X$, and defined by $$\varphi(G,\vv):=\begin{cases}
1 & \text{if $v_iv_j\in E_G$}\\
0 & \text{otherwise},
\end{cases}$$  are $k$-$\MPNNs$; and
\item \textbf{(Dis-)equality embeddings} of the form 
$\varphi(x_i,x_j):=\1_{x_i\mop x_j}$, with $x_i,x_j\in X$, and defined by 
$$
   \varphi(G,\vv):=\begin{cases}
1 & \text{if $v_i\mop v_j$}\\
0 & \text{otherwise},
\end{cases}$$  are $k$-$\MPNNs$.
\end{itemize}
We next inductively define new $k$-$\MPNNs$ from ``old'' $k$-$\MPNNs$. That is, given $k$-$\MPNNs$
$\varphi_1(\vx_1),\ldots,\varphi_\ell(\vx_\ell)$,
the following are also $k$-$\MPNNs$:
\begin{itemize}
    \item \textbf{Function applications} of the form 
$\varphi(\vx):=\mathbf{f}(\varphi_1(\vx_1),\ldots,\varphi_\ell(\vx_\ell)$ are $k$-$\MPNNs$, where $\vx=\vx_1\cup\cdots\cup\vx_\ell$, and defined by
    $$
    \varphi(G,\vv):= \mathbf{f}\left(\varphi_1(G,\vv|_{\vx_1}),\ldots, \varphi_\ell(G,\vv|_{\vx_\ell})\right).
    $$
Here, if $\varphi_i(G,\vv|_{\vx_i})\in \R^{d_i}$, then $\mathbf{f}:\R^{d_1}\times\cdots\times \R^{d_\ell}\to \R^d$ for some $d\in\Nb$. That is, $\varphi$ generates an embedding in $\R^d$. We remark that our function applications include concatenation.
    \item \textbf{Unconditional aggregations} of 
the form    $\varphi(\vx):=\mathsf{agg}_{x_j}^{\mathbf{F}}(\varphi_1(\vx,x_j))$ are $k$-$\MPNNs$, where $x_j\in X$ and $x_j\not\in \vx$, and defined by
    $$
    \varphi(G,\vv):=\mathbf{F}\bigl(\{\!\!\{ \varphi_1(G,v_1,\ldots,v_{j-1},w,v_{j+1},\ldots,v_k)\mid w\in V_G\}\!\}\bigr).
    $$
Here, if $\varphi_1$ generates an embedding in $\R^{d_1}$, then $\mathbf{F}$ is an aggregation function assigning to multisets of vectors in $\R^{d_1}$ a vector in $\R^d$, for some $d\in \Nb$.
So, $\varphi$ generates an embedding in $\R^d$.
    \item \textbf{Conditional aggregations} of the form   $\varphi(x_i):=\mathsf{agg}_{x_j}^{\mathbf{F}}(\varphi_1(x_i,x_j)|E(x_i,x_j))$ are $k$-$\MPNNs$, with $x_i,x_j\in X$, and defined by
    $$
    \varphi(G,\vv):=\mathbf{F}\bigl(\{\!\!\{ \varphi_1(G,v_i,w)\mid w\in N_G(v_i)\}\!\}\bigr).
    $$
    As before, if $\varphi_1$ generates an embedding in $\R^{d_1}$, then $\mathbf{F}$ is an aggregation function assigning to multisets of vectors in $\R^{d_1}$ a vector in $\R^d$, for some $d\in \Nb$. So again, $\varphi$ generates an embedding in $\R^d$.
\end{itemize}

As defined in the main paper, we also consider the subclass $k$-$\MPNNs^{\smallt}$ by only considering $k$-$\MPNNs$ defined in terms of expressions of aggregation depth at most $t$. Our main results, phrased in terms of $k$-$\MPNNs$ are:
$$\rho_1(\mathsf{vwl}_{k}^{\smallt})=\rho_1(k\text{-}\MPNNs^{\smallt}) \text{ and }
\rho_0(\mathsf{gwl}_{k})=\rho_0(k\text{-}\MPNNs).
$$
Hence, if the embeddings computed by $\GNNs$ are $k$-$\MPNNs$, one obtains an upper bound on the separation power in terms of $\WLk{k}$.

The classical $\MPNNs$ \citep{GilmerSRVD17} are subclass of $1$-$\MPNNs$ in which no unconditional aggregation can be used and furthermore, function applications require input embeddings with the same single variable ($x_1$ or $x_2$), and only $\1_{x_i=x_i}$ and $\1_{x_i\neq x_i}$ are allowed. In other words, they correspond to guarded tensor language expressions (Section~\ref{subsec:mainres}). We denote this class of $1$-$\MPNNs$ by $\MPNNs$ and by $\MPNNs^{\smallt}$ when restrictions on aggregation depth are in place. And indeed, the classical way of describing $\MPNNs$ as
\begin{align*}
\varphi^{\smallk{0}}(x_1)&=(P_1(x_1),\ldots,P_\ell(x_1)) \\
\varphi^{\smallk{t}}(x_1)&=\mathbf{f}^{\smallk{t}}\Bigl(\varphi^{\smallk{t-1}}(x_1),\mathsf{aggr}_{x_2}^{\mathbf{F}^{\smallk{t}}}\bigl(\varphi^{\smallk{t-1}}(x_1),\varphi^{\smallk{t-1}}(x_2)|E(x_i,x_j)\bigr)\Bigr)
\end{align*}
correspond to $1$-$\MPNNs$ that satisfy the above mentioned restrictions. Without readouts, $\MPNNs$
compute vertex embeddings and hence, our results imply
$$
\rho_1(\mathsf{cr}^{\smallt})=\rho_1(\MPNNs^{\smallt}).
$$
Furthermore, $\MPNNs$ with a readout function fall into the category of $1$-$\MPNNs$:
$$
\varphi:=\mathsf{aggr}_{x_1}^{\mathsf{readout}}(\varphi^{\smallk{t}}(x_1))
$$
where unconditional aggregation is used. Hence,
$$\rho_0(\mathsf{gcr}^{\smallt})=\rho_0(\mathsf{gwl}_1^{\smallt})=\rho_0(1\text{-}\MPNNs^{\smallk{t+1}}).
$$
We thus see that $k$-$\MPNNs$ gracefully extend $\MPNNs$ and can be used for obtaining upper bounds on the separation power of classes of $\GNNs$.
\end{document}